\documentclass{article}

\PassOptionsToPackage{numbers, compress, sort}{natbib}

 \usepackage[preprint]{neurips_2026}

\usepackage[utf8]{inputenc} 
\usepackage[T1]{fontenc}    
\usepackage{hyperref}       
\usepackage{url}            
\usepackage{booktabs}       
\usepackage{amsfonts}       
\usepackage{nicefrac}       
\usepackage{microtype}      
\usepackage{xcolor}         
\usepackage{graphicx}
\usepackage{booktabs}
\usepackage{caption}
\usepackage{multirow}
\usepackage{xcolor}
\usepackage[svgnames]{xcolor}
\usepackage{paralist}
\usepackage{subcaption}
\usepackage{comment}
\usepackage[british]{babel}
\usepackage{amsmath}
\usepackage{amsthm}
\usepackage{cleveref}
\usepackage{wrapfig}
\usepackage{enumitem}
\usepackage{thmtools}
\usepackage{thm-restate}
\usepackage{MnSymbol}
\usepackage{siunitx}
\usepackage[noblocks]{authblk}

\usepackage{algorithm}
\usepackage{algorithmic}
\usepackage{titletoc}
\DeclareMathOperator{\Div}{Div}
\DeclareMathOperator{\DivCurve}{DivCurve}
\DeclareMathOperator{\diam}{diam}

\declaretheorem[name=Definition] {definition}
\declaretheorem[name=Proposition]{prop}

\hypersetup{
    colorlinks = true,
    urlcolor   = RoyalBlue,
    linkcolor  = RoyalBlue,
    citecolor  = RoyalBlue
}

\title{Diversity Curves for Graph Representation Learning}

\renewcommand\Affilfont{\small\normalfont}

\makeatletter
\renewcommand\AB@affilsepx{\hspace{1.5cm} \protect\Affilfont}
\makeatother

\renewcommand{\thefootnote}{\fnsymbol{footnote}}

\author[1,2]{Katharina Limbeck}
\author[5]{Nadja Häusermann}
\author[5]{Martin Carrasco}
\author[$\dagger$,3,4]{Guy Wolf}
\author[$\dagger$,1,5]{Bastian Rieck}

\affil[1]{Helmholtz Munich}
\affil[2]{Technical University of Munich} 
\affil[3]{Université de Montréal}
\affil[4]{Mila - Quebec AI Institute}
\affil[5]{Université de Fribourg}

\begin{document}

\footnotetext[2]{These authors jointly supervised this work.}

\maketitle

\begin{abstract}

Graph-level representations are crucial tools for 
characterising structural differences between graphs.  
%
However, comparing graphs with different cardinalities, even when sampled from the same underlying distribution, remains challenging.
Unsupervised tasks in particular require  interpretable, scalable, and reliable size--aware graph representations.
Our work addresses these issues by tracking the structural diversity of a graph across coarsening levels.
%
The resulting graph embeddings, which we denote \emph{diversity curves}, are interpretable by construction, 
efficient, and 
directly comparable across coarsening hierarchies. 
 Specifically, we track the \emph{spread} of graphs, a novel isometry invariant that is inherently well-suited for encoding the metric diversity and geometry of graphs. 
We utilise edge contraction coarsening and prove that 
this improves expressivity, thus leading to more powerful graph-level 
representations than structural descriptors alone.
Demonstrating their utility over a range of baseline methods in practice, we use \emph{diversity curves} to 
\begin{inparaenum}[(i)]
  \item cluster and visualise simulated graphs across varying sizes,
  \item distinguish the geometry of single-cell graphs, 
  \item compare the structure of molecular graph datasets, and
  \item characterise geometric shapes. 
\end{inparaenum}
\end{abstract}

\setcounter{footnote}{0}
\renewcommand{\thefootnote}{\arabic{footnote}}

\section{Introduction}

Understanding structural differences between graphs is a key step in exploring, describing, and distinguishing graph datasets or distributions. 
In particular when comparing graphs drawn from the same distribution, suitable graph representations should adapt to differences in scale and capture coarse structural similarities between graphs without relying on supervision.
However, many existing methods for graph comparison and graph-level representation learning are inherently sensitive to differences in graph sizes \citep{tantardini2019comparing, yehudai2021local}. 
This can negatively impact interpretation and downstream performance.
We address this issue by investigating to what extent \emph{graph coarsening} techniques can be used to perform structural comparisons between graphs. 
To this end, we propose \emph{size-aware methods} that are aware of both the cardinality of graphs, as well as their intrinsic size, i.e.\ their structural diversity across coarsening scales.
These size-aware representations aim to address geometric tasks, where graphs have been sampled from the same underlying distribution and share the same inherent structure but vary in size. 
%
Our work makes the following \textbf{contributions}: 
\begin{description}[left = 0.5em]
    \item[Novel graph representations:] We propose \emph{diversity curves} for comparing graphs. Our approach is the first to use coarsening to derive hierarchical graph descriptors and leverages the expressive power of structural notions of diversity 
    to track how graphs evolve across the coarsening hierarchy. 
    
    \item[Expressivity guarantees:] We prove that edge contraction coarsening improves the expressivity of graph structural comparisons via graph invariants. 

    \item[Excellent performance:] We show that diversity curves excel at geometric tasks, outperforming unsupervised baseline methods across a range of applications. Our methods perform well at clustering simulated graph distributions, distinguishing between trajectory- and cluster-like single-cell graphs, and characterising geometric graphs across varying sizes. 
\end{description}

\section{Related work}


\emph{Graph kernel methods} are frequently used as baselines for graph-level tasks. However, kernels require computing pairwise similarity matrices, thus limiting their overall scalability.
While more efficient kernels exist~\citep{borgwardt2020graph, nikolentzos2021graph}, they often compromise expressivity.
For example, path- or walk-based kernels are known to suffer from size-sensitivity~\citep{han2025multi}, making their embeddings less suitable in unsupervised settings.
%
We observe similar issues with size-sensitivity in \emph{unsupervised graph embeddings}.
For instance, graph2vec compares subgraph collections~\citep{narayanan2017graph2vec}, which is more nuanced than directly comparing entire graphs, but can still be dominated by the larger vocabulary of larger graphs.
Spectral embeddings, such as NetLSD~\citep{tsitsulin2018netlsd} or SpectralZoo~\citep{jin2020spectral}, aim to characterise each graph via summarising the spectrum of eigenvalues of its normalised Laplacian.
In practice, such methods still struggle with clustering graphs generated from the \emph{same} model across varying densities and sizes~\citep{tantardini2019comparing}.
Other approaches like FEATHER~\citep{rozemberczki2020characteristic} leverage random walks, which are pooled to obtain graph-level embeddings, thus also implicitly incorporating size.

\emph{Graph Neural Networks~(GNNs)}, typically used in supervised settings, can be adapted via contrastive learning to learn graph-level representations for unsupervised tasks \citep{you2020, Yin2022}. However, their reliance on graph augmentations and comparisons between positive and negative pairs limits their training efficiency and their training objective is sensitive to varying graph sizes.
%
Further, these models do not provide guarantees on which structures are encoded, thus hindering interpretability, particularly in the regime of small sample sizes.
%
Nevertheless, we will investigate the ability of GNNs to characterise geometric graphs across varying scales and upsampling operations \citep{ballester2024mantra}, which 
is challenging for models that fail to capture 
hierarchical similarities between graphs.
%
Despite GNNs not specifically focusing on comparing graphs across different cardinalities, we may use their \emph{pooling operators} to reduce graph size while preserving key properties such as connectivity, community structures, or spectral properties \citep{grattarola2022understanding}.
We will particularly focus on \emph{edge contraction pooling} since it provably preserves graph structure~\citep{limbeck2025geometryaware}, can be used without model training, and provides interpretable pooling operations that can be applied iteratively.
Moreover, pooling is known to improve expressivity~\citep{lachi2025expressive} and many graphs can even be reconstructed based on their set of possible edge contractions~\citep{poirier2018reconstructing}.

To obtain interpretable representations, prior work often employs geometry- or topology-based invariants. Methods based on \emph{persistent homology} (PH), for example, have been shown to be expressive descriptors that enable assessing a graph at multiple scales~\citep{ballester2025expressivity, southern2023curvature}, requiring the use of \emph{graph filtrations}, i.e.\ nested sequences of subgraphs derived from a specific graph descriptor. This approach was formalised by TopER~\citep{tola2025toper}. However, several filtrations remain expensive to compute while being size-sensitive.
We thus see a crucial opportunity to propose size-robust graph embeddings that encode coarse similarities between graphs sampled from the same distribution, for instance.

In this context, the \emph{magnitude} and the \emph{spread} of a graph constitute two promising graph invariants that summarise structural diversity. 
%
Based on a metric between nodes, both magnitude and spread measure diversity as the effective size of a graph, i.e.\ the effective number of distinct nodes or distinct communities.
Originally meant to provide a mathematical foundation for measuring diversity in a metric space setting~\citep{leinster2021entropy, willerton2015spread}, these measures have 
recently seen use in machine learning applications, for instance in evaluating the diversity of latent representations across varying distance scales~\citep{limbeck2024metric}.
\citet{limbeck2025geometryaware} further used these invariants to compute geometry-aware edge importance scores for guiding edge contraction pooling towards preserving structural diversity.
They concluded that both magnitude and spread are highly-correlated and capable of preserving relevant graph properties.
With \emph{spread} being computationally more efficient than \emph{magnitude}, we opted for using this invariant in our work.
Previous works using magnitude or spread for graph learning \citep{limbeck2024metric, limbeck2025geometryaware} assume that graph embeddings learned by GNNs are meaningful, e.g.\ for capturing structural similarities between differently-sized graphs. However, this inspires open debate on which embedding model to choose to best reflect similarities between graphs \citep{shirzad2022evaluating}.
In this paper, we rather investigate how to leverage the notion of structural diversity on graphs to learn meaningful graph embeddings.
Instead of varying the scale of distance~\citep{limbeck2024metric} or specifically optimising for the preservation of structural diversity~\citep{limbeck2025geometryaware}, we explicitly track graph structural diversity across different levels of coarsening.

\section{Methods}
We introduce diversity curves 
in \Cref{sec:methods} and detail their theoretical properties in \Cref{sec:theory}.

\subsection{Diversity curves as graph representations}
\label{sec:methods}
\begin{figure}[t]
\includegraphics[width=1\textwidth]{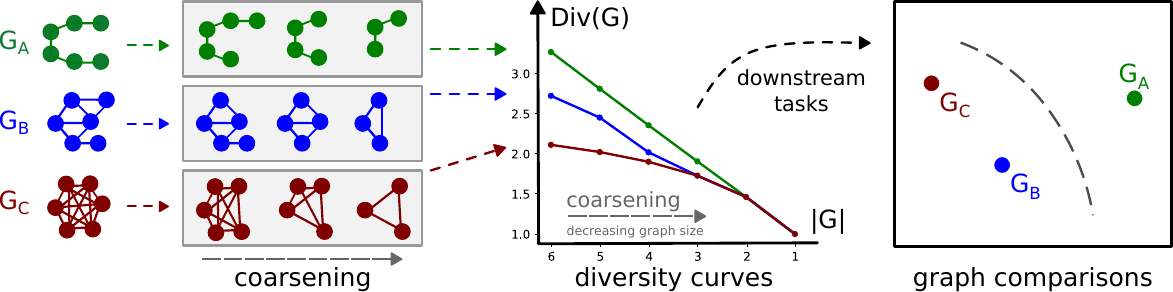}
\caption{Diversity curves for three example graphs. 
Our framework  first \emph{coarsens} the initial graphs via iterative edge contractions 
and tracks 
 structural invariants across the coarsening process.  Specifically, we compute structural diversity based on a choice of metric on the graph, e.g.\ shortest-path distances. 
 \emph{Diversity curves} then represent the structural diversity of a graph as a function of the number of nodes in the coarsened graph. 
 The resulting vector representations compare the coarsened geometry of graphs for unsupervised visualisation, clustering, classification, 
 or explanatory tasks.}
\label{fig:overview}
\end{figure}

We propose a framework to compare graph invariants, in particular the structural diversity of graphs, across a  coarsening process. The resulting diversity curves yield a vectorised representation for each graph, which encodes the hierarchical geometry of graphs and can be used for further downstream analysis. Diversity curves are constructed as illustrated in \Cref{fig:overview} via the following steps:

\paragraph{Graph coarsening as a multiscale process.} We first select a set of node numbers, $\mathcal{I} \subset \mathbb{N}$, across which we want to compare the coarsened geometry of graphs. Then, we coarsen or upsample graphs to these same sizes. As a coarsening operation, we choose edge contraction in particular, because edge contraction pooling was shown to improve expressivity \citep{lachi2025expressive}, can be applied iteratively giving us direct control over the size of the graphs, is an interpretable operation, and does well at preserving the structure of graphs \citep{limbeck2025geometryaware}. We closely follow the implementation by \citet{limbeck2025geometryaware}. That is, we compute an importance score for every edge that is derived from the edge's importance for the graph's structure, or selected randomly. Then, the edges with the lowest scores get contracted iteratively, so that adjacent nodes are not merged more than once. We reapply this procedure to coarsen graphs to the desired size. If needed, the node features of any merged nodes are aggregated, for instance by averaging. 
%
Formally, we consider a graph $G=(X,E)$ to consist of a set of vertices $X$ and a set of edges $E$. 
We use the notation $|G| = |X| =n$ for the cardinality of the graph 
and $m=|E|$ for the number of edges. 
Furthermore, we define a sequence $\{e_i\}_{1\leq i \leq k}$ of edges in $G$ that are to be contracted, for some $k \in \{1,\dots,m\}$, and a sequence of coarsened graphs $\{G_i\}_{n-k\leq i \leq n}$ based on contracting these edges, so that $G_{n}=G$ and $G_{i-1}=G_i/{e_{n-i+1}}$. 
In case a dataset contains small graphs, we further employ upsampling strategies, for example by uniformly upsampling a node and reconnecting it to all neighbours of the initial node. This strategy allows us to 
extend the sequence of coarsened graphs to $\{G_i\}_{n-k\leq i \leq N}$ for some $N>n$. 

\paragraph{Structural diversity for comparing graphs.} Structural diversity is inherently well-suited to compare graphs at the same cardinality. By measuring diversity as the \emph{effective number} of distinct nodes in a graph, the \emph{spread} of a graph is an expressive invariant that encodes the graph's geometry based on a choice of distance between nodes. 
Throughout this work, we consider a graph \(G=(X, E)\) 
to be associated with a metric $d$, so that \((X, d)\) defines a finite metric space. 
Given 
such a graph, 
its structural diversity, as measured via the \emph{spread} of its metric space~\citep{willerton2015spread, limbeck2025geometryaware}, 
is defined as  \begin{equation}
\Div(G) := \sum_{x \in X} \frac{1}{\sum_{y \in X} \exp(-d(x,y))}.
\end{equation}
Note that \(\Div(G)\in[1,|X|]\) and that structural diversity can be directly interpreted as the effective number of nodes in the graph. 
Through coarsening, we aim to exploit the inherent link between diversity and this notion of effective size. Spread is more efficient to compute than alternative diversity measures like the magnitude of a metric space~\citep{leinster2013magnitude, willerton2015spread}, while allowing for a flexible choice of geometry determined by the metric on the graph. Further, spread can be directly compared across graphs as it is invariant to isometries and permutations between nodes \citep{limbeck2025geometryaware}. 
By default, we use shortest-path distances for diversity computations as we find that they perform well across our experiments and are faster to estimate than alternative graph metrics. 
\Cref{app:ablation} gives an ablation on the choice of metric on distinguishing graph distributions, validating this choice of graph metric. Coarsening allows these invariants to be robust to varying node sizes. In fact, coarsening counteracts one of the notable weaknesses of many geometric tools, diversity measures, or path-based summaries, which are known to struggle with directly comparing data of varying sizes \citep{limbeck2024metric}. 
Our framework thus still allows us to use shortest-path distances to make comparisons across  coarsened graphs. 
Extending our framework, other choices of graph invariants or summary statistics are possible, but we find spread is a useful and flexible default choice for the reasons described above.

\paragraph{Diversity curves as size-aware graph representations.} Our final graph representations, which we call \emph{diversity curves}, track the structural diversity of each graph as a function of the number of nodes in the coarsened graph. 
Given a graph \(G\) associated with a metric \(d\), a sequence of integers \(\mathcal{I}\) representing cardinalities, and a choice of down- or upsampling strategy to construct a sequence of coarsened graphs, \(\{G_i\}_{i \in \mathcal{I}}=\mathcal{G}_\mathcal{I}\) such that $|G_i| = i$, we define its \textit{diversity curve}, \(\DivCurve(G)=\DivCurve(\mathcal{G}_\mathcal{I})\in \mathbb{R}^{|\mathcal{I}|}\), as 
\begin{equation}
\DivCurve(G)_i := \Div(G_i). 
\end{equation}
This yields a vector representation for each graph, which describes the graph's coarsened geometry across resolutions. Intuitively, graphs with few localised differences will behave similarly when contracted and have more similar diversity curves than graphs with globally different structures. Diversity curves capture exactly these hierarchical trends. In fact, the value of the diversity curve at each scale directly summarises how structurally diverse a given graph is, leading to a clear interpretation as the effective number of distinct nodes in the coarsened graph. This gives a multiscale and size-aware summary of graph structure and diversity that is robust to differences in graph size. 

\paragraph{Applications to graph-level tasks.} 
Diversity curves can be directly treated as functions or vectorised summaries, which can be stored efficiently, and allow for flexible comparisons between graphs. For example, we can compute pairwise distances between curves via the 
$L^{p}$-norm between them. This offers an alternative to kernel methods as evaluated in \Cref{sec:sim}. We can make formal distributional comparisons and use diversity curves 
for non-parametric statistical testing, e.g.\ for the equality in mean diversity curves across two groups of graphs as done in \Cref{sec:enzymes}. 
Furthermore, we 
can directly work with the embedding vectors as inputs for machine learning models. This allows our methods to scale to notably larger datasets compared to kernels methods, which become impractical for large dataset sizes as explored in \Cref{sec:mantra}. Note that because our notion of diversity is determined by a choice of metric, varying this permits us to  explore alternative geometries on graphs.  
We will exploit 
this flexibility to get a joint representation 
by investigating and concatenating diversity curves computed from task-specific distances in \Cref{sec:single} and \Cref{sec:mantra}. 
Further, we use diversity curves to learn and visualise low-dimensional representations of graph datasets. 

\paragraph{Default parameter choices.} As a default, we set the coarsening scales \(\mathcal{I}\) at which we compute diversity curves to 
all integers between one and the maximum number of nodes in the dataset, but other choices are possible. For example, choosing fewer and lower evaluation scales can ensure efficient computations. By default we compute diversity using shortest-path distances and set $\DivCurve(G)_1=1$ as a graph with one node effectively has one distinct point \citep{roff2025small, willerton2015spread}. Further, 
we cannot use edge contraction pooling to reduce the size of a graph to less than the number of connected components. In these cases, we use linear interpolation of the diversity curves to estimate the value of diversity at lower sizes. We use random edge contractions throughout our experiments as we find the choice of edge scoring has no notable impact on performance~(cf.\ \Cref{app:ablation}). Further, because up- and downsampling procedures introduce some randomness, we do in practice average diversity curves across multiple repeats. 
This is motivated by the fact that while we do not immediately know which coarsening sequence best describes a given graph, we aim to capture key structural characteristics through repeated coarsening. In fact, we will prove how this pooling strategy improves expressivity \citep{lachi2025expressive}.
We publish code for our methods on \href{https://github.com/aidos-lab/diversity_curves/}{GitHub} and detail our algorithm in \Cref{app:methods edge pooling}.




\subsection{Properties of diversity curves}
\label{sec:theory}

In the following, we describe the theoretical properties that ensure diversity curves are well-behaved and expressive graph representations. Complete proofs and further results 
are provided in \Cref{app:theoretical results}. 
For some of the theoretical treatment, we consider the diversity curve of a graph $G$ not just 
with respect to \emph{one} sequence of coarsened graphs, 
but 
\emph{all} possible coarsening sequences. Concretely, we 
consider all possible sequences of edges $s_e = \{e_i\}_{i\leq k}$ in $G$ of length at most $k\leq m$ and the sequence of coarsened graphs $\mathcal{G}_i(s_e)$ obtained from 
contracting the edges in $G$ according to the sequence $s_e$. Then we define the \emph{multiset} $\{\!\{\DivCurve(\mathcal{G}_i(s_e)) \mid s_e \text{ is a valid edge contraction sequence of length at most } k\}\!\}$.

\paragraph{Diversity curves as graph invariants.} 
Our measure of structural diversity is invariant under permutations of vertices and under isometries \citep{limbeck2025geometryaware, willerton2015spread}. For two isomorphic graphs $G=(X,E)$ and $H=(Y,F)$, with edges $e \in E$ and $f \in F$ that are identified under the isomorphism, the graphs $G/e$ and $H/f$ are also isomorphic and diversity curves are invariant in the following sense. 
\begin{restatable}{thm}{ThmInvariance}
\label{thm:collapse}
    For two isomorphic graphs, the multisets of the diversity curves over all possible edge contractions are equal.
\end{restatable}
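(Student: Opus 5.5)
The plan is to build an explicit bijection between the coarsening sequences of the two graphs and show that it preserves diversity curves term by term. Let $\varphi\colon X\to Y$ be a graph isomorphism between $G=(X,E)$ and $H=(Y,F)$. It induces a bijection $\varphi_E\colon E\to F$, $\{x,x'\}\mapsto\{\varphi(x),\varphi(x')\}$. We lift this to sequences: a contraction sequence $s_e=(e_1,\dots,e_k)$ in $G$ is sent to $\varphi(s_e)=(f_1,\dots,f_k)$ in $H$. At each step, $e_i$ is read as an edge of the current quotient graph, and $f_i$ is its image under the isomorphism that has been propagated up to that point.

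The key lemma is the one-step statement already noted before the theorem. If $\psi\colon G'\to H'$ is an isomorphism and $e=\{x,x'\}$ is an edge of $G'$, then $\psi$ descends to an isomorphism $\bar\psi\colon G'/e\to H'/\psi(e)$. To prove it, send the merged vertex $[x,x']$ to the merged vertex $[\psi(x),\psi(x')]$ and leave every other vertex as under $\psi$. Adjacency is preserved because a vertex is adjacent to the merged vertex exactly when it is adjacent to $x$ or $x'$, and $\psi$ preserves both of these conditions. This remains valid under the paper's convention that adjacent nodes are not merged twice in one round, because that constraint is itself phrased purely in terms of adjacency and is therefore isomorphism-invariant.

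By induction on $i$ we obtain isomorphisms $\psi_i\colon\mathcal G_i(s_e)\to\mathcal G_i(\varphi(s_e))$ at every level. The same induction shows that $s_e$ is a valid sequence if and only if $\varphi(s_e)$ is, since validity at each step depends only on the edge existing in the current graph.

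Next we show that the diversity values agree. Every graph metric used here is defined intrinsically from the graph structure: shortest-path distance, and likewise the other graph metrics. Hence a graph isomorphism is an isometry of the associated metric spaces, so $d_H(\psi_i(x),\psi_i(y))=d_G(x,y)$. Reindexing the double sum in $\Div$ by the bijection $\psi_i$ gives $\Div(\mathcal G_i(s_e))=\Div(\mathcal G_i(\varphi(s_e)))$ for each $i$. Therefore $\DivCurve(\mathcal G_\cdot(s_e))=\DivCurve(\mathcal G_\cdot(\varphi(s_e)))$ entrywise. Any averaged node features or upsampling conventions are handled identically on both sides.

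Finally, the map $s_e\mapsto\varphi(s_e)$ is a bijection between the valid sequences of length at most $k$, because its inverse is the map induced by $\varphi^{-1}$. A bijection that preserves the value attached to each element preserves multiplicities, so the two multisets coincide.

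The main obstacle I expect is bookkeeping rather than substance. Edges of later quotient graphs are not literally edges of $G$, so the correspondence $s_e\mapsto\varphi(s_e)$ must be defined through the propagated isomorphisms $\psi_i$ and not through $\varphi$ alone. Care is also needed to check that distinct sequences map to distinct sequences, so that multiplicities are preserved. I would handle this by defining both the sequences and their images recursively and proving injectivity by the same induction.
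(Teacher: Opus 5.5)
Your proposal is correct and follows the paper's argument. An isomorphism descends to an isomorphism between the graphs obtained by contracting corresponding edges, and spread is invariant under isomorphism (isometry), so the multisets of diversity curves coincide. Your recursive bijection on contraction sequences via propagated isomorphisms, together with the injectivity and multiplicity check, makes rigorous the bookkeeping that the paper's short proof leaves implicit. One caveat: your isometry claim holds for structural metrics such as shortest-path distance, whereas feature-based distances would also require the isomorphism to preserve node features.
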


\paragraph{Diversity curves collapse to the same diversity.} 
We investigate the limiting behaviour of diversity curves at low coarsening scales by considering the properties of structural diversity, and by linearly interpolating the diversity curves at evaluation scales below the 
number of connected components of a graph. 
The result below shows that when coarsening graphs up to cardinality one, they eventually all have the same diversity. 
As such, diversity curves can distinguish the number of connected components of a graph and behave reasonably under increased coarsening.
\begin{restatable}{thm}{ThmCollapse}
    Any two graphs $G$ and $H$ with the same number $c$ of connected components will collapse to the same graph of cardinality $c$ and their diversity curves agree for $i \in \{1,\dots c\}$, concretely $\DivCurve(G)_i = \DivCurve(H)_i = i$.  Furthermore, $\DivCurve(H)_{c+1} = \DivCurve(G)_{c+1} \neq c+1$.   
\end{restatable}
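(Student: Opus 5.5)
The plan is to reduce everything to an explicit description of the coarsened graphs at cardinalities $c$ and $c+1$, and then compute $\Div$ directly from its definition using shortest-path distances. I adopt the convention that nodes in different connected components are at distance $d(x,y)=\infty$, so that $\exp(-d(x,y))=0$. I also assume, as in the implementation of \citet{limbeck2025geometryaware}, that edge contraction yields a simple graph, with loops and multi-edges removed.

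\textbf{Step 1: contraction preserves components.} Contracting an edge $e$ merges two adjacent nodes of the same component. It therefore never merges two components and never disconnects one. By induction along any sequence $s_e$, every coarsened graph $G_i$ still has exactly $c$ connected components, and so $i\ge c$. Contracting all edges inside a component reduces it to a single vertex. Hence every maximal contraction sequence of $G$ ends at the edgeless graph $\overline{K_c}$ on $c$ vertices, and the same holds for $H$. This shows that $G$ and $H$ collapse to the same graph of cardinality $c$.

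\textbf{Step 2: values at scales $i\le c$.} For $\overline{K_c}$, each row sum $\sum_y \exp(-d(x,y))$ equals $1$, since only the term $y=x$ survives. Hence $\Div(\overline{K_c}) = c$. The paper's default sets $\DivCurve(\cdot)_1=1$, and values for $1<i<c$ are defined by linear interpolation between the points $(1,1)$ and $(c,c)$. This gives $\DivCurve(G)_i=\DivCurve(H)_i=i$ for all $i\in\{1,\dots,c\}$, independently of the contraction sequence. The statement therefore also holds for the multisets of curves in the sense of \Cref{thm:collapse}.

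\textbf{Step 3: the value at $c+1$.} A coarsened graph with $c+1$ vertices and $c$ components must have $c-1$ singleton components and one component with exactly two vertices. Since that component is connected and simple, it is $K_2$. Hence, for any contraction sequence, $G_{c+1}\cong \overline{K_{c-1}}\sqcup K_2\cong H_{c+1}$. Its diversity is
\[
\Div(G_{c+1}) = (c-1) + \frac{2}{1+e^{-1}} .
\]
This equals $\DivCurve(H)_{c+1}$, and it is strictly less than $c+1$ because $2/(1+e^{-1})<2$.

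\textbf{Main obstacle.} The delicate point is Step 3, together with the well-definedness of scale $c+1$. It needs the simplicity convention, since otherwise multi-edges or loops could arise, though these would not alter shortest-path distances anyway. It also needs the cross-component convention $\exp(-\infty)=0$, which I would state explicitly. Finally, it needs that $G$ actually has at least $c+1$ nodes. If $|G|=c$, the value at $c+1$ comes from upsampling. Duplicating a node and connecting the copy to all neighbours of the original node yields an isolated node when the original is isolated, and then $\Div = c+1$. I would therefore state the theorem under the assumption $|G|,|H|\ge c+1$, so that scale $c+1$ is reached by edge contraction.
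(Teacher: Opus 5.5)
Your Steps 1--3 follow the paper's own proof. Edge contraction preserves the number of components, so every maximal contraction sequence ends at the edgeless graph on $c$ vertices, whose spread is $c$. The values for $i<c$ come from the linear interpolation. A graph with $c+1$ vertices and $c$ components has exactly one edge. Your constant $(c-1)+\frac{2}{1+e^{-1}}$ is correct. The paper prints $\frac{2}{1+\exp(1)}$, a sign slip in the exponent, which does not affect the conclusion $\neq c+1$.

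The one real error is your closing caveat. You want to add the hypothesis $|G|,|H|\ge c+1$, because you believe an edgeless $G$ with $|G|=c$ is upsampled to a graph with an extra isolated vertex, giving $\Div = c+1$. That misreads the paper's upsampling rule. In the appendix, the new node $u$ is joined to every vertex of $\mathcal{N}_G(v_i)\cup\{v_i\}$, so it is also joined to the sampled vertex $v_i$ itself. Duplicating an isolated vertex therefore produces a $K_2$, not a new singleton.

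More generally, $u$ is always adjacent to $v_i$ and to vertices of $v_i$'s component, so upsampling never changes the number of connected components. Whatever graph sits at scale $c+1$, whether reached by contraction or by upsampling, has $c+1$ vertices and $c$ components. Your Step 3 then applies word for word and gives $\overline{K_{c-1}}\sqcup K_2$. Replace the extra hypothesis with this one-line observation. As written, your proposal proves a weaker statement than the theorem and claims a counterexample that does not exist under the paper's definitions.
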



\paragraph{Diversity curves increase expressivity.} 
Expressivity, that is the ability to distinguish between graphs, is a key property of useful graph representations \citep{wang2023empirical}. 
To motivate the use of diversity as a graph invariant together with the coarsening process, we show that expressivity is improved when combining them. We follow \citet{lachi2025expressive} and  show that diversity curves improve upon the expressivity of diversity as measured by the spread of a graph through edge contraction pooling. 
\begin{restatable}{definition}{DefExpressivity}
    \label{def:expressivity}
    For two functions $\varphi$ and $\psi$ on the set of graphs, we say that $\varphi$ is \emph{more expressive} than $\psi$ if for any graphs $G_A$, $G_B$ such that $\psi(G_A) \neq \psi(G_B)$, then also $\varphi(G_A) \neq \varphi(G_B)$, and there exist graphs $G_C$, $G_D$ such that $\varphi(G_C) \neq \varphi(G_D)$ and $\psi(G_C) = \psi(G_D)$.
\end{restatable}
\begin{restatable}{thm}{ThmCoarseningExpressivity}
\label{thm:coarsening_expressivity}
    The multiset of the diversity curves over all possible edge contractions is more expressive than spread.
\end{restatable}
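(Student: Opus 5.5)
We verify the two conditions of Definition~\ref{def:expressivity} with $\varphi$ the multiset of diversity curves over all edge-contraction sequences and $\psi = \Div$.

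\textbf{Step 1: $\varphi$ separates whatever $\Div$ separates.} Every diversity curve contains the uncoarsened graph as its entry at $i = n = |G|$, since $G_n = G$. The empty contraction sequence, or equivalently the top entry of any curve, therefore records $\Div(G)$. Suppose $\Div(G_A) \neq \Div(G_B)$. If $|G_A| \neq |G_B|$, the multisets differ already, because the curves are indexed by cardinality and their maximal indices differ. If $|G_A| = |G_B| = n$, every curve of $G_A$ has value $\Div(G_A)$ at index $n$ and every curve of $G_B$ has $\Div(G_B)$ there. So no curve of $G_A$ equals a curve of $G_B$, and the multisets differ. To keep this clean, I will fix the convention, consistent with Theorem~\ref{thm:collapse}, that each curve is the vector $(\Div(G_i))_{n-k \le i \le n}$ including the top entry $i = n$.

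\textbf{Step 2: a pair with equal spread but different curve multisets.} The plan is to exhibit explicit graphs $G_C, G_D$ with $\Div(G_C) = \Div(G_D)$ that are nonetheless distinguished after contraction. Spread under shortest-path distance depends only on the multiset of distance profiles $\{\!\{ (d(x,y))_{y} \}\!\}_x$. Hence any two graphs whose nodes have the same multiset of distance-count vectors have equal spread. The first candidates are co-degree-distance-regular pairs: two non-isomorphic graphs in which every vertex has the same number of vertices at each distance. Examples are two non-isomorphic strongly regular or distance-regular graphs with the same parameters (e.g.\ the Shrikhande graph and the $4\times4$ rook's graph), or simpler, two non-isomorphic $k$-regular graphs of diameter $2$ on the same number of nodes. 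For diameter $2$, each vertex has $k$ vertices at distance $1$ and $n-1-k$ at distance $2$, so the spread is identical.

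Simpler still, and to be checked first, is a pair of small graphs with matching distance multisets, such as $C_6$ versus two disjoint triangles. Each vertex of $C_6$ has profile $(0,1,1,2,2,3)$, while each vertex in the triangles has profile $(0,1,1,\infty,\infty,\infty)$, so these profiles do not match and the pair fails. I therefore prefer connected regular diameter-$2$ pairs: the Shrikhande graph versus the rook's graph $K_4 \square K_4$. Both are $6$-regular on $16$ vertices with diameter $2$, hence have equal spread.

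\textbf{Step 3: distinguishing after one contraction, the main obstacle.} After contracting one edge $uv$, the merged vertex has distance $1$ to $N(u)\cup N(v)$, whose size is $2k - |N(u)\cap N(v)|$. Other distances can also shrink. For vertices $x,y$ with $d(x,y) = 2$, the distance becomes $1$ exactly when both are adjacent to the merged node, and otherwise stays $2$.

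In the rook's graph, adjacent vertices have $\lambda = 2$ common neighbours, which is also the Shrikhande value, so $\lambda$ alone cannot separate the two graphs. The local structure does differ: in $K_4 \square K_4$ the neighbourhood of a vertex is two disjoint triangles, whereas in the Shrikhande graph it is a $6$-cycle. I will compute $\Div(G/e)$ explicitly via the distance profiles of $G/e$ in each graph. The count of vertex pairs whose distance drops depends on edges between $N(u)$ and $N(v)$, which is where the triangle versus hexagon structure enters.

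If the one-step spreads coincide, I will go to two contractions, using a sequence along a triangle. Contracting two edges of a triangle is possible in the rook's graph and in the Shrikhande graph, since both have triangles, but the resulting neighbourhoods differ. Alternatively, I will contract two disjoint edges whose relative position (same row versus not) exists only in one graph.

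The verification of unequal spread values is the hard and computational part. It is finite and checkable, so I would confirm it numerically and then report the resulting distance profiles. As a fallback I would find a smaller pair of equal-spread graphs by exhaustive search over graphs with at most $6$ vertices, preferably a pair whose one-contraction multisets differ.
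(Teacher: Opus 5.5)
Your Step 1 is the same as the paper's, and is slightly more careful about graphs of different sizes. The gap is Step 3, which is the entire nontrivial content of the theorem: you never establish, for any pair, that the curve multisets differ. You only say you ``would confirm it numerically''.

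Moreover, the one-contraction plan provably fails for the pair you chose, and the mechanism you describe is incorrect. Contracting $uv$ to $w$ never makes two other vertices adjacent. For $x,y\notin\{u,v\}$ the new distance is $\min\bigl(d(x,y),\,d_S(x)+d_S(y)\bigr)$ with $d_S(x)=\min(d(x,u),d(x,v))\geq 1$, so in a diameter-$2$ graph no distance among unmerged vertices changes. Consequently every contraction of a diameter-$2$ graph on $N$ vertices again has diameter at most $2$. Its spread is $\sum_v\bigl(1+\deg(v)e^{-1}+(N-1-\deg(v))e^{-2}\bigr)^{-1}$, which depends only on the degree multiset. For \emph{any} strongly regular graph with parameters $(16,6,2,2)$, one contraction produces degrees $\{8,5^2,6^{12}\}$ (exponents are multiplicities). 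So the Shrikhande graph and the rook's graph have identical one-step multisets; the hexagon-versus-two-triangles structure is invisible at that level.

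Your fallback of contracting two edges of a triangle does separate them, but it requires comparing against \emph{every} length-two sequence of the other graph. In the rook's graph each triangle lies in a $K_4$, so contracting it gives degrees $\{10,4,6^{12}\}$ on $14$ vertices, with spread $\approx 3.356$. The Shrikhande graph is $K_4$-free, and its two-step contractions give the following.
\begin{itemize}
\item A triangle gives $\{9,5^3,6^{10}\}$.
\item A path $u\sim v\sim z$ gives $\{9,5^5,6^8\}$ or $\{10,4,5^2,6^{10}\}$, according to whether $u,v,z$ have a common neighbour.
\item Two disjoint edges give two merged vertices of degree $7$ or $8$.
\end{itemize}
None of these matches $\{10,4,6^{12}\}$. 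Distinct degree multisets give distinct spreads here, which you can check numerically; in general it holds because $e$ is transcendental and the functions $x^2/(x^2+dx+13-d)$ have pairwise disjoint poles. Hence the rook's-graph triangle curve has no counterpart among the Shrikhande curves, and the multisets differ. Writing this out would complete your proof.

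The paper instead uses a five-vertex pair, $G_A$ and the house graph $G_B$. Both have diameter $2$ and degrees $(3,3,2,2,2)$, hence equal spread. Every one-edge contraction of $G_A$ is the diamond, with spread $\approx 2.02$, whereas $G_B$ contracts to $C_4$, with spread $\approx 2.14$. So one contraction suffices and everything is checkable by hand. This is exactly the kind of pair your small-graph search would have found.
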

The proof of \Cref{thm:coarsening_expressivity} consists of providing a pair of graphs with identical spread but distinguishable by their diversity curves. In \Cref{sec:expressivity} we further demonstrate empirically that this holds not just for one example, but improves expressivity across a wide selection of graphs.

\paragraph{Computational complexity. } 
\label{sec:costs_main}
Overall, the time complexity of computing the diversity curve is determined by $C_{\text{Coarse}}$, the time complexity for constructing the sequence of coarsened graphs, and $C_{\Div}=O(|\mathcal{I}|\cdot n_{\text{max}}^3)$ the time complexity for evaluating the structural diversity of all coarsened graphs 
which have at most $n_{\text{max}} = \max_{i \in \mathcal{I}} i$ vertices. Altogether, the time complexity of computing the diversity curve of a graph $G=(X,E)$ for one sequence of random edge contractions 
is given by 
$C_{\text{Coarse}} + C_{\Div} = \mathcal{O}(|E|(\log(|E|)+|X|)+|\mathcal{I}|\cdot n_{\text{max}}^3)$. 
\Cref{app:theoretical results} contains a more detailed 
description of the theoretical time complexity. 
 We further evaluate the empirical costs 
 in \Cref{app:empirical_costs} and conclude that 
our methods are efficient to compute while outperforming more 
costly topological methods, such as the graph filtrations used by TopER~\citep{tola2025toper}, on large graphs. Because coarsening is an integral component of our framework, computational costs can be efficiently 
reduced by preprocessing graphs to smaller sizes, or evaluating diversity curves at fewer evaluation scales or lower node numbers.


\section{Experiments}
\label{sec:experiments}

We show the following experimental results: \textbf{\Cref{sec:pert}} verifies that the changes in diversity curves when modifying graphs through edge perturbations correlates well with the strength of the perturbation. This confirms that our methods behave logically at capturing 
differences between modified graphs. Next, 
\textbf{\Cref{sec:sim}} demonstrates that diversity curves perform well at our original goal of distinguishing graph distributions and parameter choices 
while being robust to variations in graph sizes. Based on this, we investigate applications to bioinformatics tasks. \textbf{\Cref{sec:single}} shows that diversity curves distinguish between the coarse geometry of trajectory- and cluster-like single-cell graphs surpassing more complex summaries. \textbf{\Cref{sec:enzymes}} uses diversity curves to characterise 
similarities between enzyme graphs through statistical comparisons.  \textbf{\Cref{sec:mantra}} further confirms that diversity curves 
reach high performance at distinguishing geometric shapes in a manner that is robust to upsampling operations outperforming more complex models. 
Finally, \textbf{\Cref{sec:expressivity}} reports empirical expressivity results supporting our theoretical analysis. Overall, this demonstrates that diversity curves constitute useful and expressive methods for unsupervised graph representation learning tasks.


\subsection{Diversity curves track graph perturbations}
\label{sec:pert}

\begin{wrapfigure}[15]{r}{0.40\textwidth}
\centering
\vspace{-12pt}
\includegraphics[width=0.40\textwidth]{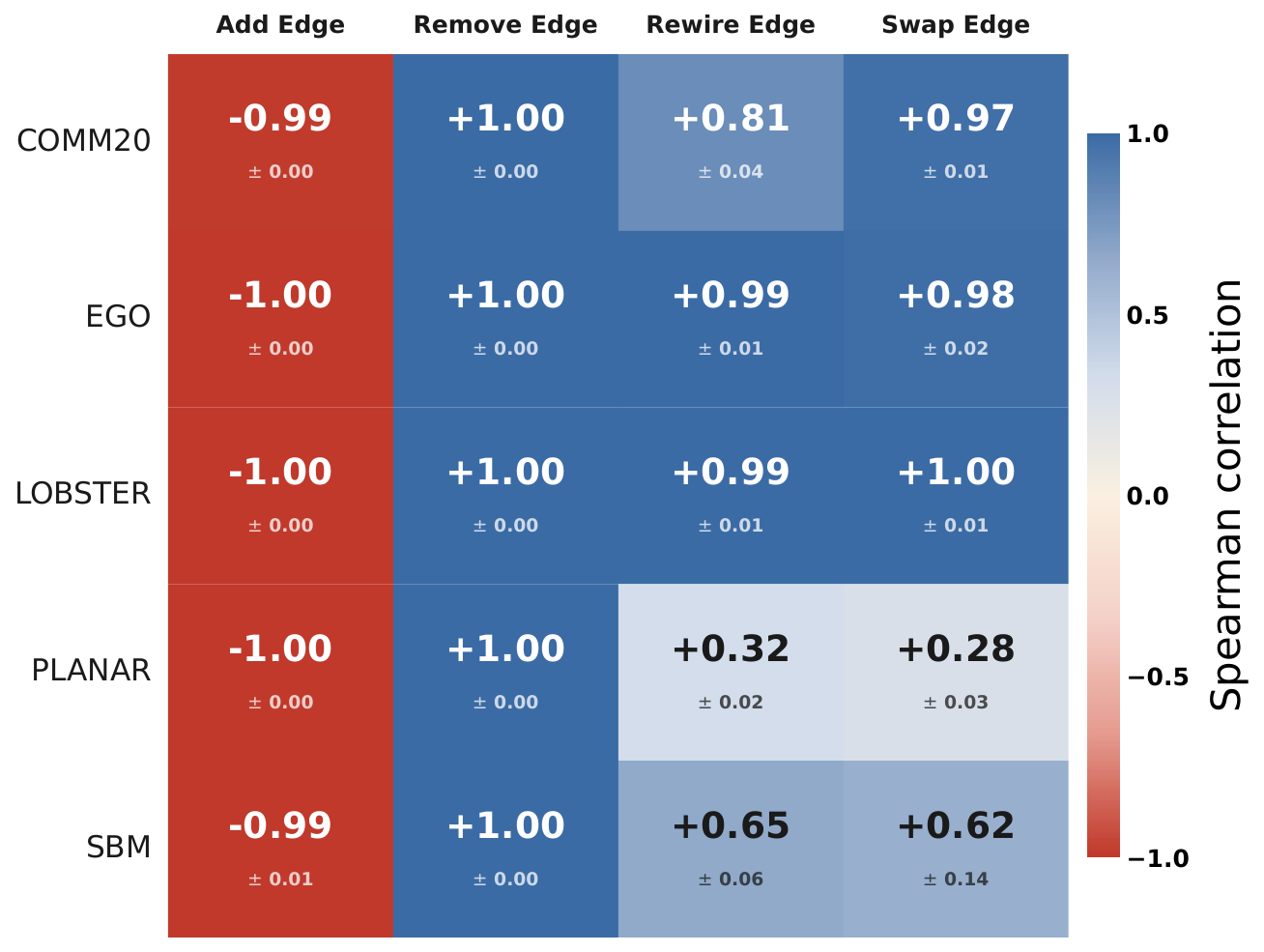}
\caption{Spearman correlation between the degree of perturbation and the change in diversity curve per dataset.}
\label{fig:corr}
\end{wrapfigure}

As a first correctness check, we investigate how diversity curves behave under perturbations of the input graphs~\citep{o2022evaluation}. We fix a degree of perturbation \(p \in [0,1]\) and apply edge perturbations by consecutively adding, removing, swapping, or rewiring edges, resulting in edits to all graphs in the \texttt{SBM}, \texttt{PLANAR}, \texttt{EGO}, \texttt{LOBSTER} and \texttt{COMMUNITIES} datasets \citep{o2022evaluation, krimmel2026polygraph}. 
For each dataset and degree of perturbation, we compute the mean norm across diversity curves, which summarises the structural diversity across graphs. 
We expect that the more graphs are perturbed, the higher the change in diversity curves should be. 
As demonstrated in \Cref{fig:corr}, this trend holds true in practice. Across almost all datasets and perturbation types, the more edges get perturbed, the higher the change in diversity curves as measured by the Spearman correlation between the norm of the curves and the degree of perturbation. 
When adding edges, nodes become more similar and structural diversity decreases, hence we observe almost perfect negative correlation. When removing edges, distances between nodes and the value of diversity curves increases, leading to perfect positive correlation. Rewiring and swapping edges similarly tends to destroy graph structure and increase structural diversity with high correlations for almost all datasets other than \texttt{PLANAR}, which 
by construction contains more random connectivities. Additional results in \Cref{app:extended_experiments_gp} confirm that diversity curves detect small perturbations resulting in structural changes to graphs. 









\subsection{Diversity curves distinguish graph distributions}
\label{sec:sim}







To investigate size-robustness, 
we test whether our method can distinguish graphs from different distributions. 
In particular, we consider Erd\H{o}s-R\'{e}nyi (ER), stochastic block model (SBM), random partition (RP), and random geometric (RG) 
graphs. 
For each random graph model and graph sizes between $10$ and $29$ nodes, we generate three graphs using a fixed parameter setting as detailed in \Cref{app:simulated_graphs}. 
We compute diversity curves using shortest-path distance and use the $L^2$-norm of the vectorial representations to visualise the distributions with PCA. \Cref{fig:sim_var_dist_MDS} shows that our embedding separates clearly between graphs from different distributions. 
Further, we use the diversity curves 
for $10$-fold stratified grouped CV using a kNN-classifier to distinguish between graph models. 
We compare to relevant unsupervised graph representation methods, namely graph kernels~\citep{siglidis2020grakel}, spectral graph representations via SpectralZoo~\citep{jin2020spectral} and NetLSD~\citep{tsitsulin2018netlsd}, topology-inspired graph embeddings via TopER~\citep{tola2025toper} and vectorial representation of persistence images computed through persistent homology~\citep{ballester2025expressivity},
as well as the random walk based approach, FEATHER \citep{rozemberczki2020characteristic}. 
The results in \Cref{tab:sim_var_dist} demonstrate that diversity curves perform similarly to the best baselines in terms of classification accuracies. 
To further investigate the quality of each graph representation, \Cref{tab:sim_var_dist} also compares \emph{silhouette scores} as a measure of clustering quality, for which diversity curves outperform all other methods. 
This confirms that diversity curves achieve better clustering 
than any of the tested baseline methods demonstrating 
their superior performance at distinguishing graph distributions in an unsupervised manner while being robust to differences in graph sizes. 


\begin{minipage}[t]{0.61\textwidth}
\vspace{0pt} 
\captionof{table}{Performance (mean $\pm$ std) of unsupervised methods at distinguishing graphs from four different distributions.}
\centering
\label{tab:sim_var_dist}
\resizebox{\linewidth}{!}{  
\sisetup{
    separate-uncertainty,
    detect-all,
    retain-zero-uncertainty,
}
\begin{tabular}{l
    S[table-format=1.2(2)]
    S[table-format=-1.2(2)]}
\toprule
    Method & {Accuracy ($\uparrow$)} & {Silhouette Score ($\uparrow$)} \\
\midrule
    Diversity Curves (ours) & \bfseries 0.90 +- 0.06 & \bfseries 0.42 +- 0.03 \\
\midrule
    Kernel WL Opt.\ Assig. & \bfseries 0.90 +- 0.05 & 0.02 +- 0.00 \\
    Kernel Shortest Path   & 0.77 +- 0.07           & 0.28 +- 0.04 \\
    TopER                  & 0.70 +- 0.10           & -0.06 +- 0.02 \\
    NetLSD                 & 0.90 +- 0.03           & 0.06 +- 0.04 \\
    FEATHER                & 0.75 +- 0.09           & 0.11 +- 0.02 \\
    Spectral Zoo           & 0.72 +- 0.09           & 0.05 +- 0.03 \\
    PH Vietoris--Rips      & 0.78 +- 0.07           & -0.01 +- 0.03 \\
\bottomrule
\end{tabular}
}
\label{tab:placeholder}
\end{minipage}  \hfill
\begin{minipage}[t]{0.36\textwidth}
\vspace{0pt} 
\centering
\includegraphics[width=1\textwidth]{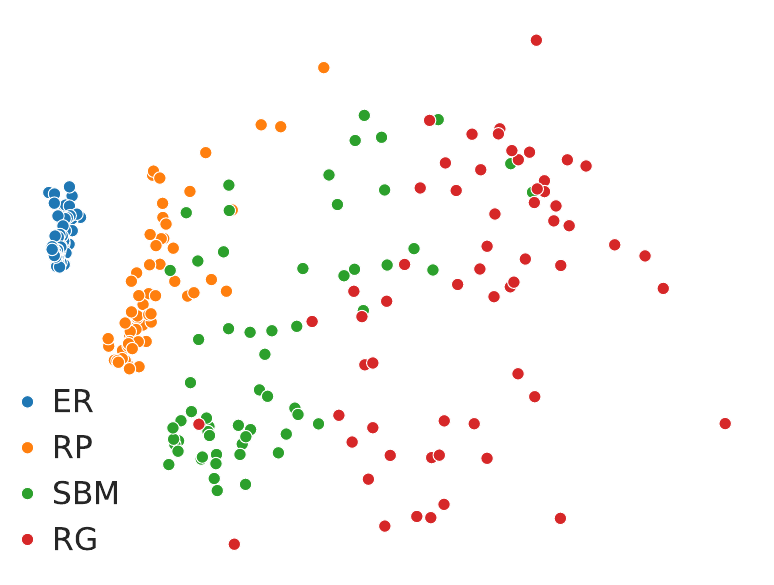}
\captionof{figure}{PCA of diversity curves for ER, RP, SBM, and RG models.
}
\label{fig:sim_var_dist_MDS}
\end{minipage}



Second, we test the ability of our method to characterise and distinguish between graphs generated from the same underlying random graph model using three different parameter choices. 
For each graph distribution, parameter setting as described in Appendix~\ref{app:simulated_graphs}, and graph sizes from $10$ to $29$ nodes we generate three graphs. 
As before, we use the Euclidean distances of the vectorial representation of the diversity curves with the shortest-path distance as input to a kNN-classifier with $10$-fold stratified grouped CV. The results reported in \Cref{tab:sim_acc_param} demonstrate that diversity curves achieve high classification accuracies for the reported graph distributions. 
We compare silhouette scores in  \Cref{fig:sim_silh_param} against our baselines, where we achieve comparable silhouette scores to the best kernel baselines for the ER and RP distributions and obtain the highest scores for the RG and SBM graphs. This confirms that diversity curves outperform alternative unsupervised methods at distinguishing between parameters used in a random graph model, while being robust to size differences. 
Additional clustering quality metrics and extended results in Appendix \ref{app:simulated_graphs} confirm our findings.

\begin{minipage}[t]{0.6\textwidth}
\vspace{0pt} 
\captionof{table}{Accuracies (mean $\pm$ std) at distinguishing  parameter choices of random graph models. Diversity curves reach top performance scores across graph distributions.}
\centering
\label{tab:sim_acc_param}
\resizebox{\linewidth}{!}{  
\begin{tabular}{lllll}
\toprule
Method & ER & RP & RG & SBM \\
     \midrule
Diversity Curves (ours) & \textbf{1.00 $\pm $ 0.00} & \textbf{0.98 $\pm $ 0.04} & \textbf{0.97 $\pm $ 0.04} & \textbf{0.86 $\pm $ 0.08} \\
\midrule
Kernel Shortest Path & \textbf{1.00 $\pm $ 0.00} & 0.97 $\pm $ 0.04 & 0.89 $\pm $ 0.13 & 0.64 $\pm $ 0.08\\
Kernel WL Opt. Assig. & 0.98 $\pm $ 0.04 & 0.89 $\pm $ 0.08 & 0.93 $\pm $ 0.06 & 0.80 $\pm $ 0.08 \\
TopER & 0.89 $\pm $ 0.08 & 0.87 $\pm $ 0.09 & 0.80 $\pm $ 0.10  & 0.60 $\pm $ 0.07\\
NetLSD & 0.97 $\pm $ 0.05 & 0.97 $\pm $ 0.04 & 0.92 $\pm $ 0.07 & 0.83 $\pm $ 0.08\\
FEATHER & \textbf{1.00 $\pm $ 0.00} &  0.86 $\pm $ 0.08 & 0.94 $\pm $ 0.07 & 0.66 $\pm $ 0.12 \\
SpectralZoo & \textbf{1.00 $\pm $ 0.00} & 0.91 $\pm $ 0.07  & 0.91 $\pm $ 0.11 & 0.63 $\pm $ 0.08\\
PH Vietoris--Rips & \textbf{1.00 $\pm $ 0.00} & 0.89 $\pm $ 0.08  & 0.94 $\pm $ 0.08 & 0.61 $\pm $ 0.10\\

\bottomrule
\end{tabular}
}
\label{tab:sim_acc}
\end{minipage}  \hfill
\begin{minipage}[t]{0.39\textwidth}
\vspace{-0pt} 
\centering
\includegraphics[width=\linewidth]
{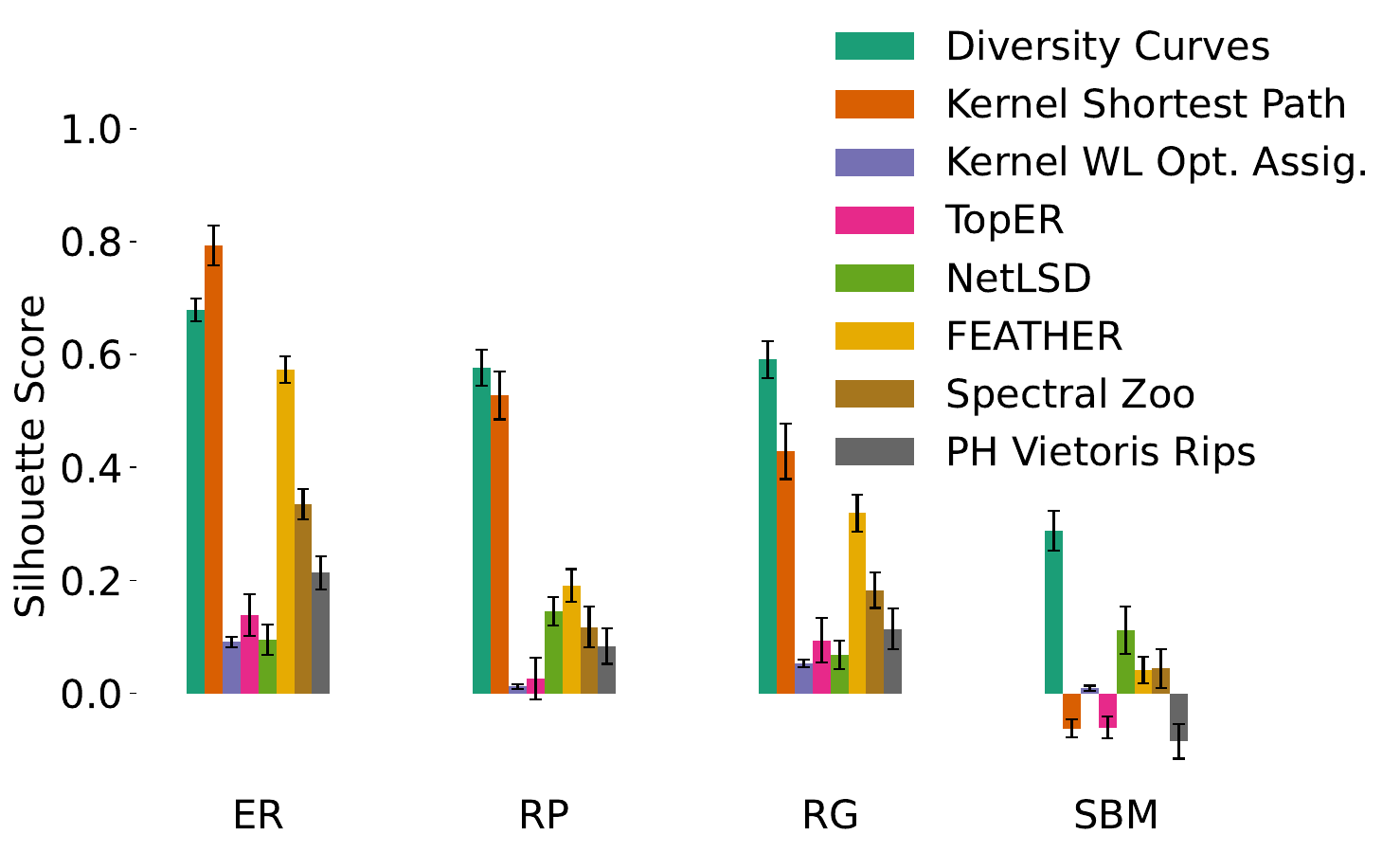}
\captionof{figure}{Silhouette scores ($\uparrow$) for distinguishing different parameter choices.}
\label{fig:sim_silh_param}
\end{minipage}

\subsection{Diversity curves distinguish trajectory or cluster structures in single-cell graphs}
\label{sec:single}
%

Comparing between graphs of vastly different sizes is required when analysing single-cell transcriptomics data, where the number of cells often varies widely due to experimental factors. A key question to inform further downstream analysis is whether single-cell graphs show cluster- or trajectory-like structures. 
This inherently geometric question informs the biological interpretation of data as well as the choice of computational tools and visualisation methods designed for either type. To address this task, we consider a collection of 169 curated single-cell graphs that have been annotated as cluster- or trajectory-like using biological ground truths \citep{lim2024quantifying, saelens2019comparison}. For each graph processed as described in \Cref{app:sc}, we compute the diversity curve using shortest-path distances and the diversity curve using Euclidean distances between node features, concatenate both curves, and use these curves as our graph representations. 
As described in \Cref{app:sc}, we note that cluster-like graphs have on average higher structural diversity than trajectory-like graphs. 
The PCA plot 
in \Cref{fig:sc_curves} further visualises that our graph representation separates well between both types of graph structure. 
We compare to the methods of \citet{lim2024quantifying}, who compute a set of five geometric scores: the entropy of pairwise diffusion pseudo-time geodesic distances, an entropy-based persistent homology score, a summary of vector norms, a summary of Ripley's K function, and a summary of reachability values. To our knowledge, these are the only methods that have so far been investigated for this task. Due to their complexity, these scores are defined in more detail in \Cref{app:sc}. We use the scores by \citet{lim2024quantifying} as inputs for 5-fold stratified CV using a kNN-classifier, report the classification accuracies in \Cref{tab:sc}, and compare to the same classification results when using our diversity curves or alternative graph embeddings as inputs. We achieve notably better accuracies 
using diversity curves 
than the collection of scores by \citet{lim2024quantifying}. Hence, we find that diversity curves provide more useful and more interpretable graph representations than existing methods designed for this task. Further, we confirm that our methods perform well compared to other unsupervised graph embedding methods. 
We thus show the potential of diversity curves for 
distinguishing the geometry of single-cell graphs giving highly actionable insights into deciding further analysis steps. 

\begin{minipage}[t]{0.50\textwidth}
\vspace{0pt} 
\captionof{table}{Performance at distinguishing between trajectory- and cluster-like single-cell graphs.}
\centering
\label{tab:sc}
\resizebox{\linewidth}{!}{  
\begin{tabular}{llc}
\toprule
Dataset & Method & Test Accuracy ($\uparrow$) \\
\midrule
\multirow{2}{*}{Gold (87 Graphs)}
 & Diversity Curves (ours) & \textbf{0.860 $\pm$ 0.077} \\
 & NetLSD & 0.823 $\pm$ 0.047 \\
 & Scores (\cite{lim2024quantifying}) & 0.804 $\pm$ 0.053 \\
 & Best Kernel & 0.785 $\pm$ 0.079 \\
  & TopER & 0.672 $\pm$ 0.086 \\
\midrule
\multirow{3}{*}{All (169 Graphs)}
 & Diversity Curves (ours)& \textbf{0.759 $\pm$ 0.077} \\
 & NetLSD & 0.731 $\pm$ 0.047 \\
 & Scores (\cite{lim2024quantifying}) & 0.716 $\pm$ 0.053 \\
 & Best Kernel & 0.685 $\pm$ 0.092 \\
 & TopER  & 0.624 $\pm$ 0.086 \\
 & Landscape (\cite{lim2024quantifying}) & 0.570 $\pm$ 0.025 \\
\bottomrule
\end{tabular}}
\end{minipage}
\hfill
\begin{minipage}[t]{0.48\textwidth}
\vspace{2pt} 
\centering
\includegraphics[width=1\textwidth]{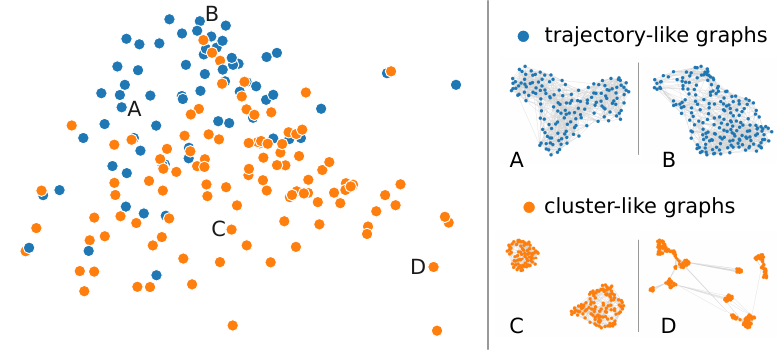}
\captionof{figure}{PCA visualisation of the diversity curves for all 169 single-cell graphs (left). Examples of two trajectory-like graphs and two cluster-like single-cell graphs (right).}
\label{fig:sc_curves}
\end{minipage}

\subsection{Diversity curves characterise molecular graph datasets}
\label{sec:enzymes}
\begin{wrapfigure}[16]{r}{0.5\textwidth}
\vspace{-16pt} 
\centering
\includegraphics[width=1\linewidth]{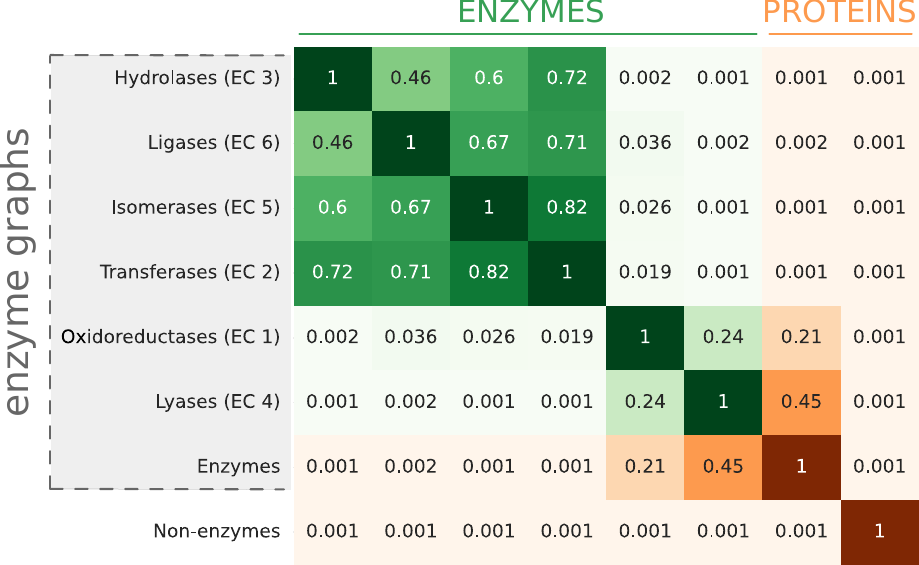}
\caption{Two sample permutation testing results (p-values) for testing the equality in mean diversity curves per class in the \texttt{PROTEINS} and \texttt{ENZYMES} datasets using the $L^{2}$-norm between curves.}
\label{fig:datasets}
\end{wrapfigure}

Characterising the coarse geometry of graphs, diversity curves can be used 
to compare and find structural similarities between graph datasets. 
Specifically, we explore common trends 
across classes of molecular graphs from the \texttt{ENZYMES} or \texttt{PROTEINS} datasets \citep{borgwardt2005protein},
where each graph represents a protein, with some proteins also being enzymes. We first compare the mean diversity curves per class, 
observing that non-enzyme graphs are notably distinct and have lower structural diversity on average than enzyme graphs. 
We further implement a non-parametric two-sample permutation test for the difference in mean diversity curves using the $L^2$-norm between the curves as a test statistic. For each comparison across classes, \Cref{fig:datasets} reports the $p$-values, which we computed across $1000$ random permutations of the data and by adding a pseudo-count for error control \citep{phipson2010permutation}. The results confirm that the set of non-enzymes is clearly distinguishable from all enzyme classes. Meanwhile, we find no evidence to indicate that enzymes from the \texttt{PROTEINS} dataset differ from the EC4 or EC1 classes in the \texttt{ENZYME} dataset through their averaged diversity curves. This verifies that the geometry of these proteins is more coarsely similar to enzymes 
than to non-enzymes, even when comparing across datasets. 
Hence, we show that enzymes that share similar biological functions have the same structure across different coarsening scales and datasets. Diversity curves thus quantify how distinguishable coarse geometric differences between graphs are for a given classification setup, which is a crucial question for evaluating graph learning datasets~\citep{coupette2025no}, and a step towards quantifying structural similarities for transfer learning tasks~\citep{huang2023learning}.



\subsection{Diversity curves characterise geometric shapes}
\label{sec:mantra}
Having investigated bioinformatics applications, we perform an experiment to support our claim that diversity curves are capable of describing purely geometric graphs.
To this end, we consider the \texttt{MANTRA} dataset~\citep{ballester2024mantra}, which contains, among others, triangulations of $2$-manifolds, namely a Klein bottle (4657 meshes), a torus (2247 meshes), a sphere (308 meshes), or the real projective plane (1365 meshes). The task of distinguishing these manifolds has been shown to be challenging for graph-based models in particular \citep{ballester2024mantra}. Following our established pipeline, we compute diversity curves using shortest-path distances on the $1$-skeleton graphs of the triangulation and use these representations as inputs for SVM classification. We denote methods that take graphs as inputs by $\mathit{G}$ in \Cref{tab:mantra}. Because Klein bottles and tori have the same $1$-skeletons, an optimal graph classifier could reach up to 73.8\% accuracy. As reported in \Cref{tab:mantra}, our method almost reaches this theoretical upper bound in practice. 
Next, we extend our approach and compute diversity curves from heat kernel distances between faces computed from the Hodge Laplacian \citep{krahn2025heat}, which 
 encodes higher order structures in mesh datasets. Models that operate on triangulations are denoted by $\mathcal{T}$ in \Cref{tab:mantra}. Following the same classification procedure as before, the combined 
 diversity curves almost completely solve the task and reach 99\% accuracy at distinguishing the homeomorphism types. This is superior to the best baseline, the cellular transformer (CT), reported by \citet{ballester2024mantra}, which indicates that there is potential to use and extend our representations for mesh data and shape tasks. We further visualise a PCA plot of the combined 
 diversity curves in \Cref{fig:mantra}, which shows that our methods effectively separate the classes in \texttt{MANTRA}. 
 Lastly, we investigate if our representations encode intrinsic geometric information that is robust to varying graph sizes. 
 To do so, we train our classifier on the diversity curves of the original dataset, but test on a modified dataset created by barycentric subdivision of the original triangulations \citep{ballester2024mantra}. This version of the task is denoted by $\mathcal{T}_\text{up}$. As reported in \Cref{tab:mantra}, our methods \emph{retain} high classification performance surpassing alternative models at this arguably difficult experimental setup. 
 We conclude that diversity curves are expressive representations that 
effectively characterise geometric graphs across varying resolutions and 
sizes.


\begin{minipage}[t]{0.6\linewidth}
\vspace{0pt} 
\centering
\captionof{table}{Model performance (mean $\pm$ std) at distinguishing homeomorphism types of triangulations of $2$-manifolds from \texttt{MANTRA} across stratified $5$-fold CV.}
\resizebox{\textwidth}{!}{  
\begin{tabular}{llcc}
\toprule
Method & Version & Accuracy ($\uparrow$) & AUROC ($\uparrow$) \\
\midrule
Diversity Curve (ours) 
& $\mathcal{T}$ & \textbf{0.99 $\pm$ 0.00}  & \textbf{1.00 $\pm$ 0.00} \\
CT {\tiny(best result from \cite{ballester2024mantra})} & $\mathcal{T}$ & 0.91 $\pm$ 0.01  & 0.94 $\pm$ 0.00 \\
\midrule
Diversity Curve (ours) 
& $\mathit{G}$ & \textbf{0.74 $\pm$ 0.00} & \textbf{0.85 $\pm$ 0.00} \\
 Optimal Classifier 
 & $\mathit{G}$ &  \textbf{0.74 $\pm$ 0.00}  & 0.80 $\pm$ 0.00\\
\midrule
Diversity Curve (ours) 
& $\mathcal{T}_{\text{up}}$  & \textbf{0.83 $\pm$ 0.02}  & \textbf{0.96 $\pm$ 0.01} \\
CT {\tiny(best result from \cite{ballester2024mantra})} & $\mathcal{T}_{\text{up}}$  & 0.74 $\pm$ 0.00 &  0.83 $\pm$ 0.01 \\
\bottomrule
\end{tabular}}
\label{tab:mantra}
\end{minipage}
\hfill
\begin{minipage}[t]{0.38\linewidth}
\vspace{0pt}
\centering
\includegraphics[width=1\linewidth]{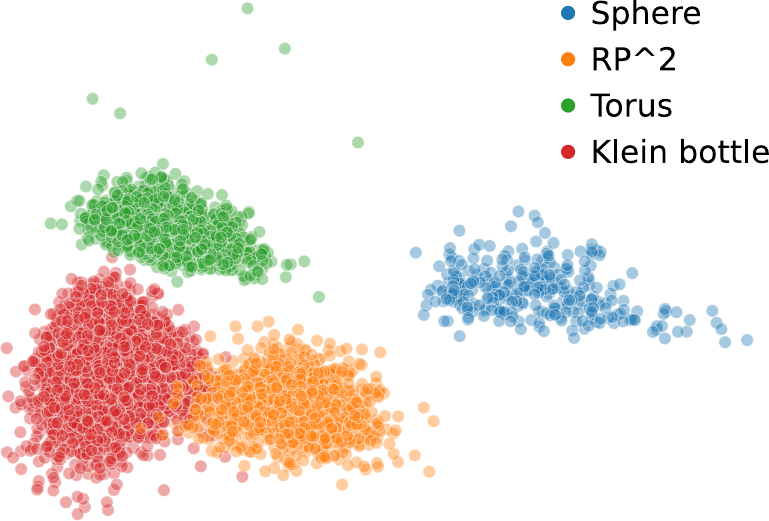}
\captionof{figure}{PCA visualisation of \mbox{diversity} curves for \texttt{MANTRA}.}
\label{fig:mantra}
\end{minipage}










\subsection{Diversity curves are expressive graph representations}
\label{sec:expressivity}
To support our theoretical expressivity analysis, \Cref{app:expressiv_experiment} further investigates whether our methods can distinguish pairs of WL-indistinguishable graphs from the \texttt{BREC} datasets \citep{wang2023empirical, lachi2025expressive}. The results confirm that diversity curves improve the expressivity of structural diversity measures through graph coarsening. 
On its own, structural diversity based on shortest-path distances  can distinguish $19\%$ of graphs in \texttt{BREC} through their absolute difference in spread within an error tolerance of $10^{-5}$. Further, by comparing diversity curves evaluated for all possible ways of contracting one edge, we increase the proportion of graph pairs that can be distinguished to $50.8\%$. This demonstrates the practical utility of using edge pooling for increasing expressivity. Further, these insights highlight that diversity curves  successfully distinguish 1-WL indistinguishable graphs. 

\section{Discussion}

By introducing diversity curves as size-aware graph representations, our work contributes towards a unified framework for graph structural comparisons through coarsening. Diversity curves provably improve expressivity through edge contraction pooling and reach top performance at geometric tasks. 
Nevertheless, our approach has certain \textit{limitations}: We focus on structural comparisons rather than attributed graphs and do not aim to reach state-of-the-art performance on supervised settings. 
Instead, diversity curves offer interpretable and expressive tools for unsupervised structural and distributional comparisons between graphs. Further, our methods implicitly rely on the quality and usefulness of the proposed graph coarsening procedure. 
For example, some of our theoretical results assume we compute diversity curves exactly for all possible choices of edge contractions, which may not always be feasible. \Cref{app:ablation} provides an ablation study on the impact 
the selected pooling strategy and 
graph metric on the experiment from \Cref{tab:sim_acc_param}, justifying the utility 
of our default choices. 
Nevertheless, building upon our work, there is potential to 
generalise the choice of coarsening process, or the graph invariant to track over the coarsening process. 
Finally, diversity curves could be integrated into supervised graph learning architectures as expressive structural encodings. 

\label{sec:limitations}




\clearpage



\bibliography{main}
\bibliographystyle{abbrvnat}

\newpage
\appendix
\onecolumn

\counterwithin*{figure}{part}
\stepcounter{part}
\renewcommand{\thefigure}{S.\arabic{figure}}

\counterwithin*{table}{part}
\stepcounter{part}
\renewcommand{\thetable}{S.\arabic{table}}

\startcontents
\printcontents{}{1}{{%
    \vskip10pt\hrule
    \large\textbf{Appendix~(Supplementary Materials)}\vskip3pt\hrule\vskip5pt}
}

\clearpage

\section{Extended methods}

To support the introduction of diversity curves in \Cref{sec:methods}, we give a more detailed descriptions of our proposed methods, as well as the steps involved in computing diversity curves, and the outline of the final algorithm. Further, we briefly discuss the compute resources used for our experiments, and give a brief introduction to unsupervised baseline methods, with which we compare diversity curves throughout our main experiments.

\subsection{Diversity curves}
\label{app:diversity_curves_methods}
In the following, we define how to compute diversity curves in detail, describe the algorithm used to compute them, and discuss specific parameter choices, such as the choice of pooling operation and the choice of graph metric. 

\subsubsection{Coarsening operations}
\label{app:coarsening}
Throughout our work, we explore edge contraction pooling to downsample the size of graphs. Further, we use upsampling operations through upsampling nodes in a graph or triangles in a mesh. These operations are detailed below.

\paragraph{Edge pooling}
\label{app:methods edge pooling}
We downsample graphs by performing edge contractions closely following the pooling method proposed by \citet{limbeck2025geometryaware}. For a graph $G=(X,E)$ and an edge $e = (u,v) \in E$, we define the graph $G/e$ obtained from $G$ by contracting the edge $e$ to have vertex set $X \setminus \{v\}$ and edge set $E \setminus \{(v, x)|x\in \mathcal{N}_G(v)\} \cup \{(u,x)| x \in \mathcal{N}_G(v)\}$, where $\mathcal{N}_G(v) = \{x \in G|(v,x) \in E\}$ denotes the neighbourhood of the vertex $v$ in $G$. Note that we do not introduce any multiple edges or a self-loop on the remaining node $u$. 
To describe our downsampling process, let $G=(X,E)$ be a graph of size $n = |X|$ with $m=|E|$ edges that we want to downsample to the size $n-k$, where $0 < k \leq m$. To do this, we generate a sequence $\mathcal{E} = \{e_i\}_{1\leq i \leq k}$ of edges in $G$ to be contracted in that order. We choose this edge sequence based on edge scores that are either drawn uniformly at random from the unit interval 
for each edge, or we use edge scores based on a diversity measure as proposed by \citet{limbeck2025geometryaware}. In the latter case, the score for an edge $e \in E$ is the difference in diversity $\Div(G) - \Div(G/e)$. After the scores are computed, we iteratively add the edge with the lowest score to the sequence $\mathcal{E}$ if that edge is not incident to a vertex incident to a previously contracted edge. If there are no more such edges, then we recompute all edge scores and repeat the procedure. 
With the generated edge sequence, we define the sequence of coarsened graphs $\{G_i\}_{n \geq i \geq n-k}$ where $G_n = G$ and for $n > i \geq n-k$ we set $G_i = G_{i+1}/e_{n-i+1}$. If node features are available, we average the node features of all original nodes that have been merged through these edge contractions. Further, we note that edge contraction pooling extends to mesh data and higher order shapes and is in fact often used to coarsen mesh data in practice \citep{ollivier2003coarsening}. Hence, we also apply edge contraction pooling to the triangulations in \Cref{sec:mantra} while taking care to track triangles as well as edges through the coarsening process. Edge contraction pooling is thus used throughout all of our experiments.

\paragraph{Node upsampling}
For upsampling a graph $G = (X,E)$ of size $n=|X|$ to size $N > n$, we generate a sequence $\mathcal{V} = \{v_i\}_{1 \leq i \leq N-n}$ of $N-n$ nodes from $G$. This sequence determines the nodes that we upsample and the order in which we upsample them. We construct the sequence in such a manner that the difference between the number of times nodes are selected is no more than $1$. To do this, let $m,r \in \mathbb{Z}_{\geq 0}$ be such that $N-n = m\cdot n + r$ and choose the first $m\cdot n$ nodes in $\mathcal{V}$ by doing $m$ permutations of all nodes in the graph. For the remaining $r$ nodes we choose them from the set of nodes $X$ without replacement. This ensures each node is picked at least once before a previously upsampled node is chosen again. 
Going through the sequence $\mathcal{V}$ of nodes to upsample, for every $0 \leq i \leq N-n$ we add a new node $u$ with the same neighbourhood as the sampled node $v_i$ and also connect the sampled node to the new node. That is, we add an edge between the new node $u$ and every node in $\mathcal{N}_G(v_i) \cup \{v_i\}$. If node features are available, we set the node feature of $u$ to be the same as the node feature of $v_i$. This node-based upsampling procedure is used throughout all of our experiments other than in \Cref{sec:mantra}. Note that the node-based upsampling strategy aims to mimic and reverse the way edge contraction pooling downsamples nodes in a graph. 

\paragraph{Triangular upsampling} Given a triangular mesh, such as in \Cref{sec:mantra}, we might not want to just upsample nodes in the graphs as this loses information on the inherent shape of the data. Instead, we employ triangular upsampling and upsample each 3-clique in the graph, or each triangle in the triangulation. Specifically, we uniformly select triangles making sure that each triangle is picked at least once before a previously upsampled triangle could be selected again. We then subdivide the selected triangles in order by adding a central vertex and connecting it to all corners. This way, we uniformly subdivide the surface of the meshes ($\mathcal{T}$), or the 3-cliques in the graphs ($\mathit{G}$). This upsampling strategy is used to compute diversity curves in \Cref{sec:mantra} and account for the triangular nature of the data.


\subsubsection{Metric choices}
\label{app:metrics}
To compute diversity curves, we consider the structural diversity of a graph to be dependent on a metric between nodes. Here, we will define and detail the distance choices explored in our paper. First, we define shortest-path distances, which are our default choice across this paper, as they are faster to compute than other structure-based metrics, such as heat kernel or diffusion distances, while performing well across our experiments and ablation studies.


\begin{definition}[Shortest-path distance]
    Given an undirected graph $G=(X, E)$, the \emph{shortest-path distance} $d$ is a metric that counts the minimum number of edges in any path between two nodes, $x_i, x_j \in X$, so that $d(x_i, x_j)=\infty$ if no such path exists and otherwise
    \begin{equation}
        d(x_i, x_j) := \min_k\{k \in \mathbb{N} | \exists \text{ path } x_i = v_0, v_1, \dots, v_k = x_j \text{ where } (v_l, v_{l+1}) \in E \text{ and } v_l \in X\}.
    \end{equation}
\end{definition}


When dealing with attributed graphs we can further define distances based on node features:
\begin{definition}[Euclidean distance]
    Given a graph $G=(X, E)$ associated with features $\mathbf{Y} \in \mathbb{R}^{|X| \times m }$ the \emph{Euclidean distance} $d$ between two nodes $x_i, x_j \in X$ with feature vectors $\mathbf{y_i},\mathbf{y_j} \in \mathbb{R}^M$ is computed as 
    \begin{equation}
        d(x_i, x_j) := ||\mathbf{y_i} - \mathbf{y_j}||_2 = \sqrt{\sum_{k=1}^{M} (\mathbf{Y}_{ik} - \mathbf{Y}_{jk})^{2}}.
    \end{equation}
\end{definition}


Next, we define alternative structure-based metrics, which are computed from a graph's normalised Laplacian and have been used throughout related literature to study the geometry of graphs \citep{coupette2025no, limbeck2025geometryaware}. Let $G=(X, E)$ be a graph, $A$ its adjacency matrix, and $D$ its degree matrix, whose diagonal entries are given by $D_{ii}=\sum_{j=1}^{|X|}A_{ij}$. The normalised graph Laplacian of $G$ is given by $\hat{L}=D^{-\frac{1}{2}}(D-A)D^{-\frac{1}{2}}$. 

\begin{definition}[Diffusion distance]
    Given a graph $G=(X, E)$, its normalised Laplacian, \(\hat{L}\), has positive eigenvalues \(2=\lambda_0 > \lambda_1 > \lambda_2 > \cdots > \lambda_{n-1} \geq 0\) with eigenvectors \(\lbrace \psi_l\rbrace_{l}\). At time $t$ this provides an embedding in Euclidean space given by 
    \begin{equation}
        \Phi_t(x) = (\lambda_1^t \psi_1(x), \cdots, \lambda_{n-1}^t \psi_{n-1}(x)) \text{ for } x\in X.
    \end{equation}
    The \emph{diffusion distance} $d$ between two nodes $x_i, x_j \in X$ at time $t$ is defined as 
    \begin{equation}
        d(x_i, x_j) := ||\Phi_t(x_i) - \Phi_t(x_j)||_2.
    \end{equation}
\end{definition}

\begin{definition}[Heat kernel distance]
    Given a graph $G=(X, E)$, its normalised Laplacian, \(\hat{L}\), has positive eigenvalues \(2=\lambda_0 > \lambda_1 > \lambda_2 > \cdots > \lambda_{n-1} \geq 0\) with eigenvectors \(\lbrace \psi_l\rbrace_{l}\). The \emph{heat kernel distance} $d$ between two nodes $x_i, x_j \in X$ at time $t$ is defined as 
    \begin{equation}
        d(x_i, x_j) := \sum_{l=0}^{n-1} \exp(-\lambda_lt\psi_l(x_i)\psi_l(x_j)).
    \end{equation}
\end{definition}

Unless otherwise stated, we set the time parameter to $t=1$ throughout or experiments. 
Further, we define a generalisation of the heat kernel distances to higher-dimensional shapes using the Hodge Laplacian on a simplicial complex \citep{krahn2025heat}. We use existing code for computing the Hodge Laplacian.\footnote{\href{https://github.com/tsitsvero/hodgelaplacians}{https://github.com/tsitsvero/hodgelaplacians} available under an MIT licence.}

\begin{definition}[Hodge heat kernel distance]
    Given a simplicial complex $\mathcal{C}$, its $k$-th Hodge Laplacian, \(\hat{L}\), has eigenvalues \(\lambda_0 > \lambda_1 > \lambda_2 > \cdots > \lambda_{s-1}\) with eigenvectors \(\lbrace \psi_l\rbrace_{l}\). We define the \emph{topological heat kernel distance} $d$ between two $k$-simplices $c_i, c_j \in \mathcal{C}$ at time $t$ as 
    \begin{equation}
        d(c_i, c_j) := \sum_{l=0}^{s-1} \exp(-\lambda_lt\psi_l(c_i)\psi_l(c_j)).
    \end{equation}
\end{definition}



\subsubsection{Choice of coarsening levels}



Let $\mathcal{D}$ be a set of graphs for which we want to compute the diversity curves and compare them between each other. We select a set $\mathcal{I}$ of integers, which will represent the cardinalities at which we evaluate the diversity curves. That is, for any graph $G=(X,E) \in \mathcal{D}$ in our dataset we construct a sequence $\{G_i\}_{i \in \mathcal{I}}$ of graphs with $|G_i| = i$ based on the edge contraction and node upsampling described above and compute the diversity measure for every graph in the sequence $\{G_i\}_{i\in \mathcal{I}}$ to obtain the diversity curve. 
By default, we choose the cardinalities $\mathcal{I} = \{1,\dots, \max_{G \in \mathcal{D}}|G|\}$ in our experiments. Note that for a graph $G$ with precisely $c$ connected components, we can not coarsen the graph to less nodes than $c$ by edge contractions alone. In the case where $c>1$, to still obtain values for every cardinality of the diversity curve, we linearly interpolate the diversity curve between $\DivCurve(G)_1 = 1$ and the lowest $i \in \mathcal{I}$ to which we could downsample the graph by edge contractions.
This setup allows for a lot of flexibility, in particular, for datasets in which there are graphs of large cardinalities for which it is not feasible to compute the diversity measure directly, we can choose the maximum cardinality to be much smaller than the largest cardinality in the dataset. Furthermore, we can also choose to evaluate the diversity curves only at certain cardinalities if we are not interested in the finest possible resolution.

\subsubsection{Algorithm}


We base our computations of structural diversity on \texttt{magnipy}\footnote{Available at \hyperlink{https://github.com/aidos-lab/magnipy}{https://github.com/aidos-lab/magnipy} under a BSD 3-Clause License.}, a Python library for the computation of metric space diversity measures \citep{limbeck2024metric}, and \texttt{mag\_edge\_pool}\footnote{Available at \hyperlink{https://github.com/aidos-lab/mag_edge_pool}{https://github.com/aidos-lab/mag\_edge\_pool} under a BSD 3-Clause License.}, which provides code for computing the spread of a metric space on graphs, and for computing edge contraction pooling operations \citep{limbeck2025geometryaware}. For using and manipulating graphs, we use the Python package NetworkX~\citep{hagberg2008networkx} released under a BSD-3-Clause license. Then, we implement the computations of diversity curves as described in \Cref{sec:methods} via \Cref{alg:diversity_estimation}. 
A code implementation of our methods is available on GitHub.\footnote{The code for computing diversity curves is available at \href{https://github.com/aidos-lab/diversity_curves/}{https://github.com/aidos-lab/diversity\_curves/}.}




\begin{algorithm}
\caption{Diversity Curve Computation}
\label{alg:diversity_estimation}
\begin{algorithmic}[1]
\REQUIRE Graph $G = (X, E)$; evaluation scales $\mathcal{I} = \{n_{\min}, \ldots, n_{\max}\}$; upsampling method $\mathcal{U}$; downsampling method $\mathcal{D}$; distance function $d(\cdot, \cdot)$; diversity measure $\text{Div}(\cdot, d)$; number of repetitions $R \in \mathbb{N}$
\ENSURE Diversity curve $\mathbf{v} \in \mathbb{R}^{|\mathcal{I}|}$
\STATE $\mathbf{v} \leftarrow \mathbf{0}^{|\mathcal{I}|}$, \quad $n \leftarrow |X|$
\FOR{$r = 1, \ldots, R$}
    \STATE $\mathcal{I}_{\uparrow} \leftarrow \{i \in \mathcal{I} : i > n\}$ in ascending order
    \STATE $\mathcal{I}_{\downarrow} \leftarrow \{i \in \mathcal{I} : i \leq n\}$ in descending order
    \STATE $G_{\uparrow} \leftarrow G$
    \FOR{each $i \in \mathcal{I}_{\uparrow}$}
        \STATE $G_{\uparrow} \leftarrow \mathcal{U}(G_{\uparrow}, i)$ \COMMENT{Upsample to $i$ nodes}
        \STATE $\mathbf{v}[i] \leftarrow \mathbf{v}[i] + \text{Div}(G_{\uparrow}, d)$
    \ENDFOR
    \STATE $G_{\downarrow} \leftarrow G$
    \FOR{each $i \in \mathcal{I}_{\downarrow}$}
        \STATE $G_{\downarrow} \leftarrow \mathcal{D}(G_{\downarrow}, i)$ \COMMENT{Coarsen to $i$ nodes}
        \STATE $\mathbf{v}[i] \leftarrow \mathbf{v}[i] + \text{Div}(G_{\downarrow}, d)$
    \ENDFOR
\ENDFOR

\STATE $\mathbf{v} \leftarrow \mathbf{v} / R$
\STATE  \textbf{return} $\mathbf{v}$
\end{algorithmic}
\end{algorithm}

As default settings, we select the following choices: We set the evaluation interval $\mathcal{I}=\{1, 2, \dots, n_{\text{max}}\}$ with $n_{\text{max}}$ being the maximum number of nodes in a given graph dataset. Further, we use random edge contraction pooling to downsample the graph, and random node-based upsampling to upsample small graphs to larger sizes as described in \Cref{app:coarsening}. For diversity computations, we choose the shortest-path metric between nodes as a distance function, and the spread of a graph for computing the diversity. Further, we repeat the coarsening across $R=3$ repetitions and average the diversity curves across these, unless specified otherwise. Finally, whenever the number of connected components in a graph is larger than the minimum evaluation scale, we use linear interpolation to interpolate the value of diversity as described in \Cref{sec:methods}.


\subsection{Compute resources}
\label{app:compute}
We run our experiments in a compute node with $\times 2$ CPU \textsc{AMD EPYC 7763 64-Core Processor} and  1 TiB \textsc{DRAM}. We use these resources to speed up computation, however running experiments on a personal computer is also possible and does not represent a computational bottleneck.

\subsection{Baseline methods}

For computing the graph embeddings NetLSD \citep{tsitsulin2018netlsd}, FEATHER \citep{rozemberczki2020characteristic}, and Graph2Vec \citep{narayanan2017graph2vec} we use the `Karate Club' Python package~\citep{rozembereczki2020karateclub} published under the MIT license (GPLv3).
The Spectral Zoo embeddings we compute via the approximation used by~\citet{jin2020spectral}\footnote{Made available at \hyperlink{https://github.com/shengminjin/EstimateSpectralMoments}{https://github.com/shengminjin/EstimateSpectralMoments} with the publication of their paper \citep{jin2020spectral}. 
}.
For the computation of the kernel methods we use the Python package GraKel (available under a BSD 3-clause License)~\citep{siglidis2020grakel}.
For the computation of the TopER\footnote{Available at \hyperlink{https://github.com/AstritTola/TopER}{https://github.com/AstritTola/TopER} under the MIT License} method we use the implementation that has been made available with the publication \citep{tola2025toper}.
We use the same approach for the persistent homology based graph embeddings as described in the classification experiment by \citet{ballester2025expressivity} and use their provided implementation.\footnote{Available at \hyperlink{https://github.com/aidos-lab/PH\_expressivity}{https://github.com/aidos-lab/PH\_expressivity} under a BSD 3-clause License.}

\section{Proofs and theoretical results}
\label{app:theoretical results}


Here, we give an overview of all relevant theorems and proofs that detail and support our theoretical results. Specifically, we detail the properties of diversity curves in terms of their computational complexity
and their expressivity at distinguishing graphs.




\subsection{Computational complexity} 

Extending our analysis of the time complexity of diversity curve computations in \Cref{sec:theory}, we outline the computational complexity of the specific steps involved in computing our methods. 






\paragraph{Computing the diversity measure}
Let $C_{Div}$ be the time complexity for computing the diversity measure of a graph $G=(X,E)$, we conduct our analysis with the spread as it was used in this work. The complexity $C_{Div}$ consists of the time complexity $C_{dist}$ for computing the pairwise distances between all vertices in the graph, and computing the spread. We use the shortest-path distance, which has time complexity $\mathcal{O}(|X|^3)$. In practice, shortest-path distances can be more efficiently computed than alternative graph distances, such as diffusion distances defined in \Cref{app:metrics} and previously used by \citet{limbeck2025geometryaware}. 
Further, computing the spread has time complexity $\mathcal{O}(|X|^2)$. This is notably faster than computing alternative diversity measures, such as than magnitude of a graph, which has a time complexity $\mathcal{O}(|X|^3)$. 
Overall, for a general distance we get $C_{Div} = C_{dist}+\mathcal{O}(|X|^2)$, which in the case of shortest-path distance gives $C_{Div} = \mathcal{O}(|X|^3)$ using Floyd-Warshall
algorithm. 
For our work, we choose to use exact computations of both metric space spread and shortest-path distances as described above to ensure computational accuracy. Nevertheless, we note that 
costs could further be reduced by approximating the shortest-path distance  
in around $C_{dist}=O(|X|^{2.032 })$ time using a 2-approximation of all shortest-paths \citep{bermanr2007distapprox, dory2024fast}.
Further, the costs of spread could be reduced via iterative normalisation or mini-batching using $s$ subsets of size |$X_s$| reducing the time complexity of approximating spread to $C_{Div} = C_{dist}+O(s \cdot |X_s| \cdot |X|)$ \citep{dunne2024efficiently, limbeck2025geometryaware}. 

\paragraph{Coarsening the graph} 
Given a graph $G=(X,E)$, let $C_{Coarse}$ be the time complexity for computing the sequence of coarsened graphs $\{G_i\}_{i \in \mathcal{I}}$ for a fixed set $\mathcal{I}$ of integers representing the cardinalities, so $|G_i|=i$ for any $i \in \mathcal{I}$. We denote by $n_{max} = \max_{i\in \mathcal{I}} i$ the maximum node size at which we compute the diversity and $n_{min} = \min_{i\in \mathcal{I}} i$ the minimum node size. We obtain the coarsened graphs by doing repeated edge contractions in $G$ and upsampling nodes. The time complexity $C_{Coarse}$ is composed of the time for computing the edge scores, sorting the edges according to their scores, and contracting the edges. 
\begin{description}[left=1em]
    \item[Computing the edge scores] If we choose random edge scores, we need $\mathcal{O}(|E|)$ to get all of the edge scores. 
    When using spread-based scores for determining the order of the edge contractions, it takes $\mathcal{O}(|E|\cdot C_{Div})$ time to compute all scores once \citep{limbeck2025geometryaware}.
    \item[Sorting the edges] Sorting the edges according to their scores takes $\mathcal{O}(|E|\log(|E|)$ time.
    \item[Contracting the edges]  Contracting one edge has time complexity $\mathcal{O}(|X|)$ and we do this at most $\min(|E|,|X|-n_{min})$ times.
    \item[Repeating the edge scoring] We might have to repeat the edge score computation and sorting of the edges since we restrict the edge contractions such that no two adjacent edges get contracted in the same iteration of the process of computing edge scores, sorting the edges, and contracting. Let $r$ denote the number of times we have to recompute the edge scores and sort them. This number depends on $n_{min}$ and how many edges can be contracted per iteration. If we want to coarsen the graph such that every edge is contracted, the number of iterations is $r=\log(n/\min(|E|,|X|-n_{min}))$ in the best case for which we can halve the size of the graph in each iteration, and $r=\min(|E|,|X|-n_{min})$ in the worst case where we can contract only one edge per iteration. 
    \item[Upsampling a node]  We might also have to upsample graphs. Upsampling one node in a graph of cardinality $n$ has time complexity $\mathcal{O}(n)$. We have to upsample $u=\max(0,n_{max} - |X|)$ nodes for a graph size of at most $n_{max}$.
\end{description}

Altogether, considering the steps detailed above we arrive at the following costs: 
For spread-based edge scores, we get a time complexity of \(C_{Coarse}= \mathcal{O}(r(|E|\cdot C_{Div}+|E|\log(|E|)) + |E||X| + u \cdot n_{max}) = 
\mathcal{O}(r|E|( C_{Div}+\log(|E|)) + |E||X| + n_{max}^2).
\)
This is more expensive than random edge selection. In fact, 
when generating and sorting random edge scores, we obtain the time complexity \(
C_{Coarse} = \mathcal{O}(r(|E|+|E|\log(|E|)) + |E||X| + u \cdot n_{max}) = 
\mathcal{O}(|E|(r\cdot \log(|E|) + |X|) + n_{max}^2)
\). 

Note that we can further speed up the coarsening process for random edge scores by not first generating random edge scores and sorting them, bur rather directly randomly permuting the edges. Then we can continue as above with the ordering of the edges given by the random permutation. This will improve the time complexity for computing the sequence of coarsened graphs using random edge scores to $C_{Coarse} = \mathcal{O}(|E|(r + |X|)+ u \cdot n_{max})= \mathcal{O}(|E|(r + |X|)+ n_{max}^2)$. Note that in case $n_{max} <|X|$, we do not need to do any upsampling, and the time complexity for random edge contractions further reduces to $C_{Coarse}=\mathcal{O}(|E|(r + |X|))$.

   
\paragraph{Computing the diversity curves}
To compute diversity curves, we calculate the diversity measure at every cardinality $i \in \mathcal{I}$ for graphs with at most $n_{max}$ nodes, which takes $\mathcal{O}(|\mathcal{I}|\cdot n_{max}^3)$ time when using spread.
In total, the time complexity of computing the diversity curves is given by  
\begin{itemize}[noitemsep,topsep=0pt,parsep=0pt,partopsep=0pt,leftmargin=*]
    \item 
$C_{Coarse} + C_{Div} = \mathcal{O}(r|E|( C_{Div}+\log(|E|)) + |E||X|+|\mathcal{I}|\cdot n_{max}^3)$ using spread-based edge scores,

\item $C_{Coarse} + C_{Div} = \mathcal{O}(|E|(r\cdot \log(|E|) + |X|) + |\mathcal{I}|\cdot n_{max}^3)$ when sorting random edge scores, or 

\item $C_{Coarse} + C_{Div} = \mathcal{O}(|E|(r + |X|) + |\mathcal{I}|\cdot n_{max}^3)$ when permuting edges randomly.

\end{itemize}

Overall, we find that by using coarsening as one of their core components, diversity curves remain reasonably efficient to compute in practice. Further, our framework allows for investigating even faster coarsening methods or graph invariants in future work.

\subsection{Diversity curves collapse to the same diversity} 
In this subsection, we describe how diversity curves behave if we coarsen graphs as much as possible 
by performing consecutive edge contractions until there are no more edges to contract. This illustrates that eventually, all diversity curves will reach the same values at low cardinalities.

\ThmCollapse*

\begin{proof}
 Consider a graph $G$ with exactly $c$ connected components. Doing edge contractions in a graph does not change its number of connected components. For any possible sequence of edge contractions, we will arrive at the graph $K_c$, the graph on  $c$ vertices with no edges. The spread of this graph is $\Div(K_c) = c$. In particular, a connected graph will collapse to $K_1$, the graph on $1$ vertex, with spread $\Div(K_1) = 1$. For graphs with more than one connected component, 
 we perform the linear interpolation, which gives us values of $\DivCurve(K_c)_i=i$ for $1 \leq i < c$. For any two graphs $G$ and $H$ with the same number $c$ of connected components we thus have $\DivCurve(G)_i = \DivCurve(H)_i = i$ for $i \in \{1,\dots c\}$. 
 Note also that since the number of connected components is $c$, for a graph with $c+1$ vertices this is possible only if there exists precisely one edge. The spread of this graph is $c-1 + \frac{2}{1+\exp(1)}$.  
 Hence, we have shown that $\DivCurve(H)_{c+1} = \DivCurve(G)_{c+1} \neq c+1$.
\end{proof}

\subsection{Diversity curves as graph invariants}
\label{app:invariant}
    
To study the invariance of the diversity curves under isomorphism, we consider the diversity curves under all possible edge contractions. To formally define what we mean by this, let us introduce some notation. For a graph $G$ and a sequence $s_e = \{e_i\}_{1 \leq i \leq k}$ of $m\geq k\geq 0$ of edges in $G$, we denote by $G/\{e_i\}_{1 \leq i\leq k}$ the graph obtained from $G$ after iteratively contracting the edges in the sequence $s_e$. We say that the sequence of coarsened graphs $\mathcal{G}_\mathcal{I} = \{G_i\}_{i \in \mathcal{I} }$ is induced by the edge sequence $\{e_i\}_{i \leq k}$ if $\mathcal{I} = \{n-k,\dots,n\}$ and $G_i = G/\{e_j\}_{1 \leq j\leq n-i}$ for any $i\in \mathcal{I}, i < n$ and $G_n = G$. We consider the multiset of all valid edge contraction sequences in the graph $G$ of length $0\leq k \leq m$ by $\mathcal{E}_k(G) = \{\{ \; \{e_i\}_{i\leq k} | \forall i \leq k \; e_i \in E(G/\{e_j\}_{j\leq i-1}\}\}$. 
We can now consider the multiset of diversity curves associated to each sequence of $k$ edge contractions, that is $\{\{\DivCurve(\mathcal{G}_\mathcal{I}) |  \exists \{e_i\}_{i\in\mathcal{I}} \in \mathcal{E}_k \text{ such that } \mathcal{G}_\mathcal{I} \text{ is induced by } \{e_i\}_{i\in\mathcal{I}} \}\}$, which is what we mean by the multisets of diversity curves under all possible edge contractions.

\ThmInvariance*
\begin{proof}
    The statement follows immediately from the fact that the spread of a graph is invariant under graph isomorphism, and that for two isomorphic graphs  $G \cong H$, contracting an edge $e$ in $G$ and the corresponding edge $f$ in $H$ gives two isomorphic graphs. Hence the multisets of sequences of coarsened graphs induced by all possible edge contractions $\{\{\mathcal{G}_\mathcal{I} | \exists \{e_i\}_{i\leq k} \in \mathcal{E}_k(G) \text{ such that } \mathcal{G}_\mathcal{I} \text{ is induced by } \{e_i\}_{i\leq k}\}\}$ and $\{\{\mathcal{H}_\mathcal{I} | \exists \{e_i\}_{i\leq k} \in \mathcal{E}_k(H) \text{ such that } \mathcal{H}_\mathcal{I} \text{ is induced by } \{e_i\}_{i\leq k}\}\}$ for up to any $m \geq k\geq 0$ edges are equal and therefore the multisets of diversity curves under all possible edge contraction sequences are also equal.
\end{proof}

In practice, we cannot compute all possible of edge contraction sequences, but rather have to choose a subset of all possible edge contractions for each graph. Nevertheless, we can say the following:

\begin{prop}
    Let $G$ and $H$ be two isomorphic graphs and $\{e_i\}_{1 \leq i \leq k}$ a sequence of edges in $G$. We consider the coarsened graphs $\mathcal{G}_\mathcal{I}$ induced by the sequence $\{e_i\}_{1 \leq i \leq k}$. There exists a sequence of coarsened graphs $\mathcal{H}_\mathcal{I}$ of $H$ such that $\DivCurve(\mathcal{G_\mathcal{I}) = \DivCurve(\mathcal{H}_\mathcal{I})}$.
\end{prop}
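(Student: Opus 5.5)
The plan is to transport the edge sequence along a fixed isomorphism and check, step by step, that isomorphic graphs are produced. Fix an isomorphism $\phi\colon X(G)\to X(H)$, so $(x,y)\in E(G)$ if and only if $(\phi(x),\phi(y))\in E(H)$. Define the candidate edge sequence in $H$ by $f_1 := \phi(e_1)$, meaning $f_1 = (\phi(u),\phi(v))$ when $e_1=(u,v)$. Later edges are defined using isomorphisms between the intermediate coarsened graphs, built inductively as below. The goal is to show that the sequence $\{f_i\}_{1\leq i\leq k}$ is a valid contraction sequence in $H$. The induced coarsened graphs $\mathcal{H}_\mathcal{I}=\{H_i\}_{i\in\mathcal{I}}$ should then satisfy $H_i\cong G_i$ for every $i\in\mathcal{I}=\{n-k,\dots,n\}$.

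The key lemma is the one-step statement already used in the proof of \Cref{thm:collapse}. If $\phi\colon G\to H$ is an isomorphism and $e=(u,v)\in E(G)$, then $\phi$ restricted to $X(G)\setminus\{v\}$ is an isomorphism $G/e\to H/\phi(e)$. I would verify this directly from the contraction definition in \Cref{app:methods edge pooling}. The vertex sets correspond, since the retained vertex $u$ maps to the retained vertex $\phi(u)$. The new edge set of $G/e$ keeps the edges not incident to $v$ and adds $\{(u,x)\mid x\in\mathcal{N}_G(v)\}$ without loops or multi-edges. Since $\phi(\mathcal{N}_G(v))=\mathcal{N}_H(\phi(v))$, the image of this set is exactly the edge set of $H/\phi(e)$. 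The only bookkeeping point is orientation: the edge must be contracted in $H$ with the same choice of surviving endpoint. If the convention depends on an ordering of the endpoints, I order $\phi(e)$ as $(\phi(u),\phi(v))$. Alternatively, $G/(u,v)$ and $G/(v,u)$ are isomorphic via relabelling, so the choice does not matter up to isomorphism.

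Next I would induct on $i$. Suppose $\phi_{i-1}\colon G/\{e_j\}_{j\leq i-1}\to H/\{f_j\}_{j\leq i-1}$ is an isomorphism. Since $e_i$ is an edge of the left-hand graph, its image $f_i:=\phi_{i-1}(e_i)$ is an edge of the right-hand graph. This guarantees validity of the sequence, i.e.\ membership in $\mathcal{E}_k(H)$. The lemma then gives $\phi_i$. Consequently $G_i\cong H_i$ for all $i\in\mathcal{I}$. Spread is invariant under isomorphism, since shortest-path distances are preserved and $\Div$ is a permutation-invariant sum. Hence $\Div(G_i)=\Div(H_i)$ for each $i$, which gives $\DivCurve(\mathcal{G}_\mathcal{I})=\DivCurve(\mathcal{H}_\mathcal{I})$. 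If linear interpolation below the number of components is used, both curves share the same component count, so the interpolated values agree as well.

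The main obstacle is not conceptual but notational. The edge $f_i$ must be defined through the \emph{intermediate} isomorphism $\phi_{i-1}$ rather than the original $\phi$, because contracted edges change the vertex labels and edge sets. The one-step lemma must also respect the non-symmetric convention for which endpoint survives. I would handle both points explicitly in the induction hypothesis.
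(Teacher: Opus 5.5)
Your proof is correct and follows the same approach as the paper's: transport the edge sequence along an isomorphism $\phi$, argue inductively that $G_i \cong H_i$ at every step, and conclude using the isomorphism invariance of spread. Under the paper's convention that the retained endpoint keeps its label, each of your intermediate isomorphisms $\phi_{i-1}$ is just the restriction of $\phi$, so your $f_i$ coincide with the paper's $\phi(e_i)$. Your version spells out the one-step contraction lemma and the orientation bookkeeping, which the paper leaves implicit.
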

\begin{proof}
     Let $\phi:G \to H$ be an isomorphism between the two graphs. Let $\mathcal{H}_\mathcal{I}$ be the sequence of coarsened graphs induced by the edge sequence $\{\phi(e_i)\}_{1 \leq i \leq k}$. Since $G_n \cong H_n$, it iteratively also follows for any $n>i\geq n-k$ that $G_i \cong H_i$ and hence $\DivCurve(\mathcal{G_\mathcal{I}) = \DivCurve(\mathcal{H}_\mathcal{I})}$.
\end{proof}
Hence, even if we can not compute all possible edge sequences, there always exists an edge sequence such that the diversity curves are equal, and if we choose the edges in a way that is invariant under the isomorphism, we will find the corresponding sequence.

\subsection{Diversity curves increase expressivity} 


   
In this section, we want to study the expressivity of the diversity curves and spread itself. As detailed in \Cref{sec:theory} and \Cref{app:invariant}, we consider the diversity curves over all possible edge contractions for this discussion. Further, we define what it means to increase expressivity in \Cref{def:expressivity} following existing work by  \citet{lachi2025expressive}, who previously showed that edge contraction pooling layers do not only preserve, but also increase expressivity. 
Our first result gives an example which shows that tracking the spread over a coarsening of graphs increases the expressivity compared to just considering the spread at the original cardinality of the graphs. 

\ThmCoarseningExpressivity*
\begin{proof}
    It is clear that diversity curves are at least as expressive as spread, meaning that for any two graphs $G$ and $H$ with $\Div(G) \neq \Div(H)$, also $\DivCurve(G) \neq \DivCurve(H)$ since $\DivCurve(G)_{|G|} = \Div(G) \neq \Div(H) = \DivCurve(H)_{|H|}$ for any coarsening of $G$ and $H$ respectively, which implies that the multisets of the diversity curves over all possible edge contractions differ, since every diversity curve over a coarsening of $G$ differs in at least one component from a diversity curve over any coarsening of $H$. \\
    Consider the two graphs $G_A, G_B$ in \Cref{fig:graphs_examples}, where $G_A$ is the graph on the left, and $G_B$ is the so-called house graph on the right. They have equal spread $\Div(G_A) = \Div(G_B) = \frac{2}{1+3\exp(-1)+\exp(-2)} + \frac{3}{1+2\exp(-1)+2\exp(-2)} \approx 2.39$. The one-edge contractions of $G_A$ are all isomorphic to the diamond graph $D$, the four-cycle with one diagonal edge, which has spread $\Div(D) = \frac{2}{1+2\exp(-1)+\exp(-2)} + \frac{2}{1+3\exp(-1)} \approx 2.02$. But the four-cycle $C_4$ is in the multiset of one-edge contractions of $G_B$ and has spread $\frac{4}{1+2\exp(-1)+\exp(-2)}\approx 2.14$. This shows that there is an edge contraction sequence for $G_B$ such that the diversity curve of the induced coarsening of the graphs differs from any diversity curve over a coarsening of $G_A$. Hence, the diversity curves over all possible edge contractions of $G_A$ and $G_B$ differ. \\
    The diagram on the right of \Cref{fig:graphs_examples} shows the average spread over all possible edge contractions at each cardinality for the two graphs $G_A$ and $G_B$. This demonstrates that the multisets of diversity curves differ across all possible choices of contracting one edge. 
\end{proof}

For our next statement, we first introduce another notion of comparing expressivity of two functions. 
\begin{definition}
     For two functions $\varphi$ and $\psi$ on the set of graphs, we say that $\varphi$ and $\psi$ are \emph{incomparable} if there exists graphs $G_A, G_B, G_C$, and $G_D$ such that $\varphi(G_A) \neq \varphi(G_B)$ and $\psi(G_A) = \psi(G_B)$, and $\varphi(G_C) = \varphi(G_D)$ and $\psi(G_C) \neq (G_D)$. 
\end{definition}


\begin{prop}
    The spread and the Weisfeiler-Leman (1-WL) test are incomparable in terms of expressivity.
    \label{prop:wl_spread_incompatible}
\end{prop}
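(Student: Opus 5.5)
The proof needs two pairs of graphs, one for each direction in the definition of incomparability. The first pair will be indistinguishable by 1-WL but have different spread. The second pair will have equal spread but be distinguished by 1-WL. I would reuse the pair from the proof of \Cref{thm:coarsening_expressivity} for the second direction, and take a standard regular-graph pair for the first.

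\textbf{Spread distinguishes a 1-WL-indistinguishable pair.} Take $G_A = C_6$ and $G_B = C_3 \sqcup C_3$, the disjoint union of two triangles.
\begin{itemize}
\item Both graphs are $2$-regular on six vertices. Colour refinement therefore stabilises immediately with all vertices sharing one colour, so 1-WL cannot separate them.
\item In $C_6$ every vertex has two vertices at distance $1$, two at distance $2$ and one at distance $3$. Hence
\[
\Div(C_6) = \frac{6}{1+2e^{-1}+2e^{-2}+e^{-3}} .
\]
\item In $C_3 \sqcup C_3$ the distance between the two components is $\infty$, so those terms contribute $e^{-\infty}=0$. Hence
\[
\Div(C_3\sqcup C_3) = \frac{6}{1+2e^{-1}} .
\]
\item These differ because $2e^{-2}+e^{-3}>0$.
\end{itemize}
If one prefers to avoid infinite distances, a connected $2$-regular pair is not available. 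A connected alternative is two non-isomorphic connected $3$-regular graphs on the same number of vertices, such as the prism and $K_{3,3}$ on six vertices, which differ in their distance distributions. I would keep the $C_6$ example as the main one, since it is consistent with the paper's convention for shortest-path distance.

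\textbf{Equal spread but distinguishable by 1-WL.} Use $G_A$ and the house graph $G_B$ from the proof of \Cref{thm:coarsening_expressivity}, where equal spread was already established. I identify $G_A$ with $K_{2,3}$. This matches the spread formula given there:
\begin{itemize}
\item the two degree-$3$ vertices each have row sum $1+3e^{-1}+e^{-2}$;
\item the three degree-$2$ vertices each have row sum $1+2e^{-1}+2e^{-2}$.
\end{itemize}
It also matches the claim that every one-edge contraction of $G_A$ is a diamond.

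The main obstacle is that the two graphs have the same degree sequence $(3,3,2,2,2)$. So the first refinement round of 1-WL does not separate them, and I need a second round.
\begin{itemize}
\item In $K_{2,3}$ the two degree-$3$ vertices are non-adjacent. Each has neighbour-degree multiset $\{\!\{2,2,2\}\!\}$.
\item In the house graph the two degree-$3$ vertices are adjacent. Each has neighbour-degree multiset $\{\!\{3,2,2\}\!\}$.
\end{itemize}
The colour histograms therefore differ after the second iteration, and 1-WL distinguishes the graphs.

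Together the two pairs satisfy the definition of incomparability, which proves Proposition~\ref{prop:wl_spread_incompatible}.
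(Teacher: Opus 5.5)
Your proof is correct and follows the paper's argument exactly: $C_6$ versus $C_3 \sqcup C_3$ for one direction, and for the other $G_A$ (your identification $G_A = K_{2,3}$ matches the paper's spread formula) versus the house graph, separated by 1-WL through the neighbour degrees of the degree-$3$ vertices. One correction to your aside: the prism and $K_{3,3}$ do \emph{not} work as a connected alternative, because both are cubic of diameter $2$, so every vertex has three vertices at distance $1$ and two at distance $2$, and both graphs have spread $6/(1+3e^{-1}+2e^{-2})$; a valid connected cubic pair is, e.g., the cube $Q_3$ and the Wagner graph on $8$ vertices, with spreads $8/(1+3e^{-1}+3e^{-2}+e^{-3})$ and $8/(1+3e^{-1}+4e^{-2})$ respectively.
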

\begin{proof}
    We give concrete examples of two pairs of graphs. First, consider the disjoint union of two three-cycles $C_3 \sqcup C_3$ and the six-cycle $C_6$, which are known to be indistinguishable by 1-WL. The spread of the disjoint union of two three-cycles is $\Div(C_3 \sqcup C_3) = \frac{6}{1+2\exp(-1)} \approx 3.46$, whereas the spread of the $6$-cycle is $\Div(C_6) = \frac{6}{1+2\exp(-1)+2\exp(-2)+\exp(-3)} \approx 2.92$. \Cref{fig:graphs_examples_16} illustrates this example.\\
    Second, consider again the two graphs in \Cref{fig:graphs_examples}, which have the same spread as we have seen in the proof of \Cref{thm:coarsening_expressivity}. Those two graphs are distinguishable by 1-WL, since the degrees of the neighbourhoods of the vertices of degree $3$ differ in the two graphs. Hence, we show that the spread and 1-WL are incompatible in terms of expressivity. 
\end{proof}

\begin{prop}
    There exists a pair of graphs that is 1-WL-indistinguishable, indistinguishable by their spread alone, but distinguishable by their diversity curves over all possible edge contractions.
    \label{prop:wl_ind}
\end{prop}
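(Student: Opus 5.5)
We propose to take $G_A = K_{3,3}$ and $G_B$ the triangular prism $C_3 \square K_2$. Both graphs are $3$-regular on six vertices. The plan is to show, in order, that the pair is (i) 1-WL-indistinguishable, (ii) of equal spread, and (iii) separated by the multisets of diversity curves over all possible edge contractions. We use the same strategy as in the proof of \Cref{thm:coarsening_expressivity}: exhibit one contraction of $G_B$ whose spread is attained by no contraction of $G_A$.

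\emph{Step (i).} Colour refinement on a $d$-regular graph is stable after the first round, with every vertex receiving the same colour. Two $3$-regular graphs on six vertices therefore produce identical colour histograms, so they are 1-WL-indistinguishable.

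\emph{Step (ii).} Both graphs have diameter $2$. Every vertex has itself at distance $0$, three vertices at distance $1$ and two at distance $2$. In $K_{3,3}$ these are the other side and the remaining two vertices of its own side; in the prism they are the triangle partner, the rung partner and the two remaining vertices of the other triangle. Hence
\begin{equation*}
\Div(G_A)=\Div(G_B)=\frac{6}{1+3\exp(-1)+2\exp(-2)}.
\end{equation*}

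\emph{Step (iii).} The key observation is a formula for spread on graphs of diameter at most $2$. If $H$ has $n$ vertices and diameter at most $2$, then
\begin{equation*}
\Div(H)=\sum_{x} f(\deg x), \qquad f(d)=\frac{1}{1+d\exp(-1)+(n-1-d)\exp(-2)},
\end{equation*}
and $f$ is strictly decreasing in $d$. So on such graphs spread depends only on the degree sequence.

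Contract an edge $uv$ of $K_{3,3}$. Since $K_{3,3}$ is bipartite, $u$ and $v$ have no common neighbours. The merged vertex is therefore adjacent to all four remaining vertices, and each of those keeps degree $3$. This gives diameter $2$ and degree sequence $(4,3,3,3,3)$. All edges of $K_{3,3}$ are equivalent, so this is the only one-edge contraction up to isomorphism.

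In the prism, contract a triangle edge. The merged vertex has degree $3$, the common neighbour (the third triangle vertex) drops to degree $2$, and the other three vertices keep degree $3$. The result has five vertices, diameter at most $2$ and degree sequence $(3,3,3,3,2)$. This sequence is dominated entrywise by $(4,3,3,3,3)$ and is strictly smaller in two entries. Monotonicity of $f$ then gives a strictly larger spread than every one-edge contraction of $K_{3,3}$.

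Consequently some diversity curve of $G_B$ differs at cardinality $5$ from every diversity curve of $G_A$, so the multisets differ.

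The only step needing care is (iii). We must check that the contracted graphs really have diameter at most $2$, so the degree formula applies; this holds since contraction does not increase distances. We must also check that we account for all contractions of $K_{3,3}$, which follows from its edge-transitivity. If a larger example were preferred, the same degree-sequence argument applies to strongly regular pairs such as the Shrikhande graph versus the $4\times4$ rook's graph. For that pair, however, one-edge contractions coincide, and two contractions would be needed.
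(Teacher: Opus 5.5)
Your proof is correct, and it reaches the statement by a different route from the paper.

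\textbf{The paper's route.} The paper uses a non-regular pair of diameter $3$: the $6$-cycle with chords $(1,5),(2,4)$ versus the $6$-cycle with chords $(1,4),(2,5)$. It checks that the degree colouring is already stable. It then separates the graphs by numerical evaluation of spreads. The house graph (spread $\approx 2.39$) occurs among the one-edge contractions of the first graph. The contractions of the second graph only have spreads $\approx 2.28$ and $\approx 2.29$.

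\textbf{Your route.} You take the canonical $1$-WL-indistinguishable pair $K_{3,3}$ versus the prism, where indistinguishability is immediate from regularity. You replace numerics by a lemma: on graphs of diameter at most $2$, spread equals $\sum_x f(\deg x)$ with $f$ strictly decreasing. This makes both the equality of spreads and the separation exact. The spread of the triangle-edge contraction of the prism minus that of the wheel $K_{3,3}/e$ is $f(2)-f(4)>0$.

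The lemma also explains \emph{why} the example works. For diameter-$2$ graphs, spread sees only the degree sequence, which $1$-WL already sees. Contracting an edge lowers the degree of every common neighbour of its endpoints. This exposes the number of triangles through that edge, which $1$-WL cannot detect in general. It is also why you were right to use a triangle edge: contracting a rung of the prism gives the same wheel as $K_{3,3}$.

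The paper's example ties back to the house graph of \Cref{thm:coarsening_expressivity}. Yours is computation-free and more conceptual.

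\textbf{Two small remarks.}
\begin{enumerate}
\item In Step (ii), a prism vertex has two triangle partners at distance $1$, not one.
\item In your Shrikhande aside, the lemma still applies. However, the degree sequences after two contractions (for example, collapsing a triangle) differ but are not entrywise comparable. There you would need to evaluate $f$ explicitly rather than rely on monotonicity alone.
\end{enumerate}
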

\begin{proof}
    Consider the two graphs $G_C$ and $G_D$ in \Cref{fig:graphs_examples_2}, where $G_C$ is the $6$-cycle graph with the additional edges $(1,5)$ and $(2,4)$ on the left, and $G_D$ is the $6$-cycle graph with the additional edges $(1,4)$ and $(2,5)$ on the right. They are indistinguishable by the $1$-WL test as the colouring obtained from the degrees of each node is already a stable colouring. Both graphs have the same spread, concretely $\Div(G_C) = \Div(G_D) = \frac{2}{1+2\exp(-1)+2\exp(-2)+\exp(-3)} + \frac{4}{1+3\exp(-1)+2\exp(-2)} \approx 2.66$. As is shown in \Cref{fig:graphs_examples_2} on the right, the average spread over the multiset of all possible edge contractions differ already after one edge contraction in each graph. Concretely, the house graph $G_B$ is in the multiset of one-edge contractions of $G_C$, its spread is $ \Div(G_B) = \frac{2}{1+3\exp(-1)+\exp(-2)} + \frac{3}{1+2\exp(-1)+2\exp(-2)} \approx 2.39$, whereas the multiset of one-edge contractions of $G_D$ consists of two isomorphism classes of graphs, with spread $\frac{1}{1+2\exp(-1)+2\exp(-2)} + \frac{4}{1+3\exp(-1)+\exp(-2)} \approx 2.28$ and $\frac{1}{1+4\exp(-1)} + \frac{2}{1+2\exp(-1)+2\exp(-2)} + \frac{2}{1+3\exp(-1)+\exp(-2)} \approx 2.29$.  This shows that there is an edge contraction sequence for $G_C$ such that the diversity curve of the induced coarsening of the graphs differs from any diversity curve over a coarsening of $G_D$. Hence, the diversity curves over all possible edge contractions of $G_A$ and $G_B$ differ.
\end{proof}
In total, these results show that if we have a pipeline that already distinguishes graphs on the level of 1-WL, then adding the diversity curves will increase the expressivity. Empirically, we demonstrate this in \Cref{app:expressiv_experiment} and \Cref{sec:expressivity}.

\begin{figure}[tbh]
    \centering
    \includegraphics[width=0.6\linewidth, clip, trim=0cm 0cm 0cm 1cm]{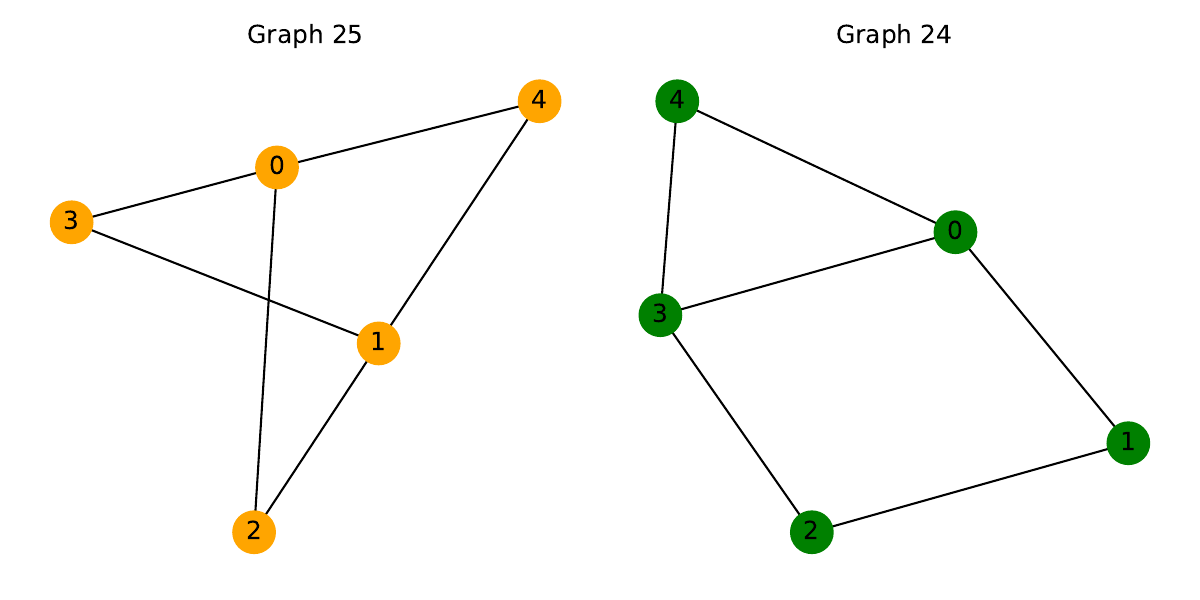}
    \includegraphics[width=0.35\linewidth]{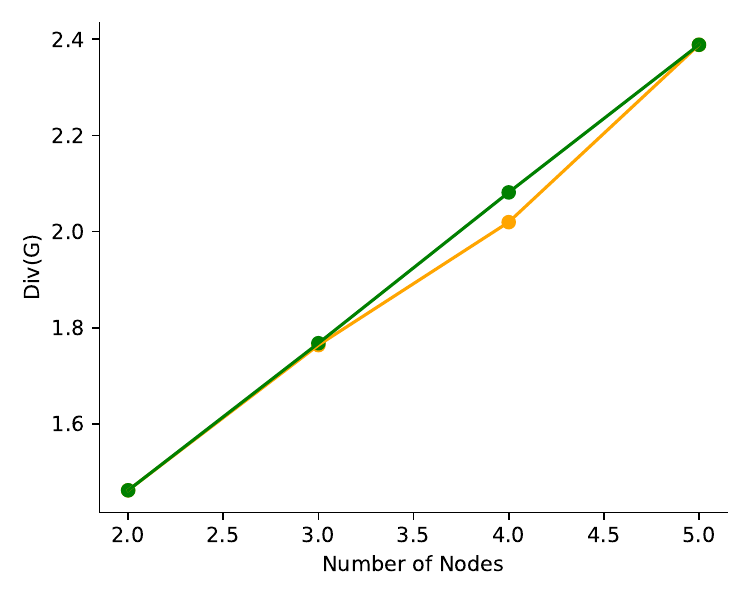}
    \caption{Graphs with identical diversity, $\Div(G)$ as computed via spread using shortest-path distances, that can be distinguished through edge pooling by their diversity curves, $\DivCurve(G)$. This example is used in  \Cref{thm:coarsening_expressivity} and \Cref{prop:wl_spread_incompatible}.}
    \label{fig:graphs_examples}
\end{figure}

\begin{figure}[tbh]
    \centering
    \includegraphics[width=0.6\linewidth, clip, trim=0cm 0cm 0cm 1cm]{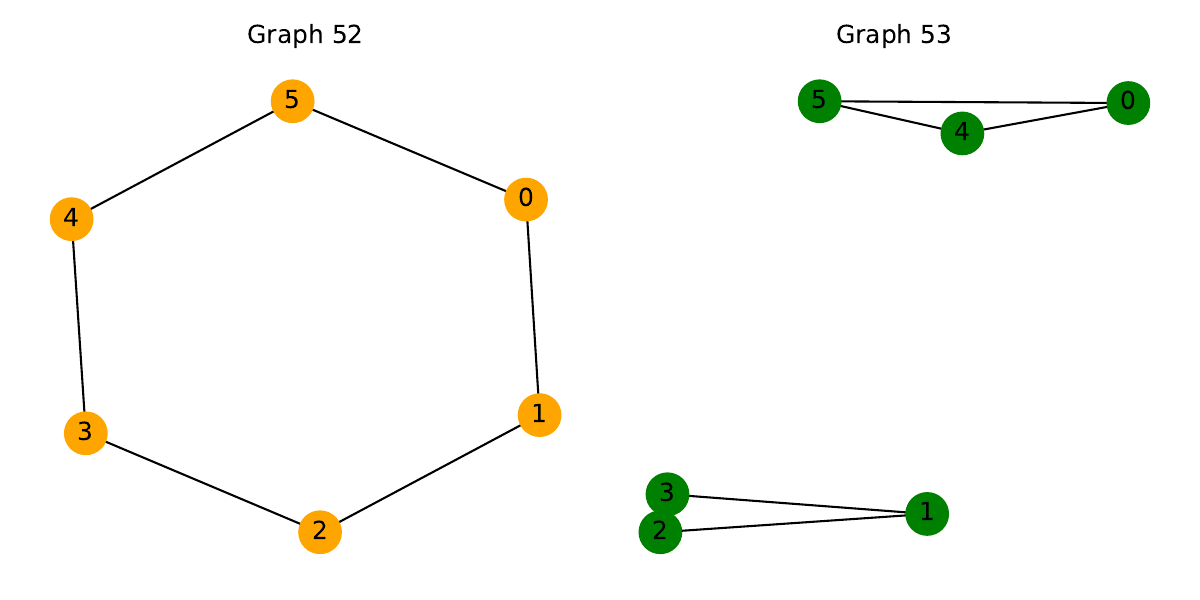}
    \includegraphics[width=0.35\linewidth]{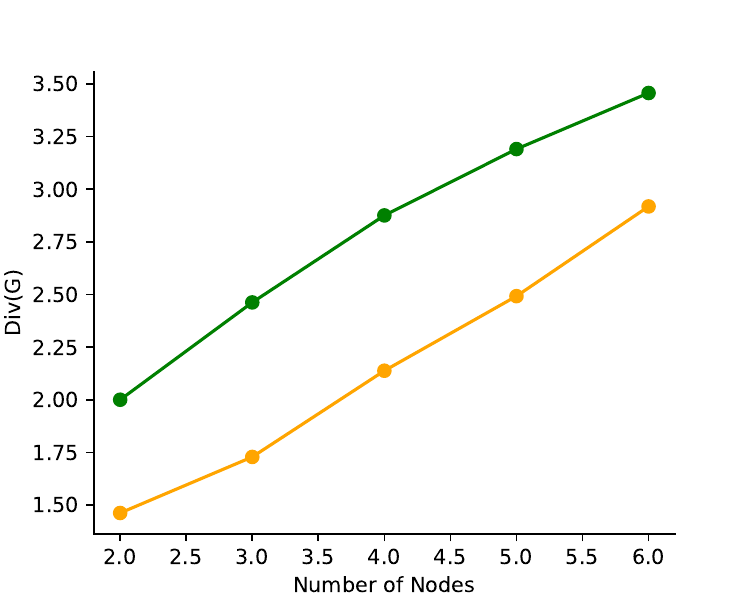}
    \caption{Graphs that are 1-WL indistinguishable, but can be distinguished through their diversity, $\Div(G)$, or by their diversity curves, $\DivCurve(G)$. This example is used in \Cref{prop:wl_spread_incompatible}.}
    \label{fig:graphs_examples_16}
\end{figure}

\begin{figure}[tbh]
    \centering
    \includegraphics[width=0.6\linewidth, clip, trim=0cm 0cm 0cm 1cm]{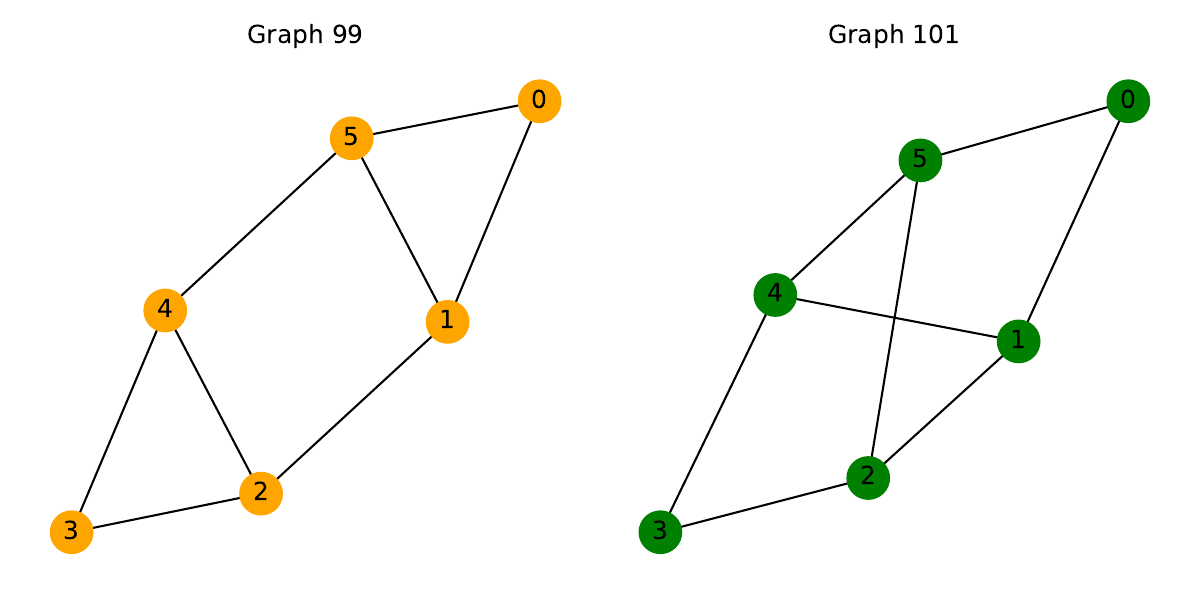}
    \includegraphics[width=0.35\linewidth]{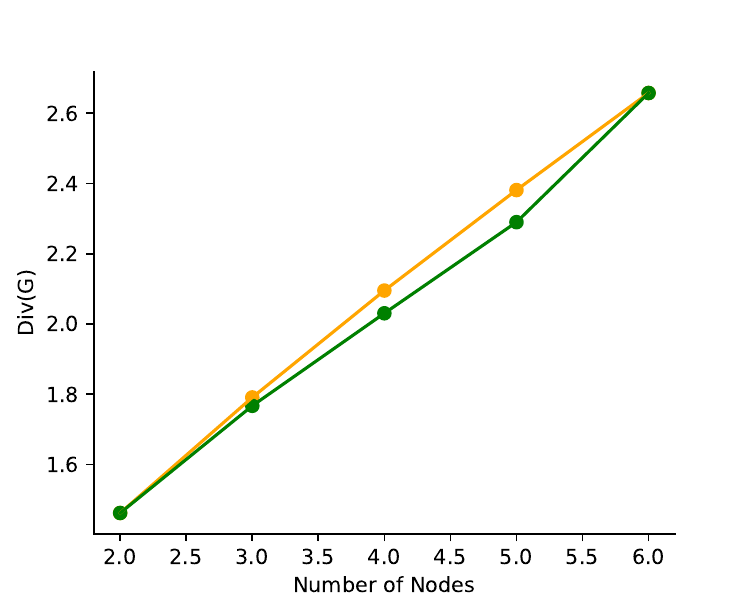}
    \caption{Graphs that are 1-WL indistinguishable and have identical diversity, $\Div(G)$ as computed via spread using shortest-path distances, that can be distinguished through edge pooling by their diversity curves, $\DivCurve(G)$. This example is used in \Cref{prop:wl_ind}.}
    \label{fig:graphs_examples_2}
\end{figure}

\clearpage

\section{Extended experiments}
\label{app:extended_experiments}
In the following section, we describe extended experimental details of all of our experiments, which describe how to reproduce our results. Further, we provide empirical expressivity results and ablation studies that support our main results.

\subsection{Graph perturbations}
\label{app:extended_experiments_gp}

The perturbation experiment in \Cref{sec:pert} applies one of the following perturbation scenarios:
 \begin{inparaenum}[(i.)]
    \item \textsc{add\_edge} (\textcolor{green}{$\blacktriangle$}) addition of an edge,
    \item \textsc{remove\_edge} (\textcolor{orange}{$\filledmedsquare$}) removal of a non-bridge edge
    \item \textsc{rewire\_edge} (\textcolor{blue}{$\bullet$}) connect one of the endpoints of an edge to another vertex and
    \item \textsc{swap\_edge} (\textcolor{yellow}{$\blacklozenge$}) given two edges $(v_0, v_1)$ and $(v_2, v_3)$ construct two new edges as $(v_0, v_2)$ and $(v_1, v_3)$. 
\end{inparaenum} The perturbation degree refers to the percentage of edges that were perturbed. In each case this represents a different quantity. In (i.) the degree $p$ is the percentage of edges remaining before the graph is fully-connected, i.e. $p=1$ means enough edges have been added so that the graph is fully-connected. In (ii.) $p$ is the percentage of edges that are not bridges, i.e. edges that do not result in the creation of more connected components when removed. In cases (iii.) and (iv.), $p$ is the percentage of edges in each graph that are altered. 
We uniformly sample edges from the pool of available ones.
Across each perturbation scenario and perturbation degree $p\in\{0.1, 0.2, \dots,1\}$, we apply the perturbation and track the norm of the average diversity curve across graphs, computed using shortest-path distances at node numbers between one and the maximum number of nodes in each dataset. 
Specifically, we consider the following datasets: \texttt{PLANAR} consists of 200 graphs with 64 nodes each, \texttt{LOBSTER} consists of 128 graphs with parameters $p_1 = p_2 =0.7$  with a minimum number of 10 nodes and a maximum number of 100 nodes, \texttt{SBM} consists on 200 graphs with 20-40 nodes per community with 2-5 communities subset to all graphs with less than 60 nodes, and \texttt{COMM20} consists of 200 graphs of a stochastic block model with 2 communities  \citep{martinkus2022spectre}\footnote{We take the code for generating the datasets from \url{https://github.com/KarolisMart/SPECTRE} available under an MIT licence.}. The additional dataset \texttt{EGO} consists of 757 graphs extracted from Citeseer \citep{jang2024a} \footnote{We take the code for generating these dataset from \url{https://github.com/yunhuijang/GEEL}}. We take a maximum of 80 graphs from each dataset, which is the test split except for the case of \texttt{SBM}, for which we consider 40 graphs with less than 60 nodes.
\Cref{fig:perturbation_curves} and \Cref{fig:corr}
show the results averaged over 5 random seeds along with the standard deviation.

\begin{figure}[h]
    \centering
    \includegraphics[width=\textwidth]{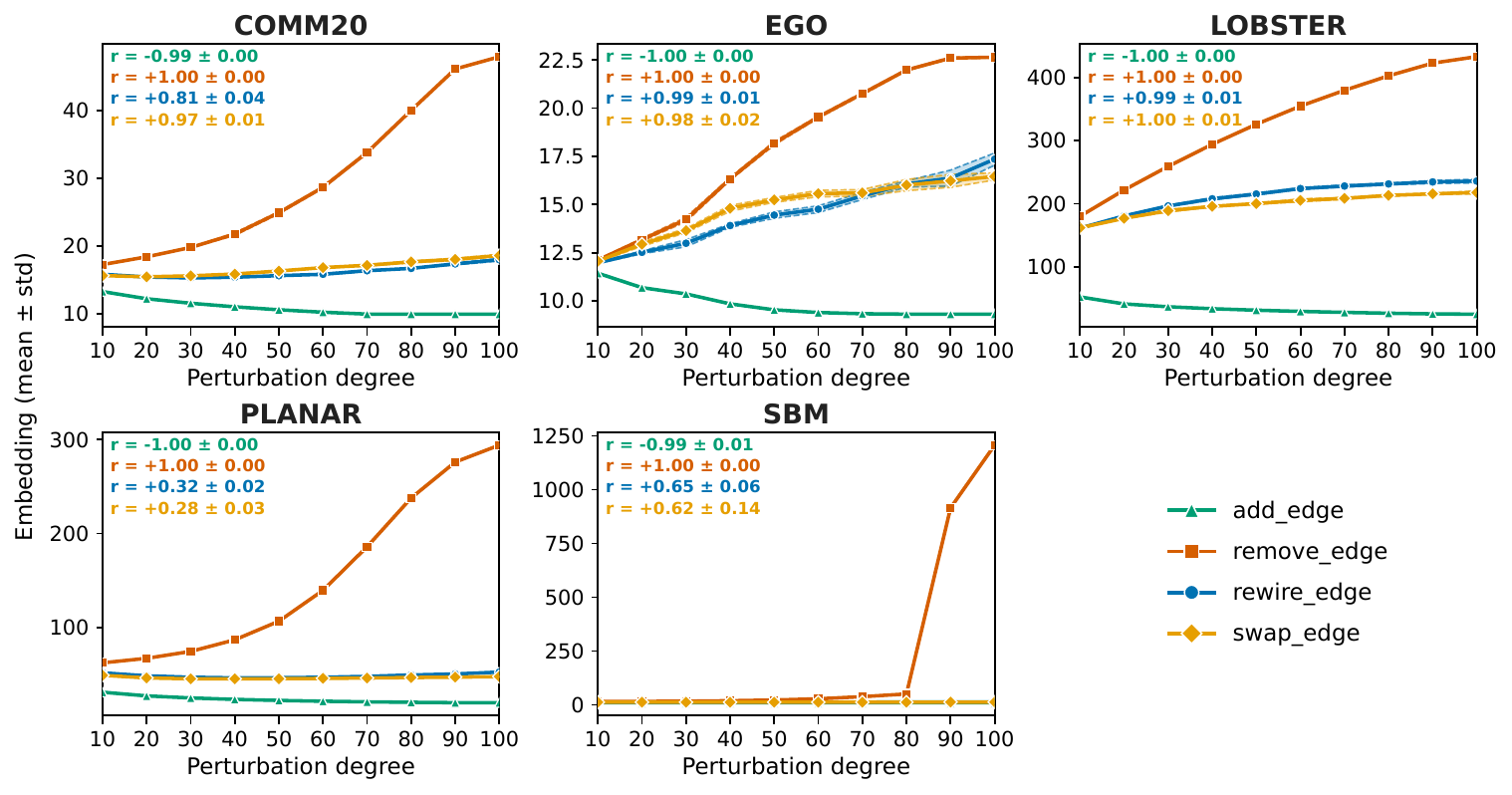}
    \caption{Diversity curves track the change in graph structure across edge perturbations. We report the norm of the mean diversity curve per dataset by perturbation degree.}
    \label{fig:perturbation_curves}
\end{figure}


\subsection{Simulated graph distributions}
\label{app:simulated_graphs}

Following our motivation of characterising graphs drawn from the same underlying distribution but across varying node sizes, we elaborate on our results from \Cref{sec:sim}.

\subsubsection{Distinguishing graph distributions}
In \Cref{sec:sim} , we investigate how well diversity curves can distinguish between graphs generated from different random graph distributions. Specifically, we generate graphs with different sizes using the Python package NetworkX~\citep{hagberg2008networkx} released under a BSD-3-Clause license.
We choose the following four random graph distributions, for which we each generate $3$ graphs for every node size $n \in \{10, \dots ,29\}$: 
\begin{itemize}
    \item Erd\H{o}s-R\'{e}nyi (ER) graphs $G_{n,p}$ with edge probability parameter $p=0.75$.
    \item Random partition (RP) graphs on $n$ vertices, which have communities of fixed sizes and edge probabilities $p_{in}$ for nodes in the same community and $p_{out}$ for nodes in different communities. We fix the community sizes of the RP graphs such that we have $3$ communities, each of about $1/3$ the size of the graph and we set $p_{in} = 0.9$, and $p_{out} = 0.1$.
    \item Random geometric (RG) graphs, which are generated by choosing $n$ nodes uniformly at random in the unit square and adding an edge for two nodes if their Euclidean distance is less than or equal to a parameter $r$, which we set to $r=0.25$.
    \item Stochastic block model (SBM) of $n$ vertices, which partitions the nodes into communities and adds edges according to the community the vertices belong to. As in the RP model, we choose probabilities $p_{in} = 0.8$ and $p_{out} = 0.05$ which specify the edge probabilities between vertices in the same cluster and different clusters respectively. For the communities we partition the node set into $4$ communities each of about $1/4$ of the graph's size.
\end{itemize}

\begin{figure}[t]
    \centering
    \includegraphics[width=0.24\linewidth]{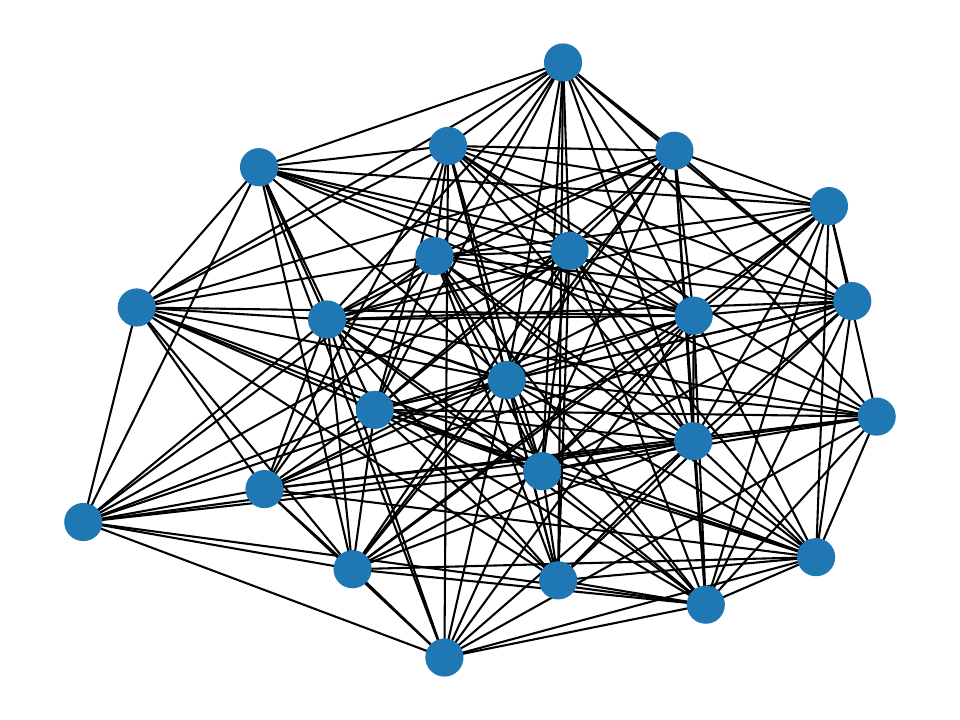}
    \includegraphics[width=0.24\linewidth]{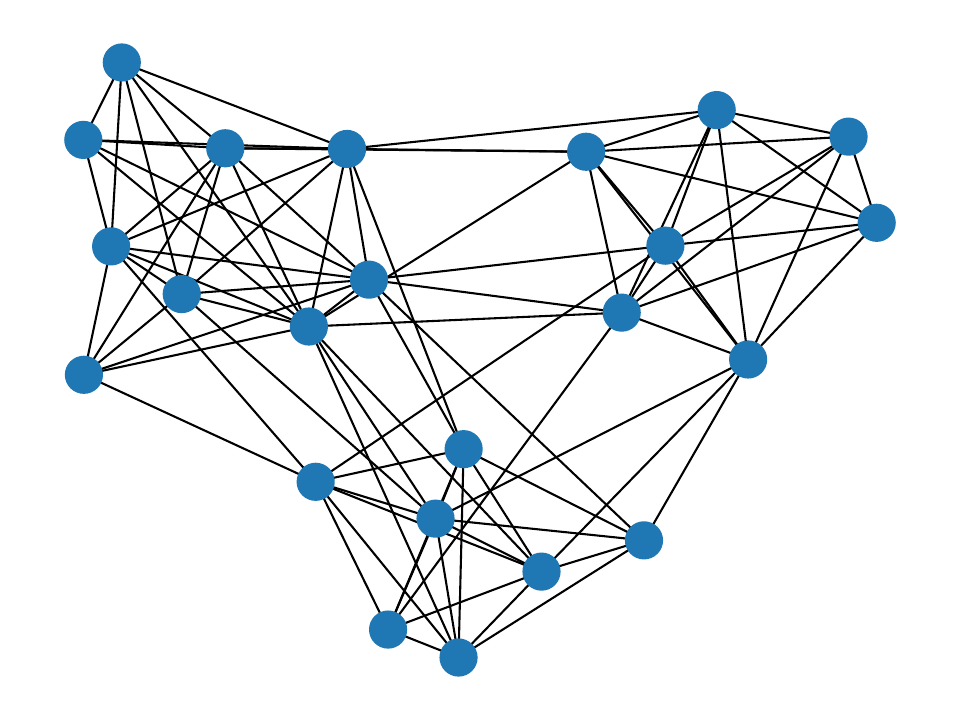}
    \includegraphics[width=0.24\linewidth]{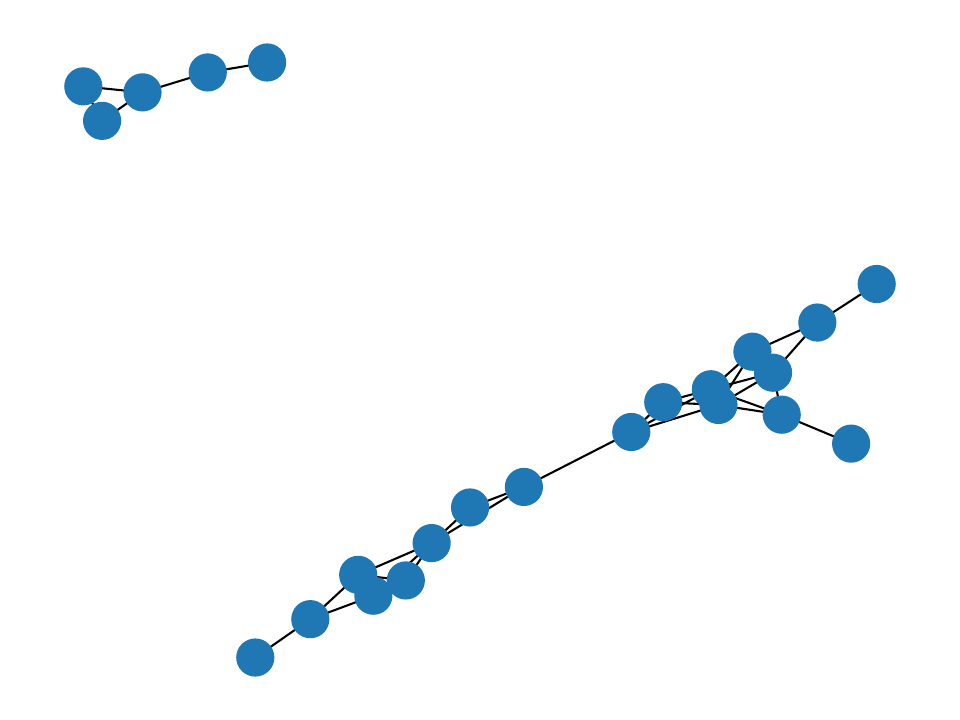}
    \includegraphics[width=0.24\linewidth]{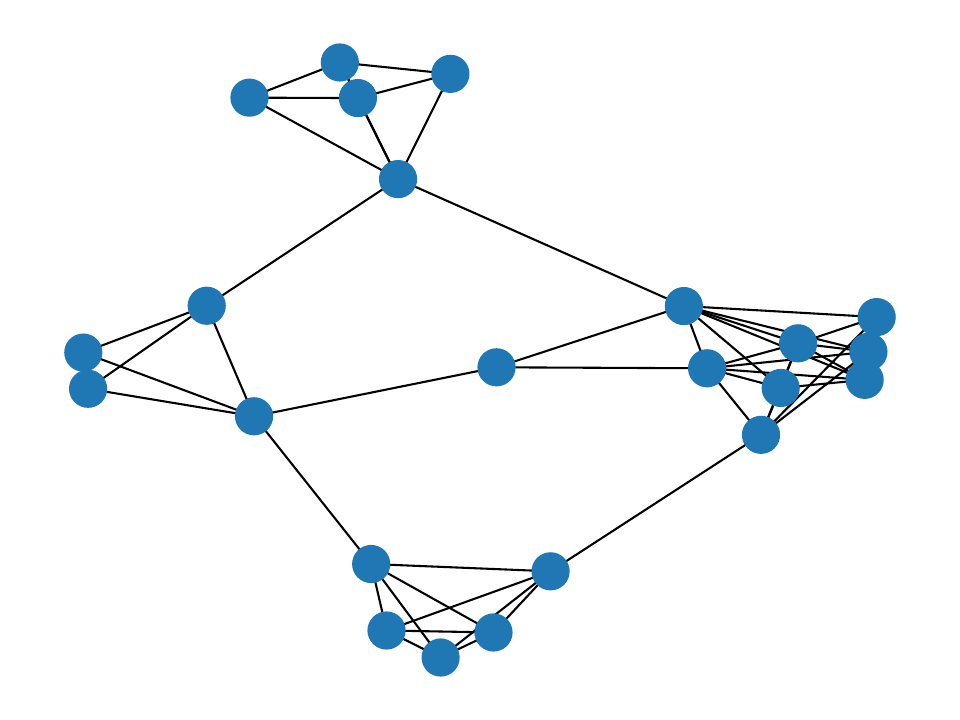}
    \caption{Graphs drawn from the distributions ER, RP, RG, SBM used in the experiment to distinguish different distributions from each other.}
    \label{fig:graphs_examples_sim}
\end{figure}

Example graphs generated by each distribution are shown in \Cref{fig:graphs_examples_sim}. Based on the graphs generated as detailed above, we aim to assess the ability of our method to distinguish between graph distributions. 
For computing the classification accuracies, we use a kNN-classifier with fixed $k=5$ and $10$-fold stratified grouped cross validation, where we group the graphs of the same node sizes together. 
To assess the clustering qualities, we report the silhouette score, Calinski--Harabasz index, and Davies--Bouldin index with the true class labels, as well as the adjusted rand indices (ARI) based on $k$-means clustering and a hierarchical clustering approach. For the $k$-means clustering we specify the correct number of clusters and use `k-means++' for the initialisation method. For the hierarchical clustering we use agglomerative clustering where we specify the correct number of clusters and as a linkage criterion we use the average over all distances between the points in two clusters. All parameters which we do not specify here are used with their default setting as implemented in scikit-learn (version 1.5.2). To obtain uncertainties for the clustering metrics, we take the mean and standard deviation over $100$ random resamples using $50\%$ of the data points each. 

\paragraph{Diversity curve computations.} The kNN-classifier as well as the distances with which we compute clustering quality scores use the $L^2$-norm between the diversity curves viewed as vectorial embedding in $\mathbb{R}^{29}$ across evaluation scales from 1 to 29. Results are reported in \Cref{tab:sim_var_dist}. For visualisation as in \Cref{fig:sim_var_dist_MDS}, we perform PCA on the diversity curves using the $L^2$-norm.

\paragraph{Spread functions as baselines.} The spread function, as introduced by~\citet{willerton2015spread}, measures how the spread of a metric space changes with respect to a scale factor $t$ of the distance. Concretely, for a graph $G$ viewed as a metric space with shortest-path distance $d_G$ and a parameter $t > 0$, we define a metric space $tG$ with the same underlying points as $G$ and with the scaled metric $t\cdot d_G$. The spread function is the spread $\Div(tG)$ as a function of $t>0$. 
As an ablation, we compare our diversity curves to this approach of the diversity function. For each graph, we compute the spread function at the graph's original size for $1 \leq t \leq 5$ on $20$ evenly spaced evaluation scales and report the classification accuracies and clustering metrics using the $L^2$-norm between the spread functions.

\paragraph{Graph embedding baselines.} We compare our results to relevant groups of unsupervised graph representation methods, namely graph kernels, spectral graph representations given by NetLSD~\citep{tsitsulin2018netlsd} and Spectral Zoo~\citep{jin2020spectral}, the topological methods TopER~\citep{tola2025toper} and a persistent homology based embedding \citep{ballester2025expressivity}, as well as the graph embeddings FEATHER~\citep{rozemberczki2020characteristic} and Graph2Vec~\citep{narayanan2017graph2vec}. 
For all the graph embeddings, NetLSD, FEATHER, and Graph2Vec, we use the vectorial representations with Euclidean distances for computing the clustering accuracies and scores. 

\paragraph{Kernel baselines.} For the graph kernels, we compute the Kernel matrix $K$ and use $-K + \max_{i,j} K_{i,j}$ as a pairwise distance to obtain clustering accuracies, silhouette scores, and the adjusted rand index with agglomerative clustering. We also use these pairwise distances for dimensionality reduction with MDS and compute all of our clustering metrics and accuracies also on the transformed data. 

\paragraph{Topological baselines.} For TopER we report for each of the filtrations described by \citet{tola2025toper} as sublevel filtrations individually and also combine all of them. We use the vectorial representations with Euclidean distances to compute clustering accuracies and scores. 
We use the same approach for the persistent homology based graph embeddings as described in the classification experiment by \citet{ballester2025expressivity}. Namely, we use different filtrations to compute persistence homology up to dimension two on the clique complexed obtained from the graphs by filling in every $3$-clique, so we obtain a simplicial $2$-complex. We transform the resulting persistence diagrams into a vectorial representation using their persistent images.  
We again use the Euclidean distances of the embeddings to compute clustering accuracies and scores.

The complete results can be found in \Cref{tab:simulations results ER RPG SBM RG}, which is an extended version of \Cref{tab:sim_acc}. Diversity curves perform similar in accuracy as the best performing baselines, and outperform the baselines in terms of almost all reported clustering metrics.

\begin{table}[h!]
    \centering
    \caption{Accuracies and clustering scores for distinguishing between the graph distributions ER, RP, SBM, and RG.
    }
    \resizebox{\textwidth}{!}{
\begin{tabular}{llllllll}
\toprule
 &  & Accuracy & Silhouette Score & Davies-Bouldin & Calinski-Harabasz & ARI k-means & ARI agglomerative \\
 & Method &  &  &  &  &  &  \\
\midrule
Diversity Curves & Shortest Path & \textbf{0.904 $\pm $ 0.059} & 0.423 $\pm $ 0.034 & 0.753 $\pm $ 0.060 & 191.237 $\pm $ 33.144 & 0.495 $\pm $ 0.071 & 0.313 $\pm $ 0.111 \\
 & Shortest Path PCA & 0.892 $\pm $ 0.065 & \textbf{0.432 $\pm $ 0.035} & \textbf{0.737 $\pm $ 0.060} & \textbf{194.064 $\pm $ 34.006} & \textbf{0.500 $\pm $ 0.075} & 0.323 $\pm $ 0.123 \\
 \midrule
Diversity Function & Spread Function Shortest Path & 0.829 $\pm $ 0.029 & 0.033 $\pm $ 0.019 & 3.949 $\pm $ 0.390 & 12.517 $\pm $ 2.420 & 0.097 $\pm $ 0.050 & 0.095 $\pm $ 0.049 \\
\midrule
Embeddings & NetLSD & 0.900 $\pm $ 0.033 & 0.055 $\pm $ 0.038 & 1.332 $\pm $ 0.112 & 38.799 $\pm $ 7.908 & 0.150 $\pm $ 0.054 & 0.092 $\pm $ 0.045 \\
 & FEATHER & 0.750 $\pm $ 0.091 & 0.108 $\pm $ 0.023 & 1.914 $\pm $ 0.287 & 74.719 $\pm $ 8.443 & 0.293 $\pm $ 0.033 & 0.277 $\pm $ 0.038 \\
 & Spectral Zoo & 0.721 $\pm $ 0.088 & 0.050 $\pm $ 0.031 & 2.875 $\pm $ 2.025 & 33.794 $\pm $ 4.979 & 0.257 $\pm $ 0.048 & 0.140 $\pm $ 0.082 \\
 & Graph2Vec & 0.504 $\pm $ 0.094 & -0.005 $\pm $ 0.001 & 7.365 $\pm $ 0.291 & 1.405 $\pm $ 0.102 & 0.023 $\pm $ 0.024 & 0.002 $\pm $ 0.008 \\
 \midrule
Kernel & Kernel WL Optimal Assignment & 0.900 $\pm $ 0.046 & 0.019 $\pm $ 0.004 & - & - & - & 0.259 $\pm $ 0.046 \\
 & Kernel Core Framework & 0.842 $\pm $ 0.064 & 0.249 $\pm $ 0.026 & - & - & - & 0.446 $\pm $ 0.077 \\
 & Kernel Core Framework MDS & 0.821 $\pm $ 0.038 & 0.266 $\pm $ 0.026 & 1.414 $\pm $ 0.178 & 91.392 $\pm $ 11.984 & 0.459 $\pm $ 0.049 & \textbf{0.472 $\pm $ 0.060} \\
 & Kernel WL Optimal Assignment MDS & 0.796 $\pm $ 0.101 & 0.045 $\pm $ 0.020 & 3.491 $\pm $ 0.551 & 13.624 $\pm $ 2.199 & 0.185 $\pm $ 0.045 & 0.202 $\pm $ 0.057 \\
 & Kernel WL MDS & 0.779 $\pm $ 0.072 & 0.239 $\pm $ 0.034 & 2.531 $\pm $ 0.443 & 21.090 $\pm $ 4.602 & 0.429 $\pm $ 0.061 & 0.273 $\pm $ 0.094 \\
 & Kernel Shortest Path & 0.771 $\pm $ 0.073 & 0.284 $\pm $ 0.044 & - & - & - & 0.426 $\pm $ 0.057 \\
 & Kernel Shortest Path MDS & 0.767 $\pm $ 0.046 & 0.274 $\pm $ 0.038 & 1.597 $\pm $ 0.169 & 57.105 $\pm $ 11.254 & 0.444 $\pm $ 0.065 & 0.423 $\pm $ 0.064 \\
 & Kernel WL & 0.746 $\pm $ 0.094 & 0.213 $\pm $ 0.029 & - & - & - & 0.031 $\pm $ 0.090 \\
 & Kernel Pyramid Match & 0.671 $\pm $ 0.039 & 0.002 $\pm $ 0.009 & - & - & - & 0.017 $\pm $ 0.016 \\
 & Kernel Pyramid Match MDS & 0.588 $\pm $ 0.051 & -0.009 $\pm $ 0.012 & 7.657 $\pm $ 3.929 & 7.242 $\pm $ 1.145 & 0.065 $\pm $ 0.038 & 0.045 $\pm $ 0.032 \\
 & Kernel Neighbourhood Hash & 0.500 $\pm $ 0.062 & -0.032 $\pm $ 0.006 & - & - & - & 0.002 $\pm $ 0.008 \\
 & Kernel Neighbourhood Hash MDS & 0.421 $\pm $ 0.066 & -0.066 $\pm $ 0.008 & 20.290 $\pm $ 10.776 & 0.683 $\pm $ 0.411 & -0.007 $\pm $ 0.008 & -0.003 $\pm $ 0.007 \\
 \midrule
TopER & TopER Degree & 0.758 $\pm $ 0.091 & -0.064 $\pm $ 0.021 & 5.288 $\pm $ 1.902 & 31.191 $\pm $ 6.050 & 0.089 $\pm $ 0.019 & 0.089 $\pm $ 0.024 \\
 & TopER Closeness & 0.746 $\pm $ 0.066 & -0.093 $\pm $ 0.019 & 8.445 $\pm $ 7.621 & 25.482 $\pm $ 5.115 & 0.084 $\pm $ 0.020 & 0.076 $\pm $ 0.020 \\
 & TopER All Filtrations & 0.700 $\pm $ 0.096 & -0.055 $\pm $ 0.020 & 3.255 $\pm $ 0.499 & 27.534 $\pm $ 5.504 & 0.101 $\pm $ 0.025 & 0.073 $\pm $ 0.028 \\
 & TopER Popularity & 0.696 $\pm $ 0.093 & -0.022 $\pm $ 0.037 & 1.344 $\pm $ 0.129 & 29.667 $\pm $ 7.188 & 0.113 $\pm $ 0.076 & 0.062 $\pm $ 0.019 \\
 & TopER Forricci & 0.646 $\pm $ 0.088 & -0.067 $\pm $ 0.022 & 2.950 $\pm $ 1.701 & 31.450 $\pm $ 5.224 & 0.112 $\pm $ 0.024 & 0.112 $\pm $ 0.027 \\
 & TopER Olricci & 0.592 $\pm $ 0.069 & -0.099 $\pm $ 0.015 & 10.200 $\pm $ 9.354 & 13.468 $\pm $ 3.562 & 0.059 $\pm $ 0.020 & 0.043 $\pm $ 0.017 \\
 \midrule
Persistent Homology & PH Vietoris Rips & 0.783 $\pm $ 0.067 & -0.010 $\pm $ 0.031 & 2.227 $\pm $ 0.175 & 16.156 $\pm $ 2.031 & 0.085 $\pm $ 0.028 & 0.053 $\pm $ 0.029 \\
 & PH Laplacian & 0.783 $\pm $ 0.074 & -0.132 $\pm $ 0.018 & 2.668 $\pm $ 0.247 & 11.701 $\pm $ 2.184 & 0.055 $\pm $ 0.021 & 0.034 $\pm $ 0.019 \\
 & PH Alpha Complex & 0.742 $\pm $ 0.103 & 0.014 $\pm $ 0.021 & 1.992 $\pm $ 0.182 & 43.715 $\pm $ 6.740 & 0.202 $\pm $ 0.034 & 0.160 $\pm $ 0.054 \\
 & PH Fricci & 0.742 $\pm $ 0.067 & -0.099 $\pm $ 0.020 & 2.498 $\pm $ 0.220 & 30.874 $\pm $ 4.917 & 0.090 $\pm $ 0.031 & 0.043 $\pm $ 0.029 \\
 & PH Degree & 0.725 $\pm $ 0.062 & -0.160 $\pm $ 0.019 & 3.016 $\pm $ 0.353 & 8.580 $\pm $ 1.651 & 0.040 $\pm $ 0.020 & 0.018 $\pm $ 0.014 \\
 & PH Olricci & 0.696 $\pm $ 0.085 & -0.125 $\pm $ 0.023 & 2.945 $\pm $ 0.473 & 34.161 $\pm $ 5.567 & 0.092 $\pm $ 0.031 & 0.060 $\pm $ 0.023 \\
\bottomrule
\end{tabular}
    }
    \label{tab:simulations results ER RPG SBM RG}
\end{table}


\subsubsection{Distinguishing parameter choices}
Next, we evaluate how well our methods can distinguish between different parameters used to generate graphs. We look at one random graph model at a time and vary the parameter choices within the model. We again generate $3$ graphs per class for every node size $n \in \{10, \dots ,29\}$ to understand how well the diversity curves can distinguish between these more subtle changes in the distribution while being robust to variations of the sizes of the graphs. We use the same graph models as in \Cref{app:simulated_graphs} and vary the parameters as follows:
\begin{itemize}
    \item Erd\H{o}s-R\'{e}nyi (ER) $G_{n,p}$ where we vary the edge probability $p \in \{0.1, 0.5, 0.9\}$;
    \item Random partition (RP) graphs where we vary the edge probabilities $(p_{in}, p_{out}) \in \{(0.9, 0.01), (0.9, 0.1), (0.9, 0.5)\}$ and fix the number of communities to be $3$;
    \item Random geometric (RG) graphs where we vary the radius $r \in \{0.25, 0.5, 0.75\}$ determining which pairs of nodes are connected by an edge;
    \item Stochastic block model (SBM), where we vary the number of communities between $2$, $3$, and $4$ roughly equally sized communities. The edge probabilities $p_{in} = 0.6$ and $p_{out} = 0.01$ are fixed.
\end{itemize}

For each random graph model, we compute the classification accuracies on the test set and clustering scores as described in \Cref{app:simulated_graphs} for the same set of baseline methods. The full results can be found in Tables \ref{tab:simulation results ER}, \ref{tab:simulations results RPG}, \ref{tab:simulation results RG}, and \ref{tab:simulation results SBM}. 
Furthermore, we show the PCA plot of the diversity curves for each distribution in Figure \ref{fig:sim_PCA_var_param}, which shows that diversity curves are able to cluster the graphs belonging to the same parameter setting clearly. 

\begin{table}[tbh]
    \centering
    
    \caption{Accuracies and clustering scores for distinguishing parameter choices in  ER graphs
    }
    \label{tab:simulation results ER}
    \resizebox{\textwidth}{!}{
\begin{tabular}{llllllll}
\toprule
 &  & Accuracy & Silhouette Score & Davies-Bouldin & Calinski-Harabasz & ARI agglomerative & ARI k-means \\
 & Method &  &  &  &  &  &  \\
\midrule
Diversity Curves & Shortest Path & \textbf{1.000 $\pm $ 0.000} & 0.679 $\pm $ 0.021 & 0.346 $\pm $ 0.018 & 272.892 $\pm $ 54.881 & 0.486 $\pm $ 0.053 & 0.444 $\pm $ 0.056 \\
 & Shortest Path PCA & \textbf{1.000 $\pm $ 0.000} & 0.693 $\pm $ 0.021 & \textbf{0.328 $\pm $ 0.019} & 277.053 $\pm $ 56.295 & 0.485 $\pm $ 0.052 & 0.445 $\pm $ 0.056 \\
 \midrule
Diversity Function & Spread Function Shortest Path & 0.956 $\pm $ 0.069 & 0.133 $\pm $ 0.019 & 2.209 $\pm $ 0.327 & 29.456 $\pm $ 5.114 & 0.198 $\pm $ 0.097 & 0.189 $\pm $ 0.067 \\
\midrule
Embeddings & FEATHER & \textbf{1.000 $\pm $ 0.000} & 0.574 $\pm $ 0.024 & 0.698 $\pm $ 0.053 & 186.237 $\pm $ 23.639 & 0.811 $\pm $ 0.134 & 0.800 $\pm $ 0.074 \\
 & Spectral Zoo & \textbf{1.000 $\pm $ 0.000} & 0.335 $\pm $ 0.027 & 1.039 $\pm $ 0.107 & 161.364 $\pm $ 26.514 & 0.453 $\pm $ 0.115 & 0.439 $\pm $ 0.023 \\
 & NetLSD & 0.972 $\pm $ 0.051 & 0.096 $\pm $ 0.027 & 2.044 $\pm $ 0.196 & 45.743 $\pm $ 8.983 & 0.076 $\pm $ 0.062 & 0.194 $\pm $ 0.062 \\
 & Graph2Vec & 0.589 $\pm $ 0.083 & -0.001 $\pm $ 0.002 & 7.240 $\pm $ 0.335 & 1.328 $\pm $ 0.107 & 0.001 $\pm $ 0.005 & 0.011 $\pm $ 0.024 \\
 \midrule
Kernel & Kernel Shortest Path & \textbf{1.000 $\pm $ 0.000} & \textbf{0.793 $\pm $ 0.035} & - & - & 0.857 $\pm $ 0.066 & - \\
 & Kernel Core Framework & \textbf{1.000 $\pm $ 0.000} & 0.644 $\pm $ 0.019 & - & - & 0.778 $\pm $ 0.235 & - \\
 & Kernel Shortest Path MDS & 0.994 $\pm $ 0.017 & 0.696 $\pm $ 0.034 & 0.482 $\pm $ 0.072 & 180.703 $\pm $ 47.049 & 0.758 $\pm $ 0.114 & 0.781 $\pm $ 0.064 \\
 & Kernel Core Framework MDS & 0.994 $\pm $ 0.017 & 0.670 $\pm $ 0.018 & 0.445 $\pm $ 0.026 & \textbf{356.006 $\pm $ 44.846} & \textbf{0.940 $\pm $ 0.093} & \textbf{0.942 $\pm $ 0.100} \\
 & Kernel WL MDS & 0.994 $\pm $ 0.017 & 0.557 $\pm $ 0.032 & 0.678 $\pm $ 0.098 & 97.450 $\pm $ 22.580 & 0.477 $\pm $ 0.044 & 0.765 $\pm $ 0.144 \\
 & Kernel WL & 0.989 $\pm $ 0.022 & 0.513 $\pm $ 0.030 & - & - & 0.459 $\pm $ 0.060 & - \\
 & Kernel WL Optimal Assignment & 0.978 $\pm $ 0.037 & 0.091 $\pm $ 0.009 & - & - & 0.586 $\pm $ 0.080 & - \\
 & Kernel WL Optimal Assignment MDS & 0.917 $\pm $ 0.067 & 0.241 $\pm $ 0.030 & 1.973 $\pm $ 0.410 & 29.540 $\pm $ 4.918 & 0.478 $\pm $ 0.122 & 0.414 $\pm $ 0.136 \\
 & Kernel Pyramid Match & 0.811 $\pm $ 0.132 & 0.035 $\pm $ 0.011 & - & - & 0.017 $\pm $ 0.028 & - \\
 & Kernel Pyramid Match MDS & 0.756 $\pm $ 0.106 & 0.056 $\pm $ 0.015 & 11.737 $\pm $ 9.093 & 6.665 $\pm $ 1.152 & 0.050 $\pm $ 0.031 & 0.036 $\pm $ 0.045 \\
 & Kernel Neighbourhood Hash & 0.611 $\pm $ 0.122 & -0.007 $\pm $ 0.007 & - & - & 0.006 $\pm $ 0.014 & - \\
 & Kernel Neighbourhood Hash MDS & 0.556 $\pm $ 0.082 & -0.032 $\pm $ 0.010 & 14.111 $\pm $ 14.751 & 2.214 $\pm $ 1.022 & 0.012 $\pm $ 0.020 & 0.019 $\pm $ 0.025 \\
 \midrule
TopER & TopER Forricci & 0.906 $\pm $ 0.066 & 0.263 $\pm $ 0.044 & 1.196 $\pm $ 0.185 & 56.032 $\pm $ 10.377 & 0.180 $\pm $ 0.097 & 0.292 $\pm $ 0.077 \\
 & TopER Closeness & 0.894 $\pm $ 0.058 & 0.057 $\pm $ 0.037 & 2.578 $\pm $ 0.876 & 20.609 $\pm $ 4.163 & 0.101 $\pm $ 0.053 & 0.133 $\pm $ 0.047 \\
 & TopER All Filtrations & 0.894 $\pm $ 0.084 & 0.139 $\pm $ 0.037 & 2.062 $\pm $ 0.474 & 30.453 $\pm $ 5.249 & 0.145 $\pm $ 0.065 & 0.213 $\pm $ 0.067 \\
 & TopER Degree & 0.883 $\pm $ 0.063 & 0.061 $\pm $ 0.036 & 2.729 $\pm $ 0.930 & 20.703 $\pm $ 4.083 & 0.110 $\pm $ 0.059 & 0.143 $\pm $ 0.044 \\
 & TopER Olricci & 0.883 $\pm $ 0.072 & 0.162 $\pm $ 0.043 & 1.850 $\pm $ 0.481 & 34.409 $\pm $ 6.646 & 0.160 $\pm $ 0.064 & 0.202 $\pm $ 0.055 \\
 & TopER Popularity & 0.856 $\pm $ 0.094 & 0.178 $\pm $ 0.042 & 2.006 $\pm $ 0.608 & 35.827 $\pm $ 5.863 & 0.211 $\pm $ 0.105 & 0.259 $\pm $ 0.070 \\
 \midrule
Persistent Homology & PH Fricci & \textbf{1.000 $\pm $ 0.000} & 0.128 $\pm $ 0.045 & 1.227 $\pm $ 0.090 & 18.835 $\pm $ 3.434 & 0.057 $\pm $ 0.029 & 0.105 $\pm $ 0.063 \\
 & PH Vietoris Rips & \textbf{1.000 $\pm $ 0.000} & 0.213 $\pm $ 0.030 & 1.264 $\pm $ 0.075 & 29.377 $\pm $ 3.326 & 0.090 $\pm $ 0.061 & 0.139 $\pm $ 0.048 \\
 & PH Olricci & 0.989 $\pm $ 0.022 & 0.176 $\pm $ 0.049 & 0.919 $\pm $ 0.047 & 40.144 $\pm $ 6.448 & 0.081 $\pm $ 0.048 & 0.159 $\pm $ 0.046 \\
 & PH Laplacian & 0.900 $\pm $ 0.074 & 0.034 $\pm $ 0.045 & 2.152 $\pm $ 0.176 & 5.495 $\pm $ 1.112 & 0.021 $\pm $ 0.016 & 0.032 $\pm $ 0.020 \\
 & PH Degree & 0.883 $\pm $ 0.091 & 0.029 $\pm $ 0.047 & 2.237 $\pm $ 0.127 & 5.371 $\pm $ 0.788 & 0.011 $\pm $ 0.013 & 0.037 $\pm $ 0.028 \\
 & PH Alpha Complex & 0.767 $\pm $ 0.108 & 0.067 $\pm $ 0.029 & 2.016 $\pm $ 0.321 & 28.349 $\pm $ 6.228 & 0.124 $\pm $ 0.078 & 0.175 $\pm $ 0.048 \\
\bottomrule
\end{tabular}
    }
\end{table}
\begin{table}[tbh]
    \centering
    \caption{Accuracies and clustering scores for distinguishing parameter choices in RP graphs.
    }
    \resizebox{\textwidth}{!}{
\begin{tabular}{llllllll}
\toprule
 &  & Accuracy & Silhouette Score & Davies-Bouldin & Calinski-Harabasz & ARI agglomerative & ARI k-means \\
 & Method &  &  &  &  &  &  \\
\midrule
Diversity Curves & Shortest Path & \textbf{0.978 $\pm $ 0.037} & 0.576 $\pm $ 0.032 & 0.501 $\pm $ 0.038 & 285.485 $\pm $ 47.777 & 0.448 $\pm $ 0.062 & \textbf{0.674 $\pm $ 0.139} \\
 & Shortest Path PCA & \textbf{0.978 $\pm $ 0.037} & \textbf{0.593 $\pm $ 0.033} & \textbf{0.474 $\pm $ 0.041} & \textbf{295.427 $\pm $ 50.529} & 0.458 $\pm $ 0.088 & 0.673 $\pm $ 0.141 \\
 \midrule
Diversity Function & Spread Function Shortest Path & 0.928 $\pm $ 0.070 & 0.035 $\pm $ 0.017 & 4.557 $\pm $ 0.679 & 6.023 $\pm $ 1.646 & 0.030 $\pm $ 0.032 & 0.021 $\pm $ 0.025 \\
\midrule
Embeddings & NetLSD & 0.972 $\pm $ 0.037 & 0.145 $\pm $ 0.025 & 1.385 $\pm $ 0.088 & 32.110 $\pm $ 3.923 & 0.036 $\pm $ 0.056 & 0.382 $\pm $ 0.097 \\
 & Spectral Zoo & 0.906 $\pm $ 0.066 & 0.117 $\pm $ 0.036 & 1.678 $\pm $ 0.327 & 31.343 $\pm $ 6.921 & 0.065 $\pm $ 0.064 & 0.229 $\pm $ 0.056 \\
 & FEATHER & 0.861 $\pm $ 0.083 & 0.191 $\pm $ 0.029 & 1.502 $\pm $ 0.134 & 41.949 $\pm $ 7.816 & 0.175 $\pm $ 0.073 & 0.212 $\pm $ 0.051 \\
 & Graph2Vec & 0.622 $\pm $ 0.060 & 0.003 $\pm $ 0.002 & 6.308 $\pm $ 0.298 & 2.655 $\pm $ 0.291 & 0.024 $\pm $ 0.044 & 0.176 $\pm $ 0.067 \\
 \midrule
Kernel & Kernel WL MDS & 0.972 $\pm $ 0.037 & 0.340 $\pm $ 0.036 & 1.317 $\pm $ 0.149 & 22.752 $\pm $ 4.198 & 0.109 $\pm $ 0.119 & 0.349 $\pm $ 0.128 \\
 & Kernel Shortest Path & 0.967 $\pm $ 0.037 & 0.527 $\pm $ 0.042 & - & - & \textbf{0.510 $\pm $ 0.088} & - \\
 & Kernel Shortest Path MDS & 0.967 $\pm $ 0.027 & 0.434 $\pm $ 0.036 & 0.990 $\pm $ 0.075 & 40.063 $\pm $ 5.645 & 0.133 $\pm $ 0.106 & 0.469 $\pm $ 0.118 \\
 & Kernel Core Framework & 0.961 $\pm $ 0.050 & 0.345 $\pm $ 0.027 & - & - & 0.244 $\pm $ 0.104 & - \\
 & Kernel Core Framework MDS & 0.950 $\pm $ 0.072 & 0.268 $\pm $ 0.035 & 1.484 $\pm $ 0.233 & 36.632 $\pm $ 6.913 & 0.289 $\pm $ 0.129 & 0.424 $\pm $ 0.077 \\
 & Kernel WL & 0.939 $\pm $ 0.063 & 0.236 $\pm $ 0.033 & - & - & 0.016 $\pm $ 0.021 & - \\
 & Kernel WL Optimal Assignment & 0.889 $\pm $ 0.082 & 0.012 $\pm $ 0.004 & - & - & 0.171 $\pm $ 0.065 & - \\
 & Kernel Pyramid Match & 0.783 $\pm $ 0.098 & 0.038 $\pm $ 0.012 & - & - & 0.010 $\pm $ 0.022 & - \\
 & Kernel WL Optimal Assignment MDS & 0.783 $\pm $ 0.098 & 0.061 $\pm $ 0.022 & 8.316 $\pm $ 6.006 & 8.633 $\pm $ 2.437 & 0.201 $\pm $ 0.101 & 0.189 $\pm $ 0.081 \\
 & Kernel Pyramid Match MDS & 0.767 $\pm $ 0.121 & 0.026 $\pm $ 0.018 & 5.064 $\pm $ 0.996 & 5.336 $\pm $ 1.275 & 0.040 $\pm $ 0.027 & 0.040 $\pm $ 0.055 \\
 & Kernel Neighbourhood Hash & 0.667 $\pm $ 0.090 & -0.025 $\pm $ 0.007 & - & - & 0.007 $\pm $ 0.013 & - \\
 & Kernel Neighbourhood Hash MDS & 0.611 $\pm $ 0.086 & -0.050 $\pm $ 0.011 & 11.336 $\pm $ 6.653 & 1.569 $\pm $ 0.900 & 0.006 $\pm $ 0.018 & 0.020 $\pm $ 0.029 \\
 \midrule
TopER & TopER All Filtrations & 0.872 $\pm $ 0.093 & 0.026 $\pm $ 0.037 & 1.914 $\pm $ 0.364 & 26.117 $\pm $ 5.753 & 0.147 $\pm $ 0.042 & 0.139 $\pm $ 0.037 \\
 & TopER Closeness & 0.783 $\pm $ 0.135 & 0.023 $\pm $ 0.038 & 1.954 $\pm $ 0.446 & 23.042 $\pm $ 5.265 & 0.112 $\pm $ 0.050 & 0.141 $\pm $ 0.043 \\
 & TopER Popularity & 0.706 $\pm $ 0.138 & -0.025 $\pm $ 0.047 & 2.128 $\pm $ 0.522 & 35.234 $\pm $ 8.414 & 0.158 $\pm $ 0.037 & 0.151 $\pm $ 0.034 \\
 & TopER Degree & 0.678 $\pm $ 0.099 & -0.028 $\pm $ 0.031 & 2.310 $\pm $ 0.930 & 23.952 $\pm $ 5.574 & 0.085 $\pm $ 0.041 & 0.111 $\pm $ 0.037 \\
 & TopER Forricci & 0.672 $\pm $ 0.110 & -0.043 $\pm $ 0.022 & 3.328 $\pm $ 1.586 & 11.954 $\pm $ 4.146 & 0.058 $\pm $ 0.029 & 0.060 $\pm $ 0.028 \\
 & TopER Olricci & 0.572 $\pm $ 0.075 & -0.050 $\pm $ 0.022 & 3.725 $\pm $ 1.580 & 9.520 $\pm $ 3.548 & 0.035 $\pm $ 0.029 & 0.052 $\pm $ 0.028 \\
 \midrule
Persistent Homology & PH Alpha Complex & 0.894 $\pm $ 0.072 & 0.183 $\pm $ 0.031 & 1.313 $\pm $ 0.142 & 49.927 $\pm $ 8.714 & 0.200 $\pm $ 0.090 & 0.319 $\pm $ 0.079 \\
 & PH Vietoris Rips & 0.889 $\pm $ 0.079 & 0.083 $\pm $ 0.031 & 2.219 $\pm $ 0.219 & 15.271 $\pm $ 2.565 & 0.049 $\pm $ 0.040 & 0.077 $\pm $ 0.044 \\
 & PH Laplacian & 0.889 $\pm $ 0.102 & -0.092 $\pm $ 0.019 & 3.261 $\pm $ 0.337 & 11.802 $\pm $ 2.092 & 0.015 $\pm $ 0.015 & 0.073 $\pm $ 0.037 \\
 & PH Fricci & 0.828 $\pm $ 0.088 & -0.045 $\pm $ 0.022 & 2.239 $\pm $ 0.203 & 16.281 $\pm $ 3.202 & 0.035 $\pm $ 0.038 & 0.094 $\pm $ 0.035 \\
 & PH Degree & 0.822 $\pm $ 0.096 & -0.095 $\pm $ 0.019 & 3.074 $\pm $ 0.351 & 7.049 $\pm $ 1.181 & 0.008 $\pm $ 0.012 & 0.047 $\pm $ 0.036 \\
 & PH Olricci & 0.817 $\pm $ 0.096 & -0.014 $\pm $ 0.026 & 1.862 $\pm $ 0.140 & 23.389 $\pm $ 4.529 & 0.081 $\pm $ 0.039 & 0.105 $\pm $ 0.033 \\
\bottomrule
\end{tabular}
    }
    \label{tab:simulations results RPG}
\end{table}
\begin{table}[tbh]
    \centering
    \caption{Accuracies and clustering scores for distinguishing parameter choices of RG graphs.
    }
    \resizebox{\textwidth}{!}{
\begin{tabular}{llllllll}
\toprule
 &  & Accuracy & Silhouette Score & Davies-Bouldin & Calinski-Harabasz & ARI agglomerative & ARI k-means \\
 & Method &  &  &  &  &  &  \\
\midrule
Diversity Curves & Shortest Path & \textbf{0.967 $\pm $ 0.044} & 0.591 $\pm $ 0.032 & 0.483 $\pm $ 0.046 & 427.861 $\pm $ 97.530 & 0.486 $\pm $ 0.056 & 0.478 $\pm $ 0.129 \\
 & Shortest Path PCA & \textbf{0.967 $\pm $ 0.044} & \textbf{0.600 $\pm $ 0.033} & \textbf{0.468 $\pm $ 0.046} & \textbf{434.844 $\pm $ 100.391} & 0.486 $\pm $ 0.057 & 0.480 $\pm $ 0.134 \\
 \midrule
Diversity Function & Spread Function Shortest Path & 0.889 $\pm $ 0.066 & 0.093 $\pm $ 0.017 & 2.660 $\pm $ 0.426 & 20.325 $\pm $ 3.851 & 0.110 $\pm $ 0.059 & 0.126 $\pm $ 0.059 \\
\midrule
Embeddings & FEATHER & 0.939 $\pm $ 0.072 & 0.319 $\pm $ 0.033 & 1.173 $\pm $ 0.129 & 87.254 $\pm $ 13.845 & 0.456 $\pm $ 0.073 & 0.489 $\pm $ 0.063 \\
 & NetLSD & 0.922 $\pm $ 0.067 & 0.068 $\pm $ 0.025 & 2.388 $\pm $ 0.317 & 61.887 $\pm $ 12.320 & 0.138 $\pm $ 0.057 & 0.232 $\pm $ 0.071 \\
 & Spectral Zoo & 0.906 $\pm $ 0.111 & 0.183 $\pm $ 0.032 & 1.305 $\pm $ 0.187 & 77.332 $\pm $ 13.485 & 0.223 $\pm $ 0.145 & 0.365 $\pm $ 0.037 \\
 & Graph2Vec & 0.578 $\pm $ 0.062 & 0.000 $\pm $ 0.002 & 7.192 $\pm $ 0.355 & 1.483 $\pm $ 0.132 & 0.012 $\pm $ 0.029 & 0.041 $\pm $ 0.038 \\
 \midrule
Kernel & Kernel Core Framework MDS & 0.939 $\pm $ 0.076 & 0.373 $\pm $ 0.030 & 0.891 $\pm $ 0.063 & 99.746 $\pm $ 12.842 & 0.460 $\pm $ 0.151 & \textbf{0.574 $\pm $ 0.146} \\
 & Kernel Core Framework & 0.939 $\pm $ 0.107 & 0.386 $\pm $ 0.032 & - & - & 0.486 $\pm $ 0.139 & - \\
 & Kernel WL Optimal Assignment & 0.928 $\pm $ 0.056 & 0.053 $\pm $ 0.007 & - & - & 0.398 $\pm $ 0.115 & - \\
 & Kernel WL MDS & 0.911 $\pm $ 0.103 & 0.226 $\pm $ 0.037 & 3.921 $\pm $ 2.201 & 14.689 $\pm $ 4.101 & 0.025 $\pm $ 0.058 & 0.362 $\pm $ 0.130 \\
 & Kernel WL Optimal Assignment MDS & 0.906 $\pm $ 0.075 & 0.175 $\pm $ 0.026 & 2.739 $\pm $ 0.885 & 22.119 $\pm $ 3.766 & 0.358 $\pm $ 0.123 & 0.350 $\pm $ 0.119 \\
 & Kernel WL & 0.889 $\pm $ 0.129 & 0.185 $\pm $ 0.030 & - & - & 0.001 $\pm $ 0.005 & - \\
 & Kernel Shortest Path & 0.889 $\pm $ 0.127 & 0.428 $\pm $ 0.049 & - & - & \textbf{0.489 $\pm $ 0.080} & - \\
 & Kernel Shortest Path MDS & 0.861 $\pm $ 0.115 & 0.320 $\pm $ 0.043 & 1.768 $\pm $ 0.438 & 38.574 $\pm $ 7.686 & 0.321 $\pm $ 0.198 & 0.486 $\pm $ 0.055 \\
 & Kernel Pyramid Match & 0.811 $\pm $ 0.075 & 0.040 $\pm $ 0.010 & - & - & 0.028 $\pm $ 0.038 & - \\
 & Kernel Pyramid Match MDS & 0.672 $\pm $ 0.088 & 0.004 $\pm $ 0.014 & 9.754 $\pm $ 6.360 & 3.794 $\pm $ 1.051 & 0.041 $\pm $ 0.030 & 0.025 $\pm $ 0.037 \\
 & Kernel Neighbourhood Hash & 0.578 $\pm $ 0.083 & -0.014 $\pm $ 0.007 & - & - & 0.002 $\pm $ 0.008 & - \\
 & Kernel Neighbourhood Hash MDS & 0.500 $\pm $ 0.119 & -0.044 $\pm $ 0.007 & 16.075 $\pm $ 8.087 & 0.807 $\pm $ 0.521 & -0.004 $\pm $ 0.011 & -0.005 $\pm $ 0.010 \\
 \midrule
TopER & TopER Popularity & 0.833 $\pm $ 0.129 & 0.153 $\pm $ 0.046 & 2.013 $\pm $ 0.627 & 33.288 $\pm $ 6.509 & 0.189 $\pm $ 0.104 & 0.270 $\pm $ 0.055 \\
 & TopER Degree & 0.806 $\pm $ 0.083 & 0.048 $\pm $ 0.039 & 2.340 $\pm $ 0.874 & 21.922 $\pm $ 4.646 & 0.099 $\pm $ 0.039 & 0.151 $\pm $ 0.039 \\
 & TopER All Filtrations & 0.800 $\pm $ 0.100 & 0.094 $\pm $ 0.039 & 2.353 $\pm $ 0.745 & 25.906 $\pm $ 4.954 & 0.136 $\pm $ 0.075 & 0.212 $\pm $ 0.065 \\
 & TopER Closeness & 0.800 $\pm $ 0.097 & 0.058 $\pm $ 0.039 & 2.473 $\pm $ 1.004 & 22.667 $\pm $ 4.479 & 0.105 $\pm $ 0.048 & 0.166 $\pm $ 0.053 \\
 & TopER Forricci & 0.800 $\pm $ 0.051 & 0.119 $\pm $ 0.041 & 2.321 $\pm $ 0.838 & 32.456 $\pm $ 7.086 & 0.189 $\pm $ 0.078 & 0.235 $\pm $ 0.067 \\
 & TopER Olricci & 0.650 $\pm $ 0.119 & 0.029 $\pm $ 0.043 & 7.906 $\pm $ 5.057 & 15.313 $\pm $ 3.525 & 0.073 $\pm $ 0.045 & 0.118 $\pm $ 0.043 \\
 \midrule
Persistent Homology & PH Vietoris Rips & 0.939 $\pm $ 0.084 & 0.114 $\pm $ 0.036 & 1.849 $\pm $ 0.119 & 19.354 $\pm $ 2.626 & 0.044 $\pm $ 0.030 & 0.115 $\pm $ 0.040 \\
 & PH Olricci & 0.900 $\pm $ 0.074 & 0.138 $\pm $ 0.050 & 1.124 $\pm $ 0.075 & 28.997 $\pm $ 5.001 & 0.043 $\pm $ 0.036 & 0.146 $\pm $ 0.052 \\
 & PH Fricci & 0.872 $\pm $ 0.079 & 0.101 $\pm $ 0.047 & 1.568 $\pm $ 0.118 & 13.261 $\pm $ 3.103 & 0.008 $\pm $ 0.014 & 0.097 $\pm $ 0.057 \\
 & PH Laplacian & 0.811 $\pm $ 0.106 & 0.034 $\pm $ 0.047 & 2.030 $\pm $ 0.170 & 6.883 $\pm $ 1.267 & 0.010 $\pm $ 0.014 & 0.043 $\pm $ 0.029 \\
 & PH Alpha Complex & 0.778 $\pm $ 0.090 & 0.105 $\pm $ 0.033 & 1.723 $\pm $ 0.272 & 32.251 $\pm $ 6.547 & 0.130 $\pm $ 0.065 & 0.233 $\pm $ 0.078 \\
 & PH Degree & 0.739 $\pm $ 0.108 & 0.030 $\pm $ 0.046 & 2.535 $\pm $ 0.272 & 4.823 $\pm $ 0.962 & 0.012 $\pm $ 0.016 & 0.029 $\pm $ 0.024 \\
\bottomrule
\end{tabular}
    }
    \label{tab:simulation results RG}
\end{table}
\begin{table}[tbh]
    \centering
    \caption{Accuracies and clustering scores for distinguishing parameter choices in SBM graphs.
    }
    \resizebox{\textwidth}{!}{
\begin{tabular}{llllllll}
\toprule
 &  & Accuracy & Silhouette Score & Davies-Bouldin & Calinski-Harabasz & ARI agglomerative & ARI k-means \\
 & Method &  &  &  &  &  &  \\
\midrule
Diversity Curves & Shortest Path & \textbf{0.856 $\pm $ 0.079} & 0.288 $\pm $ 0.036 & 1.062 $\pm $ 0.115 & 98.923 $\pm $ 15.813 & 0.397 $\pm $ 0.064 & \textbf{0.427 $\pm $ 0.068} \\
 & Shortest Path PCA & 0.844 $\pm $ 0.092 & \textbf{0.296 $\pm $ 0.037} & \textbf{1.038 $\pm $ 0.113} & \textbf{101.116 $\pm $ 16.296} & \textbf{0.406 $\pm $ 0.067} & 0.426 $\pm $ 0.070 \\
 \midrule
Diversity Function & Spread Function Shortest Path & 0.794 $\pm $ 0.079 & -0.000 $\pm $ 0.014 & 5.739 $\pm $ 1.063 & 3.491 $\pm $ 1.383 & 0.007 $\pm $ 0.021 & 0.002 $\pm $ 0.015 \\
\midrule
Embeddings & NetLSD & 0.828 $\pm $ 0.080 & 0.112 $\pm $ 0.043 & 1.899 $\pm $ 0.432 & 20.365 $\pm $ 3.643 & 0.044 $\pm $ 0.027 & 0.095 $\pm $ 0.067 \\
 & FEATHER & 0.656 $\pm $ 0.116 & 0.041 $\pm $ 0.024 & 2.383 $\pm $ 0.462 & 26.831 $\pm $ 5.108 & 0.137 $\pm $ 0.059 & 0.156 $\pm $ 0.040 \\
 & Spectral Zoo & 0.633 $\pm $ 0.083 & 0.044 $\pm $ 0.035 & 2.676 $\pm $ 0.855 & 24.065 $\pm $ 4.737 & 0.104 $\pm $ 0.053 & 0.157 $\pm $ 0.043 \\
 & Graph2Vec & 0.622 $\pm $ 0.085 & 0.000 $\pm $ 0.002 & 6.338 $\pm $ 0.297 & 1.882 $\pm $ 0.215 & 0.017 $\pm $ 0.043 & 0.090 $\pm $ 0.042 \\
 \midrule
Kernel & Kernel WL Optimal Assignment & 0.800 $\pm $ 0.075 & 0.009 $\pm $ 0.005 & - & - & 0.145 $\pm $ 0.056 & - \\
 & Kernel Core Framework MDS & 0.700 $\pm $ 0.090 & 0.029 $\pm $ 0.018 & 3.843 $\pm $ 0.664 & 9.330 $\pm $ 2.056 & 0.037 $\pm $ 0.040 & 0.065 $\pm $ 0.035 \\
 & Kernel Shortest Path MDS & 0.694 $\pm $ 0.076 & -0.030 $\pm $ 0.013 & 25.418 $\pm $ 24.425 & 0.710 $\pm $ 0.570 & 0.038 $\pm $ 0.026 & 0.034 $\pm $ 0.026 \\
 & Kernel Core Framework & 0.694 $\pm $ 0.103 & 0.029 $\pm $ 0.021 & - & - & 0.033 $\pm $ 0.038 & - \\
 & Kernel Pyramid Match & 0.672 $\pm $ 0.091 & -0.009 $\pm $ 0.007 & - & - & -0.001 $\pm $ 0.014 & - \\
 & Kernel WL MDS & 0.672 $\pm $ 0.115 & -0.036 $\pm $ 0.015 & 25.410 $\pm $ 68.230 & 0.991 $\pm $ 0.586 & 0.002 $\pm $ 0.014 & 0.015 $\pm $ 0.019 \\
 & Kernel WL & 0.661 $\pm $ 0.088 & -0.017 $\pm $ 0.015 & - & - & 0.009 $\pm $ 0.016 & - \\
 & Kernel Shortest Path & 0.639 $\pm $ 0.080 & -0.062 $\pm $ 0.015 & - & - & 0.017 $\pm $ 0.024 & - \\
 & Kernel WL Optimal Assignment MDS & 0.639 $\pm $ 0.083 & -0.001 $\pm $ 0.017 & 4.799 $\pm $ 0.956 & 6.008 $\pm $ 1.697 & 0.101 $\pm $ 0.054 & 0.102 $\pm $ 0.046 \\
 & Kernel Pyramid Match MDS & 0.622 $\pm $ 0.092 & -0.009 $\pm $ 0.011 & 6.867 $\pm $ 2.211 & 3.509 $\pm $ 0.942 & 0.029 $\pm $ 0.030 & 0.023 $\pm $ 0.034 \\
 & Kernel Neighbourhood Hash & 0.539 $\pm $ 0.117 & -0.020 $\pm $ 0.005 & - & - & -0.002 $\pm $ 0.010 & - \\
 & Kernel Neighbourhood Hash MDS & 0.489 $\pm $ 0.065 & -0.043 $\pm $ 0.006 & 27.497 $\pm $ 47.287 & 0.601 $\pm $ 0.419 & -0.005 $\pm $ 0.009 & -0.006 $\pm $ 0.010 \\
 \midrule
TopER & TopER Degree & 0.700 $\pm $ 0.079 & -0.072 $\pm $ 0.016 & 5.790 $\pm $ 3.233 & 8.171 $\pm $ 3.373 & 0.027 $\pm $ 0.019 & 0.031 $\pm $ 0.022 \\
 & TopER Popularity & 0.661 $\pm $ 0.091 & -0.066 $\pm $ 0.023 & 4.008 $\pm $ 2.100 & 12.780 $\pm $ 3.919 & 0.050 $\pm $ 0.027 & 0.060 $\pm $ 0.030 \\
 & TopER Forricci & 0.622 $\pm $ 0.078 & -0.049 $\pm $ 0.022 & 3.615 $\pm $ 1.488 & 12.333 $\pm $ 4.063 & 0.042 $\pm $ 0.026 & 0.050 $\pm $ 0.027 \\
 & TopER All Filtrations & 0.600 $\pm $ 0.065 & -0.061 $\pm $ 0.019 & 4.282 $\pm $ 1.701 & 11.334 $\pm $ 3.697 & 0.050 $\pm $ 0.028 & 0.046 $\pm $ 0.027 \\
 & TopER Olricci & 0.594 $\pm $ 0.108 & -0.061 $\pm $ 0.019 & 4.774 $\pm $ 3.093 & 8.447 $\pm $ 3.225 & 0.035 $\pm $ 0.025 & 0.036 $\pm $ 0.025 \\
 & TopER Closeness & 0.589 $\pm $ 0.120 & -0.058 $\pm $ 0.028 & 4.114 $\pm $ 2.327 & 13.275 $\pm $ 4.404 & 0.053 $\pm $ 0.032 & 0.071 $\pm $ 0.031 \\
 \midrule
Persistent Homology & PH Degree & 0.717 $\pm $ 0.107 & -0.096 $\pm $ 0.020 & 3.451 $\pm $ 0.386 & 6.487 $\pm $ 1.265 & 0.005 $\pm $ 0.010 & 0.056 $\pm $ 0.039 \\
 & PH Olricci & 0.639 $\pm $ 0.115 & -0.094 $\pm $ 0.026 & 2.729 $\pm $ 0.484 & 12.478 $\pm $ 2.966 & 0.014 $\pm $ 0.016 & 0.081 $\pm $ 0.035 \\
 & PH Laplacian & 0.639 $\pm $ 0.143 & -0.096 $\pm $ 0.018 & 3.627 $\pm $ 0.355 & 6.482 $\pm $ 1.446 & 0.009 $\pm $ 0.013 & 0.051 $\pm $ 0.043 \\
 & PH Vietoris Rips & 0.611 $\pm $ 0.099 & -0.085 $\pm $ 0.030 & 4.222 $\pm $ 1.024 & 4.581 $\pm $ 1.386 & 0.010 $\pm $ 0.013 & 0.016 $\pm $ 0.019 \\
 & PH Alpha Complex & 0.606 $\pm $ 0.128 & -0.053 $\pm $ 0.032 & 4.534 $\pm $ 2.876 & 8.124 $\pm $ 3.283 & 0.025 $\pm $ 0.029 & 0.065 $\pm $ 0.032 \\
 & PH Fricci & 0.600 $\pm $ 0.124 & -0.097 $\pm $ 0.027 & 3.291 $\pm $ 0.536 & 8.157 $\pm $ 1.917 & 0.005 $\pm $ 0.011 & 0.053 $\pm $ 0.033 \\
\bottomrule
\end{tabular}
    }
    \label{tab:simulation results SBM}
\end{table}

\begin{figure}[tbh]
    \centering
    \includegraphics[width=0.45\linewidth]{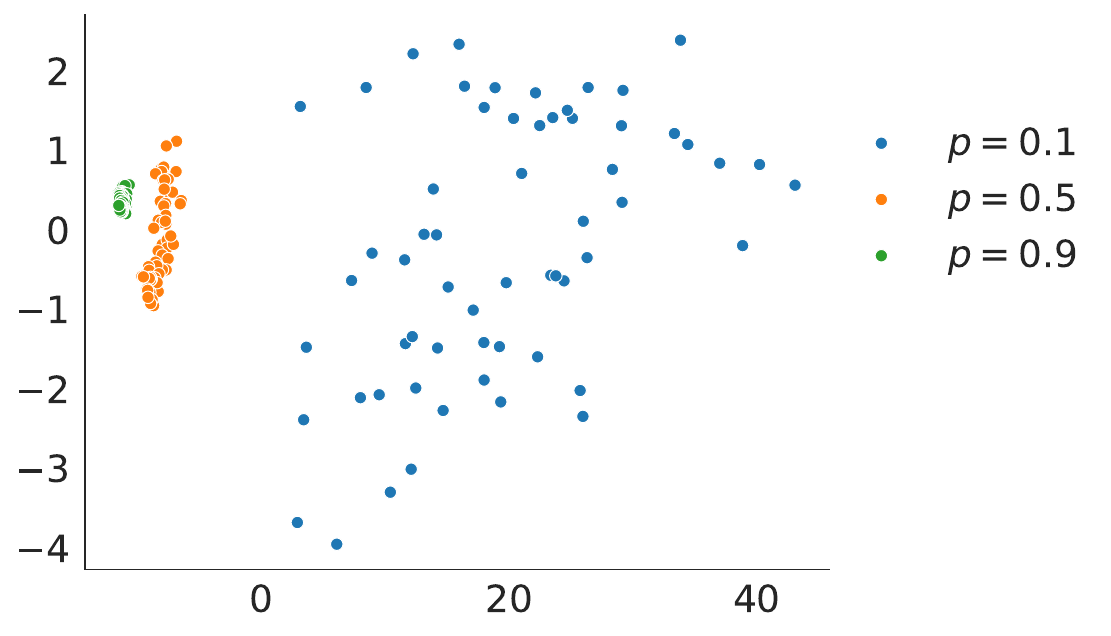}
    \includegraphics[width=0.54\linewidth]{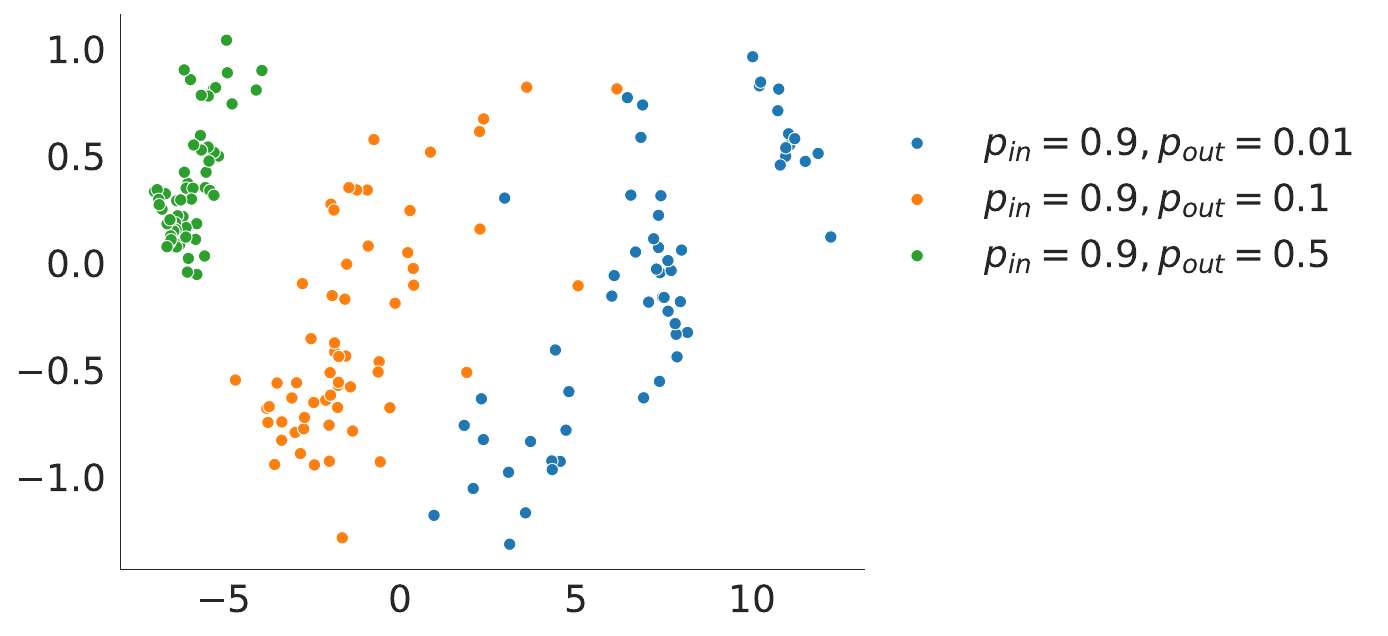}
    \includegraphics[width=0.47\linewidth]{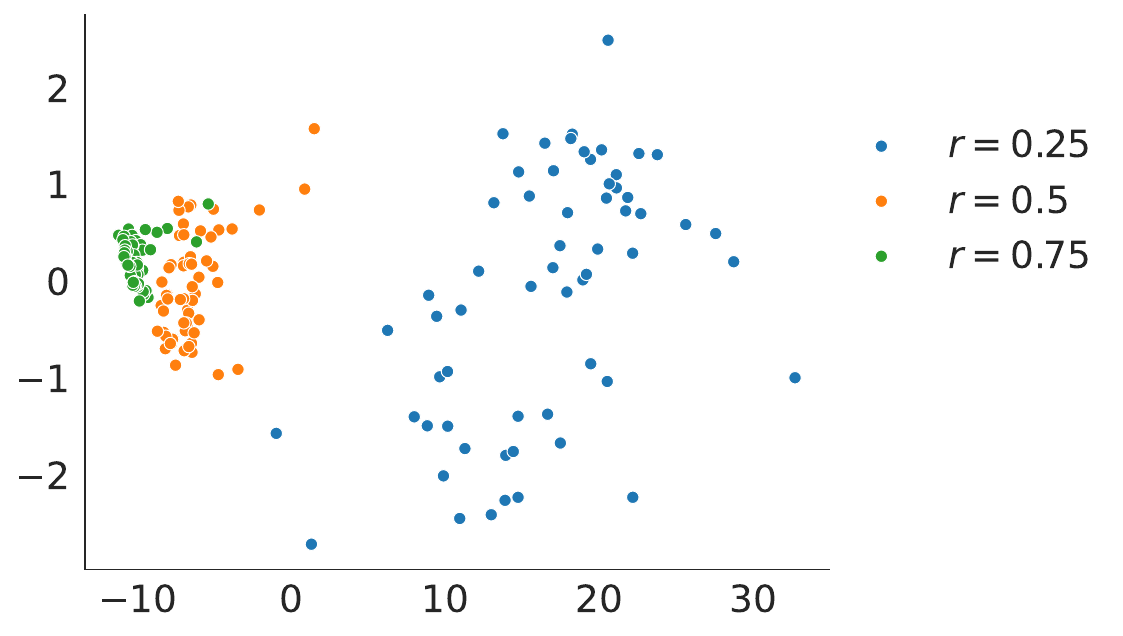}
    \includegraphics[width=0.52\linewidth]{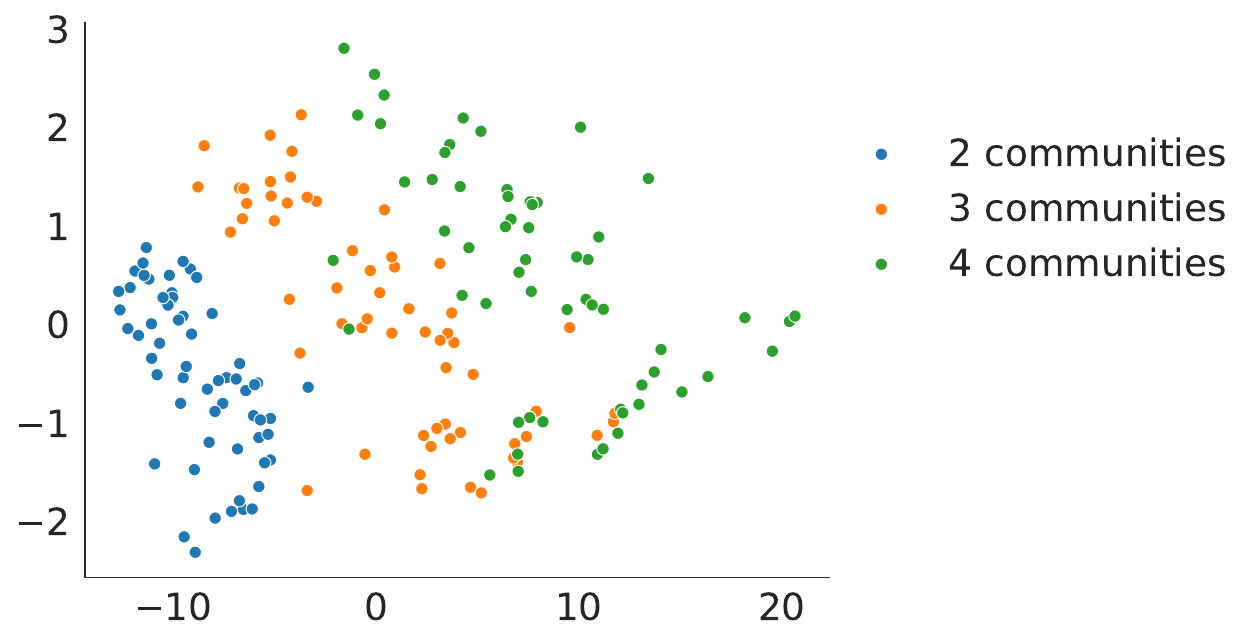}
    \caption{ PCA plots of the diversity curves based on shortest-path distance for ER, RP, RG, and SBM graphs.}
    \label{fig:sim_PCA_var_param}
\end{figure}
\clearpage

\newpage

\subsection{Distinguishing between trajectory or cluster structures in single-cell graphs}
\label{app:sc}
Single-cell RNA sequencing datasets are often characterised as neighbourhood graphs that connect individual cells based on their molecular similarity. These single-cell graphs can further be characterised as displaying clustering-like  or trajectory-like geometries \citep{lim2024quantifying}. Clustering-like graphs are dominated by clearly separated clusters of cells. Trajectory-like patterns are characterised by continuous transitions between cell states that form continuously connected trajectories on the single-cell graphs \citep{saelens2019comparison}. Knowing which type of geometry a graph displays directly gives actionably insights into which type of downstream analysis and visualisation methods are most appropriate for a given dataset. However, despite its practical relevance, the task of distinguishing between trajectory and clustering structures has not yet been solved. The most recent and to our knowledge only study on this task by \citet{lim2024quantifying} considers a collection of 169 single-cell datasets, out of which 85 show cluster-like structures and 84 show trajectory-like patterns. These datasets have been collected from a wide range of existing studies on evaluating trajectory inference methods \citep{saelens2019comparison}, evaluating clustering methods \citep{kim2019impact, krzak2019benchmark}, and across a list of further datasets \citep{lim2024quantifying, the2022tabula, kowalczyk2015single, nakamura2016developmental, nakamura2017single, han2018mapping, buettner2015computational, leng2015oscope, tian2019benchmarking, frazer2017transcriptomic, hochgerner2018conserved, treutlein2014reconstructing, joost2016single, treutlein2016dissecting, guo2015transcriptome, li2017single, yang2017single, petropoulos2016single, park2018single, saliba2016single, trapnell2014dynamics, loh2016mapping, hayashi2018single, burns2015single, engel2016innate, li2017classifying, marques2016oligodendrocyte, qiu2017deciphering, plass2018cell, shalek2014single, goolam2016heterogeneity, deng2014single, biase2014cell, zeisel2015cell, klein2015droplet, darmanis2015survey, tasic2016adult, tirosh2016dissecting, camp2015human, chu2016single, xin2016rna, li2017reference, gokce2016cellular, granit2018regulation, habib2016div, close2017single, villani2017single, olsson2016single}. 
Out of the final collection of datasets, we consider two versions: First, we consider all 169 datasets altogether. Second, we consider a subset of 27 trajectory-like datasets whose trajectory structure has been verified gold-standard through biological validation \citep{saelens2019comparison} as well as 60 cluster-like datasets, which have been annotated as "organs" or "clusters" by \citet{lim2024quantifying}. Overall, this leaves a subset of 87 graphs, whose annotations are presumed to be less ambiguous.


Previously, \citet{lim2024quantifying} reach a classification accuracy of around 57\% on the collection all 169 single-cell datasets as reported in their publication. Specifically, \citet{lim2024quantifying} used a set of 5 geometry-inspired scores to construct a landscape i.e. a shared representation of real single-cell datasets as well as simulated trajectory- or cluster-like datasets. They then use kNN-classification to predict the type of the real datasets based on the simulated examples, leading to a substantial proportion of graphs being misclassified. Given that \citet{lim2024quantifying} provide exact numbers on how many graphs per category were classified correctly or incorrectly in Figure 6.c of their manuscript, we use these numbers to simulate the performance of their Landscape approach across 5-fold stratified cross validation and report the estimated accuracies in \Cref{tab:sc_all_app}. Overall, we note that this reaches only about $57\%$ accuracy. We aim to improve upon these results in our own study by investigating this classification task across a broader range of baseline methods.




Following \citet{lim2024quantifying}, we download their dataset of processed single-cell datasets, and the set of scores they computed for each dataset, which have been made available with their codebase\footnote{Made available at \href{https://github.com/pqiu/Quantifying_clusterness_trajectoriness}{https://github.com/pqiu/Quantifying\_clusterness\_trajectoriness}  with the publication of their paper \citep{lim2024quantifying}.} and 
their publication. 
Preprocessing and quality control for the RNA sequencing datasets has been applied as described by \citet{saelens2019comparison}.   \citet{lim2024quantifying} further used PCA to reduce the dimensionality of each dataset to 20 principal components. Given the substantial size differences between datasets, ranging between around 19953 and 56 observations, we apply geometric sketching to subsample each dataset to at most 200 cells \citep{hie2019geometric}. 
Further, we convert each dataset into a graph by connecting each cell to its 10 closest neighbours.
\Cref{fig:sc_curves} then shows representative examples of four of these processed single-cell graphs, two trajectory-like graphs (A: cellbench-SC2\_luyitian, GSE118767, simulated trajectory-like human cells \citep{tian2019benchmarking, saelens2019comparison}; B: aging-hsc-young\_kowalczyk, GSE59114, hematopoietic stem cell of mice \citep{kowalczyk2015single, saelens2019comparison}), as well as two cluster-like graphs (
C: mouse-cell-atlas-combination-5, cells from a mouse tissue atlas \citep{han2018mapping, saelens2019comparison}; D: GSE82187, mouse brain cells \citep{gokce2016cellular, kim2019impact}). Across these examples, we see that cluster-like graphs have distinct communities, while trajectory-like graphs display continuous transitions between cells. Our aim is now to distinguish these two types of graph through their structure. 
We input these graphs into graph embedding models, or kernel methods. Further, we compute diversity curves with shortest-path distances, diffusion distances, or Euclidean distances between the scaled node features. The resulting mean diversity curves per group and their standard deviation are shown in \Cref{fig:sc_curves_app}. 
These diversity curves demonstrate that cluster-like single-cell graphs have on average higher structural diversity as measured by diversity curves, which use adjacency-based distances, such as shortest-path or diffusion distances. This is motivated by the fact that clusters of cells in these graphs are more distinct and more well separated structurally. These trends carry over to the PCA plot in \Cref{fig:sc_curves}, which is computed from the standard scaled diversity curves using shortest-path as well as feature-based distances. We further match each of the four example graphs A to D to their corresponding representation in the PCA visualisation. Overall, this demonstrates visually that diversity curves distinguish the geometry of single-cell graphs.

\begin{table}[t]
\centering
\caption{Model performance metrics (mean $\pm$ std) at distinguishing between trajectory- and cluster-like single-cell graphs using kNN-classification across 5-fold stratified cross validation. 
}
\resizebox{\textwidth}{!}{
\begin{tabular}{llllll}
\toprule
                 & Method                        & Accuracy - All Graphs & AUROC - All Graphs & Accuracy - Gold & AUROC - Gold  \\
                 \midrule
Diversity Curves & Diversity Curves All Metrics  & \textbf{0.766 ± 0.062}          & \textbf{0.810 ± 0.068}      & \textbf{0.860 ± 0.062}   & 0.824 ± 0.068 \\
                 & Shortest Path + Feature Dist. & 0.759 ± 0.077           & 0.808 ± 0.081      & \textbf{0.860 ± 0.077}    & 0.831 ± 0.081 \\
                 & Diffusion Dist.               & 0.743 ± 0.061         & 0.786 ± 0.065      & 0.807 ± 0.061   & 0.795 ± 0.065 \\
                 & Feature Dist.                     & 0.721 ± 0.075         & 0.759 ± 0.089      & 0.843 ± 0.075   & 0.809 ± 0.089 \\
                 & Shortest Path                 & 0.682 ± 0.095         & 0.723 ± 0.115      & 0.773 ± 0.095   & 0.757 ± 0.115 \\
                 \midrule
Embeddings       & NetLSD                        & 0.731 ± 0.047         & 0.802 ± 0.057      & 0.823 ± 0.047   & \textbf{0.848 ± 0.057} \\
                 & FEATHER                       & 0.715 ± 0.071         & 0.771 ± 0.085       & 0.830 ± 0.071   & 0.824 ± 0.085  \\
                 & Spectral Zoo                  & 0.675 ± 0.074         & 0.734 ± 0.095      & 0.821 ± 0.074   & 0.799 ± 0.095 \\
                 & Graph2Vec                     & 0.622 ± 0.074         & 0.648 ± 0.088       & 0.741 ± 0.074   & 0.790 ± 0.088  \\
                 \midrule
Scores \citep{lim2024quantifying}          & All Scores                    & 0.716 ± 0.053         & 0.760 ± 0.061      & 0.804 ± 0.053   & 0.797 ± 0.061 \\
                 & PH Score                      & 0.697 ± 0.058          & 0.747 ± 0.066      & 0.829 ± 0.058    & 0.830 ± 0.066 \\
                 & Reachability Score            & 0.632 ± 0.070         & 0.667 ± 0.087       & 0.727 ± 0.070   & 0.709 ± 0.087 \\
                 & Vector Norm Score             & 0.626 ± 0.074         & 0.671 ± 0.102      & 0.694 ± 0.074   & 0.668 ± 0.102 \\
                 & Diffusion Distance Score      & 0.520 ± 0.089         & 0.516 ± 0.100      & 0.611 ± 0.089    & 0.600 ± 0.100 \\
                 & Landscape                     & 0.570 ± 0.025         & 0.586 ± 0.037                   &         -        &   -            \\
                 & Ripley's K Score              & 0.532 ± 0.079         & 0.559 ± 0.088      & 0.598 ± 0.079   & 0.458 ± 0.088 \\
                 \midrule
Kernel           & Kernel Pyramid Match          & 0.685 ± 0.092         & 0.721 ± 0.081      & 0.784 ± 0.092   & 0.757 ± 0.081 \\
                 & Kernel WL                     & 0.652 ± 0.079         & 0.699 ± 0.099       & 0.785 ± 0.079   & 0.773 ± 0.099   \\
                 & Kernel Shortest Path          & 0.649 ± 0.085          & 0.690 ± 0.104      & 0.782 ± 0.085   & 0.770 ± 0.104 \\
                 & Kernel Core Framework         & 0.649 ± 0.085          & 0.690 ± 0.104      & 0.782 ± 0.085   & 0.770 ± 0.104 \\
                 & Kernel WL Optimal Assignment  & 0.600 ± 0.062         & 0.626 ± 0.083      & 0.679 ± 0.062   & 0.653 ± 0.083 \\
                 & Kernel Neighborhood Hash      & 0.568 ± 0.082         & 0.588 ± 0.083        & 0.704 ± 0.082   & 0.667 ± 0.083   \\
                 \midrule
TopER            & TopER All Filtrations         & 0.624 ± 0.086         & 0.669 ± 0.121      & 0.672 ± 0.086   & 0.649 ± 0.121 \\
                 & TopER Olricci                 & 0.586 ± 0.058         & 0.607 ± 0.073      & 0.649 ± 0.058   & 0.659 ± 0.073 \\
                 & TopER Forricci                & 0.581 ± 0.053           & 0.600 ± 0.064      & 0.670 ± 0.053   & 0.610 ± 0.064  \\
                 & TopER Popularity              & 0.516 ± 0.079         & 0.543 ± 0.090      & 0.662 ± 0.079   & 0.563 ± 0.090 \\
                 & TopER Closeness               & 0.505 ± 0.069         & 0.520 ± 0.076      & 0.624 ± 0.069   & 0.476 ± 0.076  \\
                 & TopER Degree                  & 0.503 ± 0.090         & 0.521 ± 0.089      & 0.630 ± 0.090   & 0.480 ± 0.089\\
                 \bottomrule
\end{tabular}}
\label{tab:sc_all_app}
\end{table}

\begin{figure}[t]
    \centering
    \includegraphics[width=0.32\textwidth]{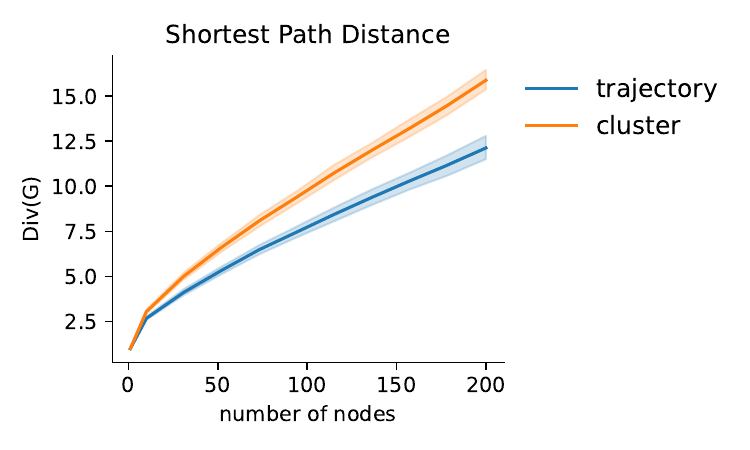}
    \includegraphics[width=0.32\textwidth]{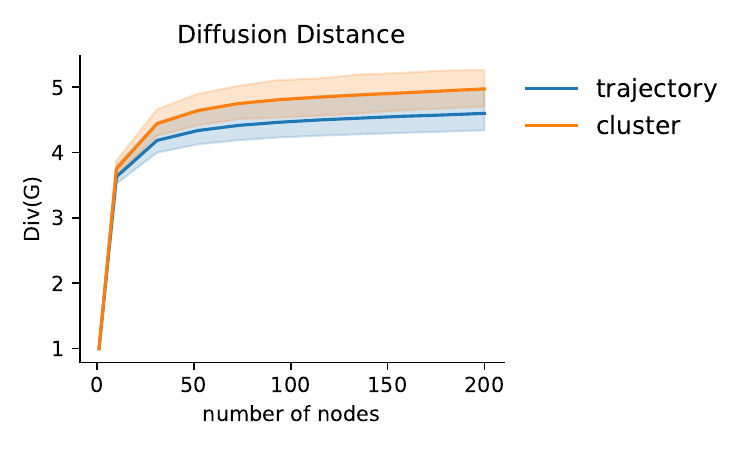}
    \includegraphics[width=0.32\textwidth]{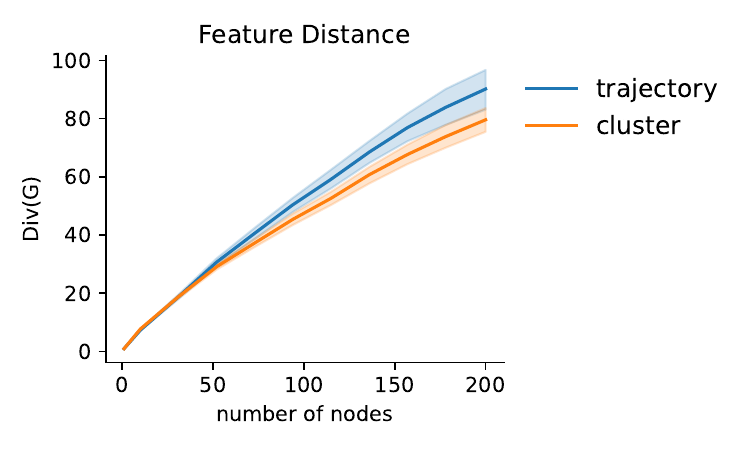}
    \caption{Mean diversity curves per trajectory type computed using shortest-path distances, diffusion distances, or 
    Euclidean distances based on standard scaled node features.
    }
    \label{fig:sc_curves_app}
\end{figure}

\Cref{tab:sc_all_app} reports kNN-classification results for each method across 5-times repeated 5-fold stratified cross-validation, where 64\% of the data is used for training, 16\% for validation, and 20\% for testing. The number of k nearest neighbours is tuned by grid search on the validation dataset in a range of $\{1, 3, 5, 7, 9\}$. We use a standard scaling transform on the diversity curves as this transformation on the training set helps highlight differences across graph sizes. Finally, \Cref{tab:sc_all_app} shows the performance of each method, including the geometry scores by \citet{lim2024quantifying}, in terms of both classification accuracies and AUROC scores.


\subsection{Characterising molecular graph datasets}


To describe structural similarities between enzyme classes, we download the \texttt{PROTEINS} and \texttt{ENZYMES} datasets \citep{borgwardt2005protein, dobson2003distinguishing}, which are available under the TUDataset \citep{morris2020tudataset} collection through PyTorch Geometric (available under an MIT License). 
We map the classes from the \texttt{PROTEINS} graphs to the labels "enzymes" or "non-enzymes" by checking with the original publications \citep{borgwardt2005protein, dobson2003distinguishing} and verifying the number of graphs in each class matches what is described by \citet{borgwardt2005protein}. Classes in \texttt{ENZYMES} are characterised by their Enzyme Commission (EC) number. We then compare the class of proteins that are not enzymes, to the classes of proteins that are enzymes across both datasets. To do so, we compute diversity curve using shortest-path distances on the graphs across node ranges form 1 to 34 nodes. The mean curves per class are visualised in \Cref{fig:datasets_app} and highlight that non-enzymes have notably lower structural diversity on average than enzyme graphs. To verify this, we implement the statistical test for equality in the mean diversity curves as describe in \Cref{sec:enzymes}. 
This case study demonstrates that diversity curves can be used for statistical testing, which highlights their utility for data-driven and uncertainty-aware analysis. Further, diversity curves directly allow for comparisons between differently sized graphs and graph distributions across varying cardinalities. Based on this, we believe there is promising potential for further applications to describe, explore, and compare the structural properties of graph datasets, both to describe 
the quality of graph datasets \citep{coupette2025no}, as well as towards evaluating structural similarities for transfer learning tasks \citep{huang2023learning}.

\begin{figure}[t]
    \centering
    \includegraphics[width=0.8\linewidth]{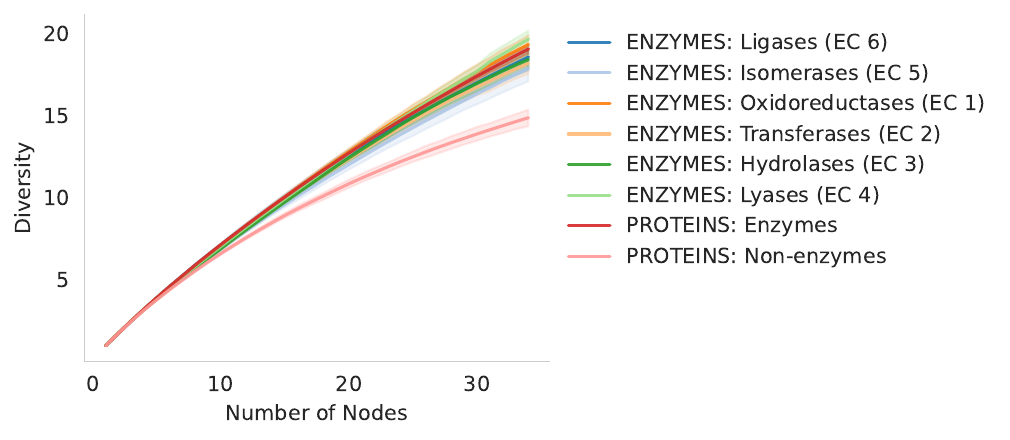}
    \caption{Mean diversity curves per class in the \texttt{PROTEINS} and \texttt{ENZYMES} datasets.}
    \label{fig:datasets_app}
\end{figure}

\subsection{Distinguishing geometric shapes}


To test whether our methods encode the inherent structure of data, we use the \texttt{MANTRA}\footnote{\href{https://github.com/aidos-lab/MANTRA}{https://github.com/aidos-lab/MANTRA} available under a BSD-3-Clause license.}
dataset \citep{ballester2024mantra}, which has been proposed to benchmark the capability of geometric and topological deep learning models to distinguish between combinatorial triangulations of manifolds. \Cref{tab:mantra_details} specifically shows the number of triangulations of 2-manifolds in the \texttt{MANTRA} dataset, out of which triangulations of a Klein bottle ($K$), a torus ($T^2$), a sphere ($S^2$), and the real projective plane ($\mathrm{RP}^2$) are associated with explicit homeomorphism types that characterise their shape. For our main experiments, we address the task of distinguishing between these four types.

We follow the experimental setup by \citet{ballester2024mantra} to evaluate each classification model across 5-fold stratified cross validation, taking 20\% of the data for testing, 16\% for validation, and 64\% for training. In this manner, we train Support Vector Machines with radial basis function kernels using default parameters as implemented in scikit-learn (version 1.8.0) unless mentioned otherwise. For hyperparameter tuning, we use randomised grid search to select the cost parameter C in $[0.01, 100]$ and gamma in $[0.0001, 100]$ with log-uniform probabilities across 20 random iterations and 3-fold cross validation on the validation set. We let our models predict probabilities to report the one-vs-rest multi-class AUROC score on the test set, and further report the test accuracies to compare with the results by \citet{ballester2024mantra}. 
All performance scores for the Cellular Transformer (CT) are cited directly from Table 11 in \citet{ballester2024mantra}, where this model is the best performing architecture at the intended task. 

To compute diversity curves, we use edge contraction for coarsening as described throughout our paper. But we also note that some triangulations in \texttt{MANTRA} only have very few nodes making direct comparisons with larger graphs difficult. Therefore, we also employ upsampling operations via upsampling each 3-clique in the graph, or each triangle in the triangulation as described in \Cref{app:coarsening}. Specifically, we uniformly select triangles making sure that each triangle is picked at least once before a previously upsampled triangle could be selected again, and subdivide the selected triangle by adding a central vertex and connecting it to all corners. This way, we uniformly subdivide the surface of the meshes ($\mathcal{T}$), or the 3-cliques in the graphs ($\mathit{G}$). 

We report three versions of our results:

$\mathit{G}$: First, to train on graphs, denoted by $\mathit{G}$, we convert each triangulation into 
their one-skeleton. Then we compute diversity curves based on shortest-path distances on these graphs, for cardinalities between 1 and 15, the maximum number of nodes in the graphs. 
Note that for this version of the task, tori and Klein bottles will have identical 1-skeleton graphs, and thus be indistinguishable by their graph structure alone \citep{ballester2024mantra}. Based on this observation, 
we choose to report the performance of an optimal or dummy classifier in \Cref{tab:mantra}, whose performance we simulate 
by assuming that all classes other than tori get classified correctly with probability $1$, while the triangulations of a torus  get confidently misclassified as Klein bottles again with probability $1$, which we use to simulate both the accuracy and the multi-class AUROC of this classifier baseline.

 $\mathcal{T}$: Second, we input the triangulations themselves into our model, denoted by $\mathcal{T}$, which include information on nodes, edges, and faces allowing us to integrate higher order features into the predictions. Specifically, we compute heat kernel distances between faces to compute diversity curves on these higher order structures for cardinalities between 3 and 15. We then concatenate these curves with the previously computed graph-based diversity curves, as inputs for our prediction models.

\begin{table}[t]
\centering
\caption{The number of triangulations of 2-manifolds per shape for all classes with more than 100 objects in the \texttt{MANTRA} dataset.}
\begin{tabular}{lrrrrrrrrr}
\toprule
Surface & $\#^4 \mathrm{RP}^2$ & $\#^3 \mathrm{RP}^2$ & $\#^5 \mathrm{RP}^2$ & Klein bottle & $T^2$ & $\mathrm{RP}^2$ & $\#^6 \mathrm{RP}^2$ & $\#^2 T^2$ & $S^2$ \\
\midrule
Number  & 13695 & 11917 & 7053 & 4657 & 2247 & 1365 & 1022 & 868 & 308 \\
\bottomrule
\end{tabular}
\label{tab:mantra_details}
\end{table}

\begin{table}[t]
\centering
\caption{Model performance (mean $\pm$ std) at distinguishing triangulations of 2-manifolds across stratified 5-fold CV.}

\begin{tabular}{llcc}
\toprule
Method & Version & Accuracy  & AUROC \\
\midrule
Diversity Curve  
& $\mathcal{T}$ & 1.000 $\pm$ 0.001  & 1.000 $\pm$ 0.000 \\
\midrule
Diversity Curve
& $\mathit{G}$ & 0.928 $\pm$ 0.001 & 0.975 $\pm$ 0.002 \\
 Optimal Classifier 
 & $\mathit{G}$ &  0.948 $\pm$ 0.000  & 0.941 $\pm$ 0.000\\
\midrule
Diversity Curve 
& $\mathcal{T}_{\text{up}}$  & 0.3882 $\pm$ 0.006  & 0.841 $\pm$ 0.014 \\
\bottomrule
\end{tabular}
\label{tab:mantra_v2}
\end{table}


 $\mathcal{T}_{\text{up}}$: Further, we apply barycentric subdivision to subdivide each mesh. As described by \citet{ballester2024mantra}, we then aim to address the task of training our models on the original dataset, but testing it on these subdivided meshes. Effectively, models that can solve this challenge have learned the intrinsic shape of each object while being robust to differences in scale and cardinality. This version of the task is denoted by  $\mathcal{T}_{\text{up}}$. We compute diversity curves using both shortest-path distances on the graphs, and higher order heat kernel distances on the faces. For computing these higher-order distances we set $t=20$ and $k=2$ to compute a metric between triangles. Further, we use triangular upsampling to compute diversity curves for node sizes from 1 to 15 with a step size of one, and 20 to 60 with a step size of 5, extending the evaluation scales to capture trends at higher cardinalities in a size-aware manner. The final results for each version of the task are reported in \Cref{tab:mantra}. Further, in this scenario we apply standard scaling to the test and the training set independently to specifically account for domain-shift between the original shapes and upsampled shapes as we expect their diversity curves to follow the same overall trends but across varying ranges.

Finally, we extend the three tasks above to a larger set of triangulations in \texttt{MANTRA}. Instead of just taking the four shapes with known homeomorphism types, we also consider all other classes of 2-manifolds in \texttt{MANTRA} that consist of more than 100 elements. Results for this extended version are reported in \Cref{tab:mantra_v2}.


\subsection{Expressivity}
\label{app:expressiv_experiment}

The \texttt{BREC}\footnote{\href{https://github.com/GraphPKU/BREC}{https://github.com/GraphPKU/BREC} available under an MIT licence.} dataset \citep{wang2023empirical}
consists of 400 pairs of graphs, which are used to test the expressivity of graph learning methods. All pairs of graphs are 1-WL indistinguishable, and 270 remain 3-WL indistinguishable \citep{wang2023empirical}. We first collect results that have been reported in existing literature and summarise how many pairs of graphs each method can distinguish. 

\citet{wang2023empirical} report a range of graph learning models, for which we report the three best performing GNNs, I2-GNN and GSN, as well as Graphormer as a notably worse-performing transformer baseline. 
Relevant to our use of edge contraction pooling for comparing graphs, further work by \citet{lachi2025expressive}
has investigated under which conditions graph pooling operators do not only maintain but also improve expressivity. Their results prove that edge-based pooling in particular increases expressivity. In contrast, node-based pooling operators that do not incorporate any information beyond WL into their pooling choices, such as DiffPool which clusters nodes by utilising MPNNs, cannot improve expressivity. In fact, results by \citet{lachi2025expressive} show DiffPool distinguishes no pair of graphs in \texttt{BREC} correctly, whereas EdgePool as well as their own edge-based pooling operator, EXpressive Pooling (XP), both improve expressivity on \texttt{BREC}. These results are summarised in \Cref{tab:brec} as cited from the original publications \citep{wang2023empirical, lachi2025expressive}.

Further, we report results by \citet{ballester2025expressivity} on the performance of alternative geometric methods, namely persistent homology (PH) at distinguishing graphs from \texttt{BREC}. Specifically, we report the performance of different filtrations when filtering through all 2-simplices in the graphs, which represent 3-cliques, and computing persistent homology up to dimension 2, which tracks the evolution of connected components, circles, and two-dimensional voids across the filtration. See \citet{ballester2025expressivity} for a more detailed description. 

Motivated by our theoretic results on the expressivity of diversity curves, we investigate the performance of our methods on \texttt{BREC} in practice. 
First, we check if spread itself can distinguish each pair of graphs by testing whether their difference in spread is above a given error tolerance  \citep{southern2023curvature}, which we set to $10^{-5}$. We try this for shortest-path, diffusion, and heat kernel distances on the graph, and note that the latter two metric choices can distinguish a larger set of graphs based on their structural diversity alone. We reason this is likely because both are computed from the normalised graph Laplacian, which can be more sensitive to subtle changes in graph structure, than shortest-path distances. Nevertheless, even structural diversity based on shortest-path distances can already distinguish 1-WL indistinguishable graphs.

\begin{table}[t]

\caption{Number of pairs from the \texttt{BREC} dataset distinguished via structural diversity or diversity curves. Compared to representative results from \citep{lachi2025expressive, wang2023empirical,southern2023curvature, ballester2025expressivity}.}
\centering
\resizebox{\textwidth}{!}{
\begin{tabular}{llccccc}
\toprule
 Method & Details & Basic (60) & Regular (140) & Extension (100) & CFI (100) & All (400) \\
\midrule
3 WL &(from \citep{wang2023empirical})  & 60 (100.0\%)  & 50 (35.7\%) & 100 (100.0\%)  & 60 (60.0\%) & 270 (67.5\%)\\
1 WL &(from \citep{southern2023curvature})  & 0 (0.0\%)  & 0 (0.0\%) & 0 (0.0\%)  & 0 (0.0\%) & 0 (0.0\%)\\
\midrule
I2-GNN &(from \citep{wang2023empirical})  & 60 (100.0\%)  & 100 (71.4\%) & 100 (100.0\%)  & 0 (0.0\%) & 281 (70.2\%)\\
KP-GNN &(from \citep{wang2023empirical})  & 60 (100.0\%)  & 106 (75.7\%) & 98 (98.0\%)  & 11 (11.0\%) & 275 (68.8\%)\\
GSN &(from \citep{wang2023empirical})  & 60 (100.0\%)  & 99 (70.7\%) & 95 (95.0\%)  & 0 (0.0\%) & 254 (63.5\%)\\
Graphormer &(from \citep{wang2023empirical})  & 16 (26.7\%)  & 12 (8.6\%) & 41 (41.0\%)  & 10 (10.0\%) & 79 (19.8\%)\\
\midrule
CurvPool &(from \citep{lachi2025expressive})  & 35 (58.3\%)  & 65 (46.4.0\%) & 53 (53.0\%)  & 4 (4.0\%) & 157 (39.3\%)\\
XP &(from \citep{lachi2025expressive})  & 48 (80.0\%)  & 3 (0.02\%) & 72 (72.0\%)  & 1 (1.0\%) & 124 (31.0\%)\\
EdgePool &(from \citep{lachi2025expressive})  & 13 (21.7\%)  & 0 (0.0\%) & 10 (0.1\%)  & 0 (0.0\%) & 23 (5.8\%)\\
DiffPool &(from \citep{lachi2025expressive})  & 0 (0.0\%)  & 0 (0.0\%) & 0 (0.0\%)  & 0 (0.0\%) & 0 (0.0\%)\\
\midrule
PH Laplacian & (from \citep{ballester2025expressivity}) & 60 (100\%) & 74 (53\%) & 100 (100\%) &6 (6\%) & 216 (54\%) \\
PH Ollivier–Ricci Curvature & (from \citep{ballester2025expressivity}) & 60 (100\%) & 76 (54\%) & 92 (92\%) & 3 (3\%) & 208 (52\%) \\
PH Forman–Ricci Curvature & (from \citep{ballester2025expressivity}) & 59 (98\%) & 70 (50\%) & 59 (59\%) & 3 (3\%) & 172 (43\%) \\
PH Degree Filtration & (from \citep{ballester2025expressivity}) & 47 (78\%) & 55 (39\%) & 26 (26\%) & 3 (3\%) & 116 (29\%) \\
PH Vietoris-Rips & (from \citep{ballester2025expressivity}) & 31 (52\%) & 55 (39\%) & 11 (11\%) & 3 (3\%) & 84 (21\%) \\
\midrule
NetLSD & \citep{tsitsulin2018netlsd,rozembereczki2020karateclub}& 60 (100.0\%)  & 50 (35.7\%) & 100 (100.0\%)  & 3 (3.0\%) & 213 (53.2\%)\\
FEATHER & \citep{rozemberczki2020characteristic,rozembereczki2020karateclub} & 52 (86.6\%)  & 46 (32.9\%) & 5 (5.0\%)  & 0 (0.0\%) & 103 (31.0\%)\\
\midrule
Div(G) & heat kernel distance & 60 (100.0\%) & 50 (35.7\%) &  92 (92.0\%) & 3 (3.0\%) & 205 (51.3\%)\\
Div(G) &diffusion distance & 60 (100.0\%) & 49 (35.0\%) &  84 (84.0\%)  & 3 (3.0\%) & 196 (49.0\%)\\
Div(G) &shortest path metric & 16 (26.7\%) & 13 (9.3\%) &  41 (41.0\%)& 6 (6.0\%)& 76 (19.0\%)\\ 
\midrule
DivCurve$(\mathcal{G}_{\{n-1, n\}})$ &heat kernel distance & 60 (100.0\%) & 65 (46.4\%) & 96 (96.0\%) &  3 (3.0\%) & 224 (51.3\%) \\ 
DivCurve$(\mathcal{G}_{\{n-1, n\}})$ &diffusion distance & 60 (100.0\%) & 58 (41.4\%) & 94 (94.0\%) & 3 (3.0\%) & 215 (53.8\%)\\ 
DivCurve$(\mathcal{G}_{\{n-1, n\}})$ &shortest path metric & 60 (100.0\%) & 48 (34.3\%) & 87 (87.0\%) & 8 (8.0\%) & 203 (50.8\%) \\ 

\bottomrule
\end{tabular}}
\label{tab:brec}
\end{table}

Next, we extend our experiment to investigate whether edge contraction pooling can improve expressivity. To do so, we check each pair of graphs that cannot be distinguished through their spread alone, and iterate through all possible choices for contracting one edge, and compute the average diversity curve at $n-1$ for each graph $|G|=n$. Based on this, we again check if the difference in diversity curves at $n-1$ is larger than the error tolerance $10^{-5}$. Across different choices of graph metric, we observe that edge contraction pooling enables diversity curves to distinguish more pairs of graphs than structural diversity alone. In particular, diversity curves based on shortest-path distances on the graph benefit from this coarsening approach. 

For each comparison, we further check a reliability set to ensure that within the chosen error tolerance, pairs of isomorphic graphs are not distinguished. This empirical check holds across all comparisons verifying the validity of our results. From a theoretical perspective, we know that isomorphic graphs have the same deck of edge contractions \citep{poirier2018reconstructing}, and thus have the same average diversity across all possible edge contraction choices as further described in \Cref{app:invariant}. Note that 
the expressivity results for diversity curves presented in \Cref{tab:brec} represent a best case scenario  where the sequences of coarsened graphs for which diversity is computed contain all possible edge contractions. This allows us to compare the difference in diversity curves exactly rather than requiring a more complex statistical approach to account for randomness in the graph embeddings, which necessitates more stringent error control and can be unreliable \citep{wang2023empirical}. 

For further comparison, we repeat the same procedure above on deterministic graph embedding methods, NetLSD and FEATHER, by computing the embedding vector for each graph and testing if the $L^{2}$-norm between embedding vectors is above the error threshold $10^{-5}$. 

Finally, \Cref{tab:brec} reports how well each of the baseline methods or our diversity curves can distinguish pairs of graphs from \texttt{BREC}. Overall, we find that diversity curves perform better or competitively to existing geometry-aware baseline methods, such as persistent homology, or graph embedding methods, such as NetLSD and FEATHER. In fact, results improve upon any previously reported performance of edge pooling layers, and are better than certain baselines reported for notably more parametrised models, such as Graphormer. Overall, we thus see diversity curves improve the expressivity of diversity through edge contraction pooling in practice, while ensuring competitive performance on \texttt{BREC}.



\subsection{Computational efficiency}
\label{app:empirical_costs}
\begin{figure}[b]
\centering
\includegraphics[width=0.7\textwidth]{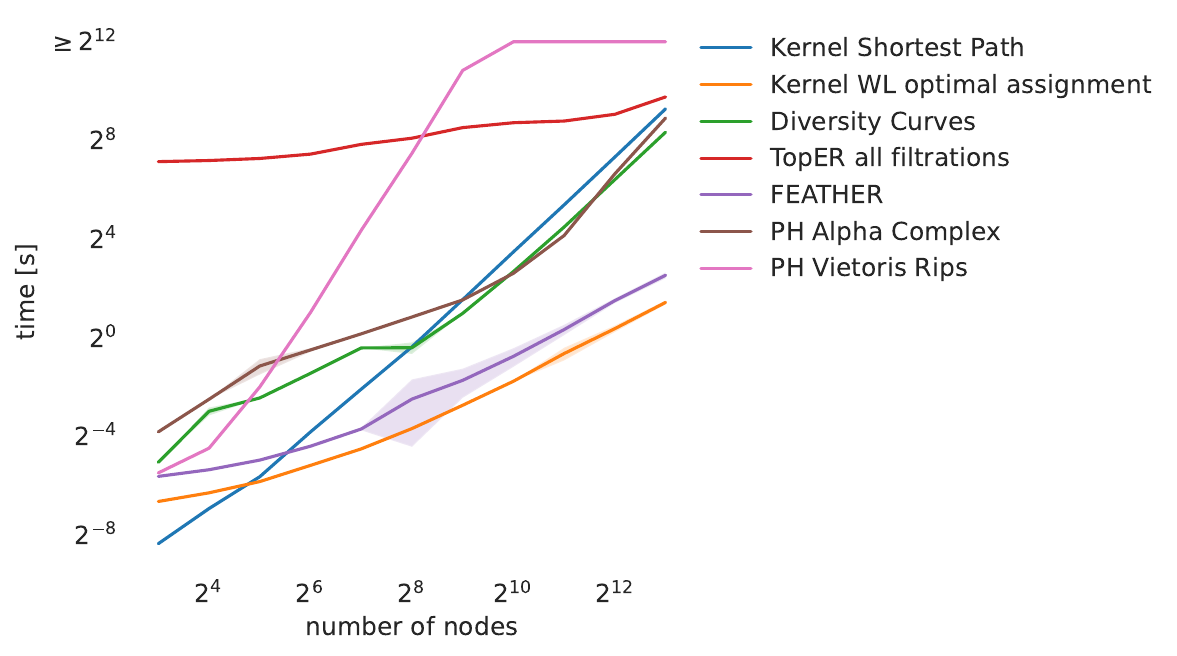}
\captionof{figure}{Empirical runtimes of computing the diversity curves and baselines on ER graphs with increasing number of node sizes in a log-log-plot.}
\label{fig:empirical_time}
\end{figure}
To investigate computational costs empirically , we compare the runtime of our method to relevant baseline methods. 
We generate $30$ Erd\H{o}s-R\'{e}nyi graphs $G_{n,p}$ for $n \in \{2^i|i\in \{3,\dots,13\}\}$ and $p=1/n$. For each method, we average the runtime over $10$ runs.
We choose to evaluate diversity curves at the cardinalities $\mathcal{I} = \{20\cdot i+10| 0 \leq i \leq 4\}$, hence the maximal cardinality at which we compute the diversity is $n_{max} = 90$.
The results can be seen in \Cref{fig:empirical_time}.  While certain variations of kernel methods and persistent homology are faster to compute than the diversity curves for this experimental setup, we note that we expect kernels to scale worse with a larger dataset size. Further, since our configuration of the diversity curves relies on the shortest-path distance in the graph, when comparing with other shortest-path based methods, such as persistent homology using the Vietoris-Rips filtration, we note that our method outperforms this baseline especially at larger graph sizes.
Hence, we conclude that diversity curves 
can still be reasonably efficiently computed for larger graphs, and are slightly faster than for example the shortest-path kernel or persistent homology using the alpha complex. To compute persistent homology for this experiment, we build a clique complex of the graph by adding all $2$-simplices, which correspond to $3$-cliques in the graph, and compute the persistent homology on the filtrations of the clique complex up to dimension $2$. This is the same setting that we used in the persistent homology baselines in the experiment from \Cref{sec:sim}.
The analysis of the empirical runtimes was performed on a compute node with $\times 2$ CPU \textsc{AMD EPYC 7763 64-Core Processor} and  1 TiB \textsc{DRAM}
 
\subsection{Ablation study}
\label{app:ablation}

We perform an ablation study to investigate the choice of our parameters, specifically the choice of graph distance to compute the spread and the choice of the edge scoring method. With the same experimental setup for distinguishing parameter choices in the ER, RP, RG, and SBM graphs as described in \Cref{app:simulated_graphs}, we compute the diversity curves for the spread with shortest-path distance, diffusion distance, and resistance distance. 
Furthermore, for each choice of distance we vary the edge scoring method between random edge scores, edge scores based on the difference in spread, and an approximation of the edge scores based on the spread which 
approximates the distance between nodes after edge contraction by updating the distances on the original graph by taking the minimum distance to the original nodes \citep{limbeck2025geometryaware}. 
We report all the accuracies and clustering metrics in \Cref{tab:ablation_ER}, \Cref{tab:ablationSBM}, \Cref{tab:ablationsRG}, and \Cref{tab:ablationsRPG}. We further visualize the different silhouette scores in \Cref{fig:ablation_silh}. We find that over our chosen random graph models, the shortest-path distance performs the best and the choice of edge scoring does not have any significant impact, which is why we decided to opt for random edge scoring in order to reduce the computational time complexity.

\begin{figure}[tbh]
\centering
\includegraphics[width=0.7\textwidth]{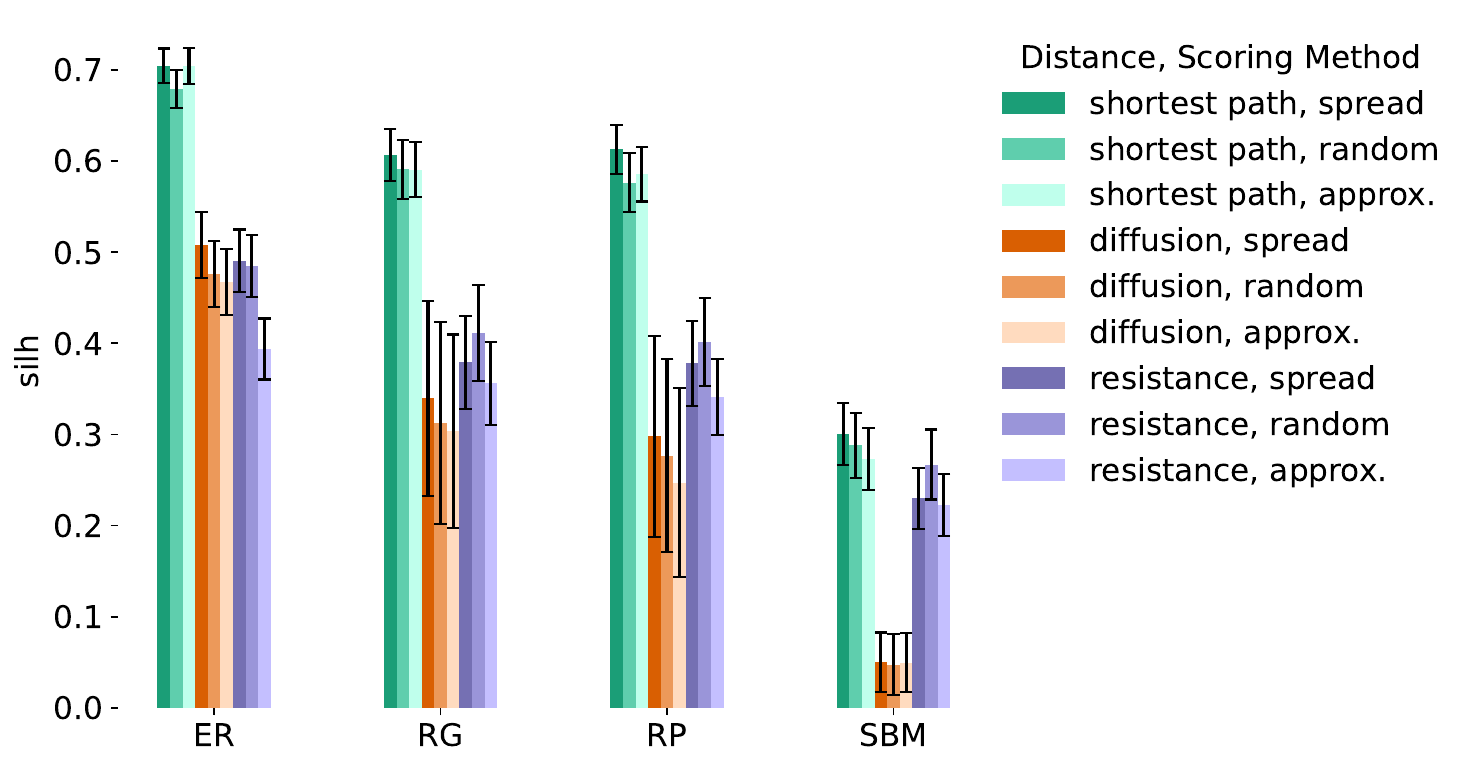}
\caption{Silhouette scores at distinguishing different parameter choices, varying the distance used to compute the spread and the edge scoring method.}
\label{fig:ablation_silh}
\end{figure}

\begin{table}[tbh]
    \centering
    \caption{Accuracies and clustering scores for distinguishing parameter choices for ER graphs.
    }
    \resizebox{\textwidth}{!}{
\begin{tabular}{llllllll}
\toprule
 &  & Accuracy & Silhouette Score & Davies-Bouldin & Calinski-Harabasz & ARI agglomerative & ARI k-means \\
Distance & Score &  &  &  &  &  &  \\
\midrule
shortest path & spread & \textbf{1.000 $\pm $ 0.000} & \textbf{0.704 $\pm $ 0.019} & 0.313 $\pm $ 0.017 & \textbf{288.211 $\pm $ 57.504} & \textbf{0.491 $\pm $ 0.048} & 0.442 $\pm $ 0.014 \\
 & random & \textbf{1.000 $\pm $ 0.000} & 0.679 $\pm $ 0.021 & 0.346 $\pm $ 0.018 & 272.892 $\pm $ 54.881 & 0.486 $\pm $ 0.053 & \textbf{0.444 $\pm $ 0.056} \\
 & approx. & \textbf{1.000 $\pm $ 0.000} & 0.704 $\pm $ 0.020 & \textbf{0.312 $\pm $ 0.019} & 282.266 $\pm $ 57.066 & 0.491 $\pm $ 0.047 & 0.441 $\pm $ 0.015 \\
diffusion & spread & \textbf{1.000 $\pm $ 0.000} & 0.508 $\pm $ 0.036 & 0.552 $\pm $ 0.056 & 69.180 $\pm $ 15.427 & 0.190 $\pm $ 0.091 & 0.271 $\pm $ 0.061 \\
 & random & \textbf{1.000 $\pm $ 0.000} & 0.476 $\pm $ 0.036 & 0.560 $\pm $ 0.052 & 67.561 $\pm $ 15.040 & 0.194 $\pm $ 0.102 & 0.270 $\pm $ 0.063 \\
 & approx. & \textbf{1.000 $\pm $ 0.000} & 0.467 $\pm $ 0.036 & 0.557 $\pm $ 0.049 & 69.157 $\pm $ 15.465 & 0.191 $\pm $ 0.094 & 0.271 $\pm $ 0.061 \\
resistance &  spread & 0.994 $\pm $ 0.017 & 0.491 $\pm $ 0.034 & 0.579 $\pm $ 0.036 & 163.684 $\pm $ 34.156 & 0.390 $\pm $ 0.068 & 0.375 $\pm $ 0.033 \\
 &  random & \textbf{1.000 $\pm $ 0.000} & 0.485 $\pm $ 0.034 & 0.569 $\pm $ 0.034 & 146.921 $\pm $ 30.415 & 0.376 $\pm $ 0.074 & 0.365 $\pm $ 0.034 \\
 &  approx. & \textbf{1.000 $\pm $ 0.000} & 0.394 $\pm $ 0.033 & 0.791 $\pm $ 0.053 & 158.064 $\pm $ 33.425 & 0.368 $\pm $ 0.069 & 0.361 $\pm $ 0.033 \\
\bottomrule
\end{tabular}
    }
    \label{tab:ablation_ER}
\end{table}
\begin{table}[tbh]
    \centering
    \caption{Accuracies and clustering scores for distinguishing parameter choices for RP graphs.
    }
    \resizebox{\textwidth}{!}{
\begin{tabular}{llllllll}
\toprule
 &  & Accuracy & Silhouette Score & Davies-Bouldin & Calinski-Harabasz & ARI agglomerative & ARI k-means \\
Distance & Score &  &  &  &  &  &  \\
\midrule
shortest path & spread & \textbf{0.978 $\pm $ 0.027} & \textbf{0.613 $\pm $ 0.027} & 0.505 $\pm $ 0.037 & \textbf{285.518 $\pm $ 36.276} & \textbf{0.480 $\pm $ 0.153} & \textbf{0.789 $\pm $ 0.188} \\
 & random & \textbf{0.978 $\pm $ 0.037} & 0.576 $\pm $ 0.032 & \textbf{0.501 $\pm $ 0.038} & 285.485 $\pm $ 47.777 & 0.448 $\pm $ 0.062 & 0.674 $\pm $ 0.139 \\
 & approx. & \textbf{0.978 $\pm $ 0.037} & 0.585 $\pm $ 0.030 & 0.532 $\pm $ 0.043 & 246.763 $\pm $ 34.457 & 0.433 $\pm $ 0.135 & 0.740 $\pm $ 0.187 \\
diffusion & spread & 0.961 $\pm $ 0.056 & 0.298 $\pm $ 0.110 & 0.829 $\pm $ 0.155 & 55.944 $\pm $ 13.154 & 0.218 $\pm $ 0.041 & 0.218 $\pm $ 0.041 \\
 & random & 0.967 $\pm $ 0.037 & 0.277 $\pm $ 0.106 & 0.846 $\pm $ 0.174 & 62.807 $\pm $ 15.787 & 0.218 $\pm $ 0.041 & 0.218 $\pm $ 0.041 \\
 & approx. & 0.972 $\pm $ 0.028 & 0.247 $\pm $ 0.104 & 0.884 $\pm $ 0.168 & 55.656 $\pm $ 13.187 & 0.218 $\pm $ 0.041 & 0.218 $\pm $ 0.041 \\
resistance &  spread & 0.972 $\pm $ 0.037 & 0.378 $\pm $ 0.047 & 0.777 $\pm $ 0.060 & 146.244 $\pm $ 27.256 & 0.346 $\pm $ 0.051 & 0.371 $\pm $ 0.045 \\
 &  random & 0.967 $\pm $ 0.044 & 0.402 $\pm $ 0.048 & 0.728 $\pm $ 0.058 & 181.881 $\pm $ 39.252 & 0.364 $\pm $ 0.062 & 0.384 $\pm $ 0.047 \\
 &  approx. & 0.956 $\pm $ 0.069 & 0.341 $\pm $ 0.042 & 0.847 $\pm $ 0.049 & 141.658 $\pm $ 28.112 & 0.346 $\pm $ 0.042 & 0.359 $\pm $ 0.053 \\
\bottomrule
\end{tabular}
    }
    \label{tab:ablationsRPG}
\end{table}
\begin{table}[p]
    \centering
    \caption{Accuracies and clustering scores for distinguishing parameter choices for RG graphs.
    }
    \resizebox{\textwidth}{!}{
\begin{tabular}{llllllll}
\toprule
 &  & Accuracy & Silhouette Score & Davies-Bouldin & Calinski-Harabasz & ARI agglomerative & ARI k-means \\
Distance & Score &  &  &  &  &  &  \\
\midrule
shortest path & spread & \textbf{0.983 $\pm $ 0.036} & \textbf{0.607 $\pm $ 0.028} & \textbf{0.445 $\pm $ 0.037} & 393.501 $\pm $ 80.612 & 0.485 $\pm $ 0.051 & \textbf{0.486 $\pm $ 0.120} \\
 & random & 0.967 $\pm $ 0.044 & 0.591 $\pm $ 0.032 & 0.483 $\pm $ 0.046 & \textbf{427.861 $\pm $ 97.530} & \textbf{0.486 $\pm $ 0.056} & 0.478 $\pm $ 0.129 \\
 & approx. & 0.972 $\pm $ 0.037 & 0.591 $\pm $ 0.030 & 0.472 $\pm $ 0.042 & 381.289 $\pm $ 83.355 & 0.478 $\pm $ 0.053 & 0.462 $\pm $ 0.114 \\
diffusion & spread & 0.956 $\pm $ 0.042 & 0.339 $\pm $ 0.107 & 0.784 $\pm $ 0.184 & 124.151 $\pm $ 26.770 & 0.292 $\pm $ 0.107 & 0.357 $\pm $ 0.036 \\
 & random & 0.950 $\pm $ 0.046 & 0.313 $\pm $ 0.110 & 0.845 $\pm $ 0.194 & 124.600 $\pm $ 27.846 & 0.287 $\pm $ 0.110 & 0.352 $\pm $ 0.037 \\
 & approx. & 0.950 $\pm $ 0.046 & 0.304 $\pm $ 0.106 & 0.863 $\pm $ 0.188 & 124.430 $\pm $ 26.946 & 0.292 $\pm $ 0.107 & 0.357 $\pm $ 0.035 \\
resistance &  spread & 0.950 $\pm $ 0.058 & 0.379 $\pm $ 0.051 & 0.798 $\pm $ 0.085 & 219.894 $\pm $ 57.742 & 0.439 $\pm $ 0.080 & 0.407 $\pm $ 0.029 \\
 &  random & 0.967 $\pm $ 0.044 & 0.411 $\pm $ 0.053 & 0.757 $\pm $ 0.087 & 254.085 $\pm $ 69.380 & 0.460 $\pm $ 0.068 & 0.412 $\pm $ 0.027 \\
 &  approx. & 0.944 $\pm $ 0.056 & 0.356 $\pm $ 0.046 & 0.840 $\pm $ 0.071 & 218.943 $\pm $ 57.164 & 0.438 $\pm $ 0.081 & 0.407 $\pm $ 0.028 \\
\bottomrule
\end{tabular}
    }
    \label{tab:ablationsRG}
\end{table}
\begin{table}[p]
    \centering
    
    \caption{Accuracies and clustering scores for distinguishing parameter choices of SBM graphs
    }
    \resizebox{\textwidth}{!}{
\begin{tabular}{llllllll}
\toprule
 &  & Accuracy & Silhouette Score & Davies-Bouldin & Calinski-Harabasz & ARI agglomerative & ARI k-means \\
Distance & Score &  &  &  &  &  &  \\
\midrule
shortest path & spread & \textbf{0.861 $\pm $ 0.062} & \textbf{0.301 $\pm $ 0.034} & 1.122 $\pm $ 0.130 & 94.305 $\pm $ 13.963 & 0.371 $\pm $ 0.117 & \textbf{0.432 $\pm $ 0.076} \\
 & random & 0.856 $\pm $ 0.079 & 0.288 $\pm $ 0.036 & 1.062 $\pm $ 0.115 & \textbf{98.923 $\pm $ 15.813} & \textbf{0.397 $\pm $ 0.064} & 0.427 $\pm $ 0.068 \\
 & approx. & 0.861 $\pm $ 0.062 & 0.273 $\pm $ 0.034 & 1.154 $\pm $ 0.130 & 87.747 $\pm $ 13.267 & 0.376 $\pm $ 0.108 & 0.421 $\pm $ 0.074 \\
diffusion & spread & 0.783 $\pm $ 0.072 & 0.050 $\pm $ 0.033 & 2.158 $\pm $ 0.446 & 25.955 $\pm $ 5.913 & 0.153 $\pm $ 0.058 & 0.136 $\pm $ 0.035 \\
 & random & 0.822 $\pm $ 0.054 & 0.048 $\pm $ 0.034 & 2.156 $\pm $ 0.457 & 25.918 $\pm $ 6.075 & 0.162 $\pm $ 0.047 & 0.160 $\pm $ 0.045 \\
 & approx. & 0.783 $\pm $ 0.072 & 0.050 $\pm $ 0.033 & 2.161 $\pm $ 0.444 & 25.843 $\pm $ 5.908 & 0.152 $\pm $ 0.056 & 0.136 $\pm $ 0.035 \\
resistance &  spread & 0.822 $\pm $ 0.060 & 0.230 $\pm $ 0.033 & 1.213 $\pm $ 0.128 & 70.056 $\pm $ 10.781 & 0.329 $\pm $ 0.095 & 0.374 $\pm $ 0.067 \\
 &  random & 0.817 $\pm $ 0.061 & 0.267 $\pm $ 0.038 & \textbf{1.059 $\pm $ 0.106} & 87.486 $\pm $ 15.035 & 0.361 $\pm $ 0.084 & 0.399 $\pm $ 0.078 \\
 &  approx. & 0.817 $\pm $ 0.050 & 0.223 $\pm $ 0.034 & 1.237 $\pm $ 0.128 & 66.452 $\pm $ 10.478 & 0.328 $\pm $ 0.094 & 0.364 $\pm $ 0.069 \\
\bottomrule
\end{tabular}
    } 
    \label{tab:ablationSBM}
\end{table}

\end{document}